\documentclass[oneside,11pt]{article}

\usepackage{graphicx}
\usepackage{braket,amsfonts}
\usepackage{amsmath,amssymb,amsthm}
\usepackage{abstract}
\usepackage{thmtools}
\usepackage{array}
\usepackage{multirow}
\usepackage{scalerel}
\usepackage{soul}
\usepackage[margin=1.2in]{geometry}
\usepackage[caption=false]{subfig}
\usepackage{amsopn}
\DeclareMathOperator*{\argmin}{arg\,min}
\usepackage{algorithm}
\usepackage{algpseudocode}
\usepackage{graphicx,epstopdf}
\usepackage[hidelinks]{hyperref}
\newcommand{\email}[1]{\protect\href{mailto:#1}{#1}}
\RequirePackage[capitalize,nameinlink]{cleveref}[0.19]

\newcommand{\crefassum}[1]{Universal Assumptions}

\newcommand{\crefobjdef}[1]{Problem Assumption}

\crefname{section}{section}{sections}
\crefname{subsection}{subsection}{subsections}
\Crefname{section}{Section}{Sections}
\Crefname{subsection}{Subsection}{Subsections}
\Crefname{figure}{Figure}{Figures}

\crefformat{equation}{\textup{#2(#1)#3}}
\crefrangeformat{equation}{\textup{#3(#1)#4--#5(#2)#6}}
\crefmultiformat{equation}{\textup{#2(#1)#3}}{ and \textup{#2(#1)#3}}
{, \textup{#2(#1)#3}}{, and \textup{#2(#1)#3}}
\crefrangemultiformat{equation}{\textup{#3(#1)#4--#5(#2)#6}}%
{ and \textup{#3(#1)#4--#5(#2)#6}}{, \textup{#3(#1)#4--#5(#2)#6}}{, and \textup{#3(#1)#4--#5(#2)#6}}

\Crefformat{equation}{#2Equation~\textup{(#1)}#3}
\Crefrangeformat{equation}{Equations~\textup{#3(#1)#4--#5(#2)#6}}
\Crefmultiformat{equation}{Equations~\textup{#2(#1)#3}}{ and \textup{#2(#1)#3}}
{, \textup{#2(#1)#3}}{, and \textup{#2(#1)#3}}
\Crefrangemultiformat{equation}{Equations~\textup{#3(#1)#4--#5(#2)#6}}%
{ and \textup{#3(#1)#4--#5(#2)#6}}{, \textup{#3(#1)#4--#5(#2)#6}}{, and \textup{#3(#1)#4--#5(#2)#6}}

\crefdefaultlabelformat{#2\textup{#1}#3}

\newtheorem{definition}{Definition}
\newtheorem{theorem}{Theorem}
\newtheorem{Assumption}{Assumption}
\newtheorem{Lemma}[theorem]{Lemma}
\newtheorem{Proposition}[theorem]{Proposition}
\newtheorem{Corollary}[theorem]{Corollary}




\newcommand{\Real}{\mathbb{R}}

\newcommand{\srh}{\mathrm{SR}_\varepsilon}
\newcommand{\ssrh}{\mathrm{signed}\text{-}\mathrm{SR}_\varepsilon}

\newcommand{\csr}{\mathrm{SR}}
\newcommand{\rn}{\mathrm{RN}}
\newcommand{\float}{\mathbb{F}}
\newcommand{\fl}{\mathrm{fl}}
\newcommand{\sign}{\mathrm{sign}}

\newcommand{\expt}{\mathrm{E}}

\newcommand{\bh}{\mathbf h}
\newcommand{\bx}{\mathbf x}
\newcommand{\by}{\mathbf y}
\newcommand{\bz}{\mathbf z}
\newcommand{\one}{\mathbf 1}
\newcommand{\bdelta}{\boldsymbol{\delta}}
\newcommand{\bsigma}{\boldsymbol{\sigma}}
\newcommand{\wh}{\widehat}
\newcommand{\ds}{\displaystyle}
\DeclareMathOperator*{\argmax}{arg\,max}

\title{On the influence of stochastic roundoff errors and their bias on the convergence of the gradient descent method with low-precision floating-point computation}

\author{Lu~Xia\thanks{Department of Mathematics and Computer Science, Eindhoven University of Technology, PO Box 513, 5600 MB Eindhoven, The Netherlands (\email{\{l.xia1, m.e.hochstenbach, b.koren\}@tue.nl}, ).}\and Stefano Massei\thanks{Department of Mathematics, Università di Pisa, 56127 Pisa, Italy (\email{stefano.massei@unipi.it}).}\and Michiel E. Hochstenbach\footnotemark[1]\and Barry~Koren\footnotemark[1] }

\date{}

\begin{document}
\maketitle
\begin{abstract}
When implementing the gradient descent method in low precision, the employment of stochastic rounding schemes helps to prevent stagnation of convergence caused by the vanishing gradient effect. Unbiased stochastic rounding yields zero bias by preserving small updates with probabilities proportional to their relative magnitudes. This study provides a theoretical explanation for the stagnation of the gradient descent method in low-precision computation. Additionally, we propose two new stochastic rounding schemes that trade the zero bias property with a larger probability to preserve small gradients. Our methods yield a constant rounding bias that, on average, lies in a descent direction. For convex problems, we prove that the proposed rounding methods typically have a beneficial effect on the convergence rate of gradient descent. We validate our theoretical analysis by comparing the performances of various rounding schemes when optimizing a multinomial logistic regression model and when training a simple neural network with an 8-bit floating-point format. 
\end{abstract}

\section{Introduction}
\label{sec:intro}

Low-precision computations attract increasing attention as they allow to drastically minimize the use of computational resources \cite{chung2018serving,hickmann2020intel,jouppi2020domain}. Adopting a lower precision generally introduces larger roundoff errors. The magnification of roundoff errors may cause divergence of numerical methods. Hence, it is crucial to analyze the error propagation in algorithmic procedures and to investigate the effect of different rounding schemes \cite{higham2002accuracy}. 

In addition to classical deterministic rounding strategies, such as round down, round up, and round to the nearest ($\rn$), techniques that incorporate randomized procedures have been proposed. According to Croci et al.~\cite{croci2022stochastic}, in 1949, Huskey and Hartree \cite{huskey1949precision} introduced an unbiased stochastic rounding scheme in their pioneering work, that we call \emph{stochastic rounding} ($\csr$), to reduce the accumulated round-off errors in solving ordinary differential equations. In the last decade, applications and analysis of stochastic rounding techniques have emerged \cite{connolly2021stochastic,croci2022stochastic,davies2018loihi,mikaitis2021stochastic}. Our work is inspired by the recent paper of Gupta et al. \cite{gupta2015deep}, where it is empirically shown that, with 16-bit fixed-point representation, the training of neural networks (NNs) stagnates with $\rn$ while $\csr$ preserves a very similar performance to single-precision computation. This has motivated further investigations of the use of 
$\csr$ in training NNs with low-precision computations \cite{na2017chip,ortiz2018low,wang2018training}. Besides in machine learning, $\csr$ has been recently applied in climate modeling \cite{paxton2021climate} and in solving partial differential equations with low precision \cite{croci2020effects,hopkins2019stochastic}, and its implementation in hardware is also growing \cite{davies2018loihi,mikaitis2021stochastic,su2020towards}.

Many tasks related to machine learning, e.g., training NNs, linear regression and logistic regression, are carried out by means of the gradient descent method (GD). The latter is also widely employed in many other areas; see, e.g., \cite{liu2015fuzzy,petres2007path}. The convergence of GD in exact arithmetic is well understood; see, e.g., \cite{lee2016gradient,nesterov2003introductory,zou2020gradient}. In general, the convergence analysis of stochastic or inexact GD only addresses the errors in evaluating the gradient function \cite{bertsekas2000gradient,schmidt2011convergence}. The convergence of GD in training quantized NN is studied with respect to $\csr$, only considering the errors in storing the updating parameters in low precision \cite{li2017training}. A systematic roundoff error analysis that accounts for either deterministic or stochastic rounding errors throughout the whole updating procedure of GD is lacking. Additionally, the theoretical explanation for the stagnation of GD in low-precision computation is insufficient.

In this paper, we theoretically explore the role of stochastic rounding methods in preventing the stagnation of GD. We analyze the influence of floating-point roundoff errors on the convergence of GD with fixed stepsize for convex problems. Our analysis considers three types of roundoff errors obtained by the GD iteration: the errors obtained in evaluating the gradient, in computing the multiplication of the rounded gradient with the stepsize, and in determining the subtraction. 
We analyze these rounding errors for two scenarios. For Scenario 1, we consider the case that GD does not suffer from stagnation and is evaluated using stochastic rounding methods. In Scenario 2, we consider a special case in which GD suffers from stagnation with $\rn$ using limited-precision computation.  
To force the rounding bias in a descent direction, we propose two new stochastic rounding methods that we call \emph{$\varepsilon$-biased stochastic rounding} ($\srh$) and \emph{signed $\varepsilon$-biased stochastic rounding} ($\ssrh$). The proposed biased stochastic rounding methods have been proven to eliminate the stagnation of GD and provide a significantly faster convergence than $\csr$ in low-precision floating-point computation.
The novelties of this work are as follows
\begin{enumerate}
    \item The utilization of $\csr$ has been for the first-time theoretically proven to help prevent stagnation of GD in low-precision computation;
    \item the rounding bias is for the first-time applied to accelerate the convergence of GD;
    \item two novel biased stochastic rounding methods are proposed.
\end{enumerate}

The outcomes of our convergence analysis concern two aspects of GD: monotonicity and convergence rate. A summary of the convergence analysis with respect to different scenarios and steps is given in \cref{tab:summarylayout}. 
 We validate these theoretical results with experiments on quadratic functions, training a multinomial logistic regression model (MLR) and a two-layer NN (a non-convex problem). The results confirm that, with the same precision, both $\srh$ and $\ssrh$  generally provide faster convergence than $\rn$ and $\csr$.
\begin{table}[htb!]
{\begin{center}
\footnotesize
\caption{Summary of the main theoretical results.}\label{tab:summarylayout}
\begin{tabular}{lll}\cline{1-3}
\rule{0pt}{2.3ex}Convergence analysis & Rounding scheme & Reference\\\cline{1-3}
\rule{0pt}{2.3ex}Monotonicity & General rounding & \cref{thm:monotone}\\
Convergence rate & General rounding &\cref{lem:boundforfloating_point}\\
Convergence rate &$\csr$ \cref{eq:gd_floatp1} and $\csr$ \cref{eq:gd_floatp2} &\cref{thm:conv-exp_csr}\\ 
Convergence rate &$\srh$ \cref{eq:gd_floatp1} and $\csr$ \cref{eq:gd_floatp2} &\cref{coro:conv-exp_srh}\\
Monotonicity & $\csr$ \cref{eq:gd_floatp1} and $\csr$ \cref{eq:gd_underflowSRB} & \cref{the:ucsr}\\
Monotonicity & $\csr$ \cref{eq:gd_floatp1} and $\ssrh$ \cref{eq:gd_underflowSRB} & \cref{thm:modifiedsrh}\\
\cline{1-3}
\end{tabular}
\end{center}
}
\end{table}

The work is organized as follows. In \cref{sec:sec2}, we recall the basic properties of floating-point arithmetic and $\csr$, and we introduce $\srh$ and $\ssrh$. The source of rounding errors when implementing GD with floating-point representation is analyzed in \cref{sec:GDwithfp}. In \cref{sec:convergence}, we study the influence of rounding bias on the convergence of GD for convex problems for the three scenarios, i.e., for deterministic roundoff errors, for stochastic roundoff errors, and for a special case when GD stagnates with $\rn$. Then, we validate our theoretical analysis with numerical experiments in \cref{sec:simulation}. Conclusions are drawn in \cref{sec:conclusion}.
\section{Number representation system and rounding schemes}\label{sec:sec2}
We start this section by recalling some basic properties of the floating-point arithmetic and with the definition of $\csr$. Then, we introduce the $\srh$ and $\ssrh$ schemes.
\subsection{Floating-point representation}\label{sec:flp} 
A floating-point system \cite{8766229} $\mathbb F\subset \Real$ is a proper subset of real numbers. A floating-point number $\wh x\in \mathbb F$ can be represented by radix $2$ (binary representation), significand precision $s$, and exponent $e$ \cite[Sec.~2.1]{higham2002accuracy}, as $\wh x = \pm\, \mu \cdot 2^{e-s},$ where $\mu, e$, and $s$ are integers satisfying $\mu \in [0, 2^s-1]$, and $e\in [e_{\min}, e_{\max}]$. We call \emph{rounding} any map that associates with $x\in\Real\backslash\float$ a certain $\wh x\in\float$. The unit roundoff $u:=2^{-s}$ is the maximum relative error caused by approximating a real number $x \in \Real\backslash\float^{\star}$ by $\wh x \in \float^{\star}$ using $\rn$, where $\float^{\star}=\float\backslash\{\text{subnormal numbers}\}$ \cite[Sec.~2.1]{higham2002accuracy}. 

A technical standard is the IEEE Standard for Floating-Point Arithmetic (IEEE 754) \cite{8766229}. According to IEEE 754, there are five basic formats for binary computation, i.e., {\sf binary16} (half precision), {\sf binary32} (single precision), {\sf binary64} (double precision), and two others. For a detailed description of floating-point number formats, see \cite[Sec.~3]{8766229} and \cite[Sec.~2.1]{higham2002accuracy}. Since we are primarily concerned with floating-point arithmetic, we refer to the roundoff error as representing the relative rounding error. In the numerical experiments, we employ {\sf binary32}, {\sf bfloat16}, and {\sf binary8}. The format {\sf bfloat16} has 8 exponent bits and supports an 8-bit precision \cite{connolly2021stochastic} and {\sf binary8} has the same number format as the {\sf E5M2} format on NVIDIA H100 tensor core \cite{navidiah100}. A summary of the parameters for the number formats is given in \cref{table:numberformat}. For the convergence analysis, we focus on the default rounding mode used in IEEE 754 floating-point operations, i.e., $\rn$, and the stochastic rounding methods that we outline in this section.
\begin{table}[ht!]
{\footnotesize
\caption{Summary of the parameters of the number formats. $u$ is the unit roundoff, $x_{\min}$ is the smallest normalized positive number, and $x_{\max}$ is the largest finite number.}\label{table:numberformat}
\begin{center}
\begin{tabular}{llll}
 \cline{1-4} \rule{0pt}{2.3ex}%
 Format & $u$ & $x_{\min}$& $x_{\max}$\\ \cline{1-4}\rule{0pt}{2.5ex}%
{\sf binary8}  & $2^{-3}$& $6.10\cdot10^{-5}$& $5.73\cdot10^4$ \\
{\sf bfloat16} & $2^{-8}$ &$1.18\cdot10^{-38}$ &$3.39\cdot10^{38}$ \\
{\sf binary16} & $2^{-11}$& $6.10 \cdot 10^{-5}$ & $6.55 \cdot 10^4$\\
{\sf binary32} & $2^{-24}$ &$1.18 \cdot10^{-38}$ & $3.40 \cdot 10^{38}$\\
{\sf binary64} & $2^{-53}$& $2.22\cdot10^{-308}$ & $1.80 \cdot 10^{308}$\\\cline{1-4}
\end{tabular}
\end{center}
}\end{table}
\subsection{Stochastic rounding} \label{sec:rounding}
We denote by $\fl(\cdot)$ a general rounding operator that maps $x\in \Real$ into $\fl(x)\in \float$. When a specific rounding scheme is applied, $\fl(\cdot)$ will be replaced by the corresponding rounding operator. The most natural choice for a rounding operation is to opt for one of the two floating-point numbers that are adjacent to $x$. More precisely, rounding schemes choose $\fl(x)\in\{\lfloor x \rfloor,\lceil x \rceil\}$ where $\lfloor x \rfloor:=\max\{y\in \float: y \le x\}$ and $\lceil x \rceil:=\min\{y\in \float: y \ge x\}$. A stochastic rounding scheme chooses whether $\fl(x)=\lfloor x\rfloor$ or $\fl(x)=\lceil x\rceil$, according to a certain $x$-dependent probability. We write, for $x\ne 0$,
$\sigma(x):= \fl(x)-x$, and $\delta(x):=(\fl(x)-x)/x$,
the absolute and relative errors, respectively. An appropriate superscript will be added when the latter quantities refer to a specific rounding scheme. Now let us review the $\csr$ scheme.
\begin{definition}(Cf., e.g., \cite{connolly2021stochastic})
\label{def:stochasticrounding}
For $x\in \Real$, the rounded value $\csr(x)$ is defined as 
\begin{align*}
\csr(x)=\begin{cases}
\lfloor x \rfloor, \quad &\text{with probability}~p_{0}(x):=1-\tfrac{x-\lfloor x \rfloor}{\lceil x \rceil-\lfloor x \rfloor},\\
\lceil x \rceil, \quad &\text{with probability}~1-p_{0}(x)=\tfrac{x-\lfloor x \rfloor}{\lceil x \rceil-\lfloor x \rfloor}.
\end{cases}
\end{align*}
\end{definition} 
$\csr$ has rounding probability depending on its input $x$, in such a way that $0$ rounding bias is achieved, i.e., the expectation of this stochastic process satisfies $\expt\,[\,\sigma^{\scaleto{\csr}{4pt}}(x)\,]=0$, for all $x$. 
To preserve more information when dealing with small gradients, we propose to set a lower bound $\varepsilon<1$ to the probability of rounding away from $0$. More formally, we introduce a new stochastic rounding scheme, $\srh$, as follows.
\begin{definition} \label{def:srh}
Given $\varepsilon\in(0,1)$, we define the following functions 
\begin{align}\label{eq:epsilon}
&\eta(x,\varepsilon) := 1-\frac{x-\lfloor x \rfloor}{\lceil x \rceil-\lfloor x \rfloor}-\sign(x)\,\varepsilon,~ \quad
\quad \varphi(y)=\begin{cases}
0,&y\le 0,\\
y,&0\le y\le 1,\\
1,&y\ge 1.
\end{cases} \end{align}
Then define $p_\varepsilon(x):=\varphi(\eta(x, \varepsilon))$ and
\begin{align}
\srh(x)=\begin{cases}
\lfloor x\rfloor, \quad &\text{with probability}~p_\varepsilon(x),\\
\lceil x\rceil, \quad &\text{with probability}~1-p_\varepsilon(x).
\end{cases}
\label{eq:modified_csr}
\end{align} 
\end{definition}
For a fixed $x\in \Real$, $\srh(x)$ is a discrete random variable with sample space $\{\lfloor x\rfloor,\lceil x\rceil\}$. With a direct computation we get the following expression for the expected absolute rounding error: 
\begin{equation} \label{eq:csrh-experror}
\expt\,[\,\sigma^{\scaleto{\srh}{5.5pt}}(x)\,]=\begin{cases}
 \lfloor x\rfloor-x,&\eta(x,\varepsilon)>1,\\
 \sign(x) \, \varepsilon \, (\lceil x \rceil-\lfloor x \rfloor),&0\le \eta(x,\varepsilon) \le 1,\\
 \lceil x\rceil-x,&\eta(x,\varepsilon)<0.
\end{cases}
\end{equation}
\begin{figure*}[htb!]
\centering
\subfloat[$x>0$]{\label{fig:roundingdisa} \includegraphics[width=0.38\textwidth]{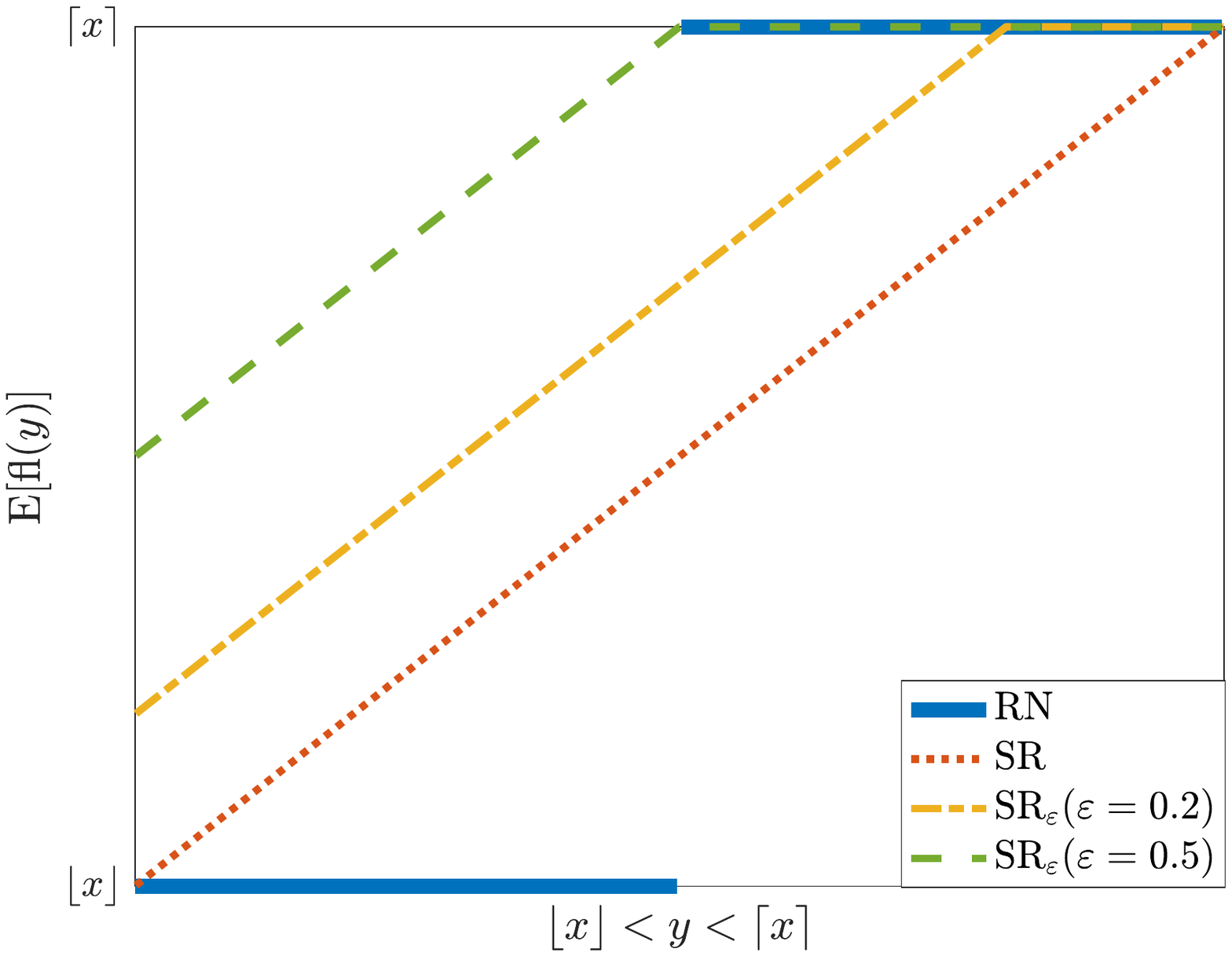}} \quad\quad
\subfloat[$x<0$ ]{\label{fig:roundingdisb} \includegraphics[width=0.38\textwidth]{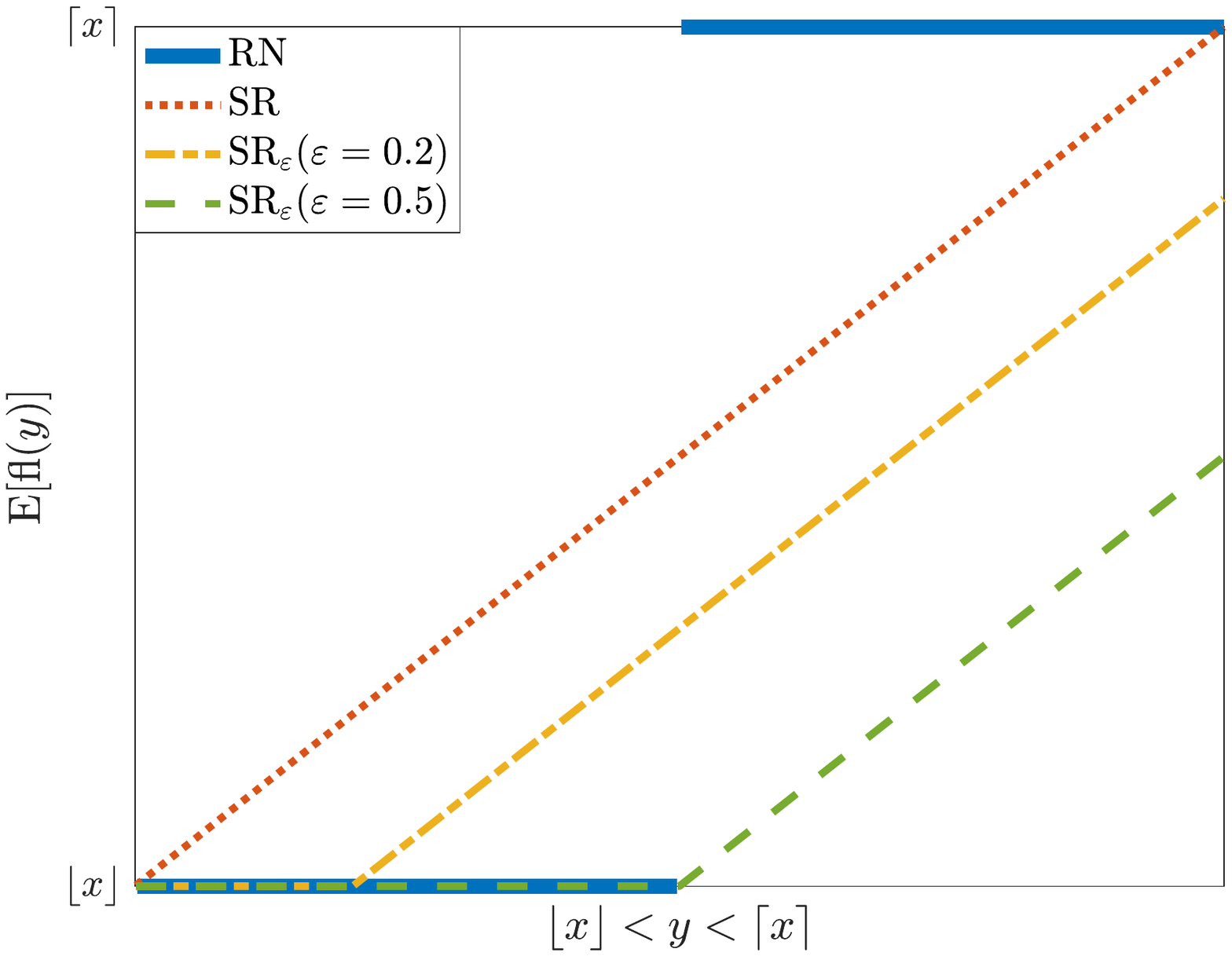}}
\caption{Comparison of $\expt\,[\,\mathrm{fl}(y)\,]$ for $y\in(\lfloor x \rfloor, \lceil x \rceil)$ (for a fixed $x$) using different rounding schemes for $x>0$ (a) and $x<0$ (b).}\label{fig:roundingdis}
\end{figure*}

\cref{fig:roundingdis} plots the value of $\expt\,[\,\fl(y)\,]$ for $y \in (\lfloor x \rfloor,\lceil x \rceil)$, using the rounding schemes introduced so far. It can be seen that when $x>0$, $\srh$ combines stochastic rounding and ceiling, while we have a combination of stochastic rounding and flooring when $x<0$. When $\varepsilon\le0.5$, the deterministic rounding part of $\srh$ behaves as $\rn$ (see \cref{fig:roundingdis} where $\varepsilon\leq 0.5$). In \cref{def:srh}, $\srh$ leads to a rounding bias with the same sign as its input. By introducing an additional variable $v\in \Real$ and with a minor modification of the function $\eta$ in \cref{def:srh}, we obtain a new stochastic rounding method, which we name $\ssrh$, with rounding bias in the opposite sign of $v$. In the context of GD with low-precision representation, we will use this method to get a constant rounding bias in a descent direction by substituting the corresponding entries of the gradient vector for $v$. 
In \cref{sec:directionofroundingerrors}, we will show how $\ssrh$ may be beneficial for implementing GD. The $\ssrh$ method is defined as follows.
\begin{definition} \label{def:signedsrh}
 Let $\sign(v)$ be a desired sign of rounding bias, $\varphi$ as introduced in \cref{def:srh}, and
$
\wh{\eta}(x,\varepsilon,v) := 1-\frac{x-\lfloor x \rfloor}{\lceil x \rceil-\lfloor x \rfloor}+\sign(v)\,\varepsilon.
$
We define $\wh{p}_{\varepsilon}(x):= \varphi(\wh{\eta}(x,\varepsilon,v))$ and (cf.~\cref{eq:modified_csr})
\[
\ssrh(x)=\begin{cases}
\lfloor x\rfloor, \quad &\text{with probability}~ \wh{p}_{\varepsilon}(x),\\
\lceil x\rceil, \quad &\text{with probability}~1- \wh{p}_{\varepsilon}(x).
\end{cases}
\]
\end{definition}
By direct computation we get the following expression for the expected absolute rounding error for $\ssrh$ (cf.~\cref{eq:csrh-experror}): 
\begin{equation} \label{eq:ssrh-experror}
\expt\,[\,\sigma^{\scaleto{\ssrh}{6pt}}(x)\,]=\begin{cases}
 \lfloor x\rfloor-x,&\wh{\eta}(x,\varepsilon,v)>1,\\
 \sign(-v) \, \varepsilon \, (\lceil x \rceil-\lfloor x \rfloor),&0\le \wh{\eta}(x,\varepsilon,v) \le 1,\\
 \lceil x\rceil-x,&\wh{\eta}(x,\varepsilon,v)<0.
\end{cases}
\end{equation}
It can be seen that when $0\le \wh{\eta}(x,\varepsilon,v) \le 1$, the expected absolute rounding error always has a sign opposite to that of $v$. Therefore, when applying $\ssrh$ to implement GD, one can achieve a rounding bias in a descent direction by replacing $v$ with the components of the gradient vector.  
 
\subsection{Standard models of arithmetic operations} Standard models of floating-point operations ($\mathrm{op}$) are based on $\rn$. For $\mathrm{op} \in \{+,-, *,/,\sqrt{}\}$, they satisfy \cite[Sec.~2.2]{higham2002accuracy}
\begin{align}\label{eq:flmodel}
\rn(x)&=x\,(1+\delta),\quad \text{and} \quad
\rn(x~\mathrm{op}~y)=(x~\mathrm{op}~y)(1+\delta),\!\!\!\!&\vert \delta \vert \le u.
\end{align}

For $\csr$, by assuming that the elementary operations and the square root are stochastically rounded to the exact ones, \cref{eq:flmodel} holds when replacing $u$ by $2u$; see \cite[(2.4)]{connolly2021stochastic}. For $\srh$ we can identify the following two cases due to the role of $\varepsilon$ in the rounding probability:
\begin{align}
\srh(x)=x\,(1+\delta),\qquad
\begin{cases}
\vert \delta \vert \le 2u, & 0\le \eta(x,\varepsilon) \le 1,\\
0\le \delta \le 2\,\varepsilon\,u, & \text{otherwise}.\\
\end{cases}
\label{eq:fltoper_srh}
\end{align}
For $\srh(x~\mathrm{op}~y)$ the same bounds hold as in \cref{eq:fltoper_srh}, by assuming that $\mathrm{op} \in \{+,-, *,/,\sqrt{}\}$ is the stochastically rounded exact one for $\csr$. Finally, the same bounds which hold for $\srh$ also apply to $\ssrh$. Under the same assumption, we may achieve an upper bound for the expected relative error of $\srh$.

\begin{Lemma}\label{lem:exp_deltasrh}
Under the assumption that the elementary operations and the square root are stochastically rounded to the exact ones, we have that the expectation of the corresponding relative error satisfies $0\le\expt\,[\delta^{\scaleto{\srh}{5.5pt}}(x)\,]\leq 2\,\varepsilon\,u$, for all nonzero $x\in\mathbb R$ and $0<\varepsilon<1$. 
\end{Lemma}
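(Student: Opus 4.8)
The plan is to reduce everything to the closed form for the expected \emph{absolute} error already recorded in \cref{eq:csrh-experror}, exploiting that $\expt[\delta^{\scaleto{\srh}{5.5pt}}(x)] = \expt[\sigma^{\scaleto{\srh}{5.5pt}}(x)]/x$. The argument is then a case analysis steered by the identity $\eta(x,\varepsilon) = p_0(x) - \sign(x)\,\varepsilon$, which is immediate from comparing \cref{eq:epsilon} with \cref{def:stochasticrounding}. Since $p_0(x)\in[0,1]$ and $\varepsilon\in(0,1)$, this identity already prunes the three branches of \cref{eq:csrh-experror}: for $x>0$ we have $\eta = p_0(x)-\varepsilon \le 1-\varepsilon < 1$, so only the middle branch ($0\le\eta\le1$, i.e.\ $p_0(x)\ge\varepsilon$) and the bottom branch ($\eta<0$, i.e.\ $p_0(x)<\varepsilon$) can occur; for $x<0$ we have $\eta = p_0(x)+\varepsilon \ge \varepsilon > 0$, leaving only the middle branch ($p_0(x)\le 1-\varepsilon$) and the top branch ($p_0(x)>1-\varepsilon$). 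The case $x\in\float$ is trivial, as the error vanishes.

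In the two middle-branch cases \cref{eq:csrh-experror} gives $\expt[\sigma^{\scaleto{\srh}{5.5pt}}(x)] = \sign(x)\,\varepsilon\,(\lceil x\rceil-\lfloor x\rfloor)$, so dividing by $x$ yields $\expt[\delta^{\scaleto{\srh}{5.5pt}}(x)] = \varepsilon\,(\lceil x\rceil-\lfloor x\rfloor)/|x| \ge 0$, which by the standard spacing estimate $\lceil x\rceil-\lfloor x\rfloor \le 2u\,|x|$ (valid for normalized $x$, and where $u=2^{-s}$ enters as a relative quantity) is bounded above by $2\varepsilon u$. For the extreme branches I would use the elementary identities $\lceil x\rceil - x = p_0(x)\,(\lceil x\rceil-\lfloor x\rfloor)$ and $x-\lfloor x\rfloor = (1-p_0(x))\,(\lceil x\rceil-\lfloor x\rfloor)$. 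In the bottom branch ($x>0$) this gives $\expt[\sigma^{\scaleto{\srh}{5.5pt}}(x)] = \lceil x\rceil - x = p_0(x)\,(\lceil x\rceil-\lfloor x\rfloor)\ge0$, and the branch condition $p_0(x)<\varepsilon$ forces $\expt[\delta^{\scaleto{\srh}{5.5pt}}(x)] < \varepsilon\,(\lceil x\rceil-\lfloor x\rfloor)/|x| \le 2\varepsilon u$; symmetrically, in the top branch ($x<0$) one has $\expt[\sigma^{\scaleto{\srh}{5.5pt}}(x)] = \lfloor x\rfloor - x = -(1-p_0(x))\,(\lceil x\rceil-\lfloor x\rfloor)\le0$, and dividing by the negative $x$ together with $1-p_0(x)<\varepsilon$ again produces a nonnegative relative error bounded by $2\varepsilon u$. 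Combining the branches gives $0\le\expt[\delta^{\scaleto{\srh}{5.5pt}}(x)]\le 2\varepsilon u$.

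The only genuinely nonroutine ingredient is the spacing bound $\lceil x\rceil-\lfloor x\rfloor \le 2u\,|x|$ together with the implicit hypothesis that $x$ is in the normalized range; everything else is sign bookkeeping, where the single point to watch is that dividing $\expt[\sigma^{\scaleto{\srh}{5.5pt}}(x)]$ by a negative $x$ flips the sign, which is exactly what keeps the relative error nonnegative in the $x<0$ cases. The stated assumption that the elementary operations and the square root are stochastically rounded to the exact ones is not needed for the single-argument claim itself, but ensures the identical bound transfers verbatim from $\srh(x)$ to $\srh(x~\mathrm{op}~y)$ through \cref{eq:fltoper_srh}.
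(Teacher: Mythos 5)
Your proposal is correct and follows essentially the same route as the paper's own proof: both reduce the claim to the closed form \cref{eq:csrh-experror}, observe that $\eta(x,\varepsilon)>1$ can occur only for $x<0$ and $\eta(x,\varepsilon)<0$ only for $x>0$ (so the extreme branches force the distance to the rounded value to be at most $\varepsilon\,(\lceil x\rceil-\lfloor x\rfloor)$), and then apply the spacing bound $\lceil x\rceil-\lfloor x\rfloor\le 2u\,\vert x\vert$. Your write-up is merely more explicit than the paper's (spelling out the identities $\lceil x\rceil-x=p_0(x)(\lceil x\rceil-\lfloor x\rfloor)$, the nonnegativity in each branch, and the normalized-range caveat), but the underlying argument is identical.
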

\begin{proof}
Observing \cref{eq:epsilon}, we have that $\eta(x,\varepsilon)>1$ happens only for $x<0$, which implies that $\frac{x-\lfloor x\rfloor }{\lceil x\rceil-\lfloor x\rfloor}\le\varepsilon$. The fact that $\eta(x,\varepsilon)<0$ happens only for $x>0$ gives that $\frac{\lceil x\rceil-x }{\lceil x\rceil-\lfloor x\rfloor}\le\varepsilon$. On the basis of \cref{eq:csrh-experror}, we get $\expt\,[\,\delta^{\scaleto{\srh}{6pt}}(x)\,]\leq \frac{\varepsilon \,(\lceil x\rceil-\lfloor x\rfloor)}{x}\leq 2\,\varepsilon\,u.$
\end{proof}

\subsection{Implementation of stochastic rounding}
In our implementation, all three stochastic rounding schemes, $\csr$, $\srh$ and $\ssrh$, are obtained by slightly modifying the \texttt{roundit} function in the \texttt{chop} MATLAB function \cite{higham2019simulating}. The \texttt{chop} function is developed to round the elements of a matrix to a lower precision floating-point arithmetic with certain rounding methods defined in the subfunction \texttt{roundit}. The input variable of \texttt{roundit} is already scaled with the desired precision, and therefore it is not necessary to consider the scaling process in the \texttt{roundit} function. Note that \texttt{chop} is implemented with double-precision computation ({\sf binary 64}), therefore only a lower precision than {\sf binary 64} can be evaluated. A pseudo code is given in \cref{alg:roundspdpfloat}, where $\csr$ can be evaluated by setting $\varepsilon=0$ and $\srh$ can be evaluated by setting $v=x$ and $0<\varepsilon<1$.
\begin{algorithm}[ht!]
	\caption{Pseudocode of $\ssrh$ using the \texttt{roundit} function in \cite{higham2019simulating}.}\label{alg:roundspdpfloat}
	\begin{algorithmic}[1]		
		\State Input: $x$ and $\varepsilon$ and $v$ (cf.~\cref{def:signedsrh}).
		\State Compute $y = \vert x\vert$ and $\sigma = y - \lfloor y \rfloor $. 
            \State The probability of rounding down is $\wh{p}_{\varepsilon}(y)=\varphi(1-(y-\lfloor y \rfloor)+\sign(v)\,\sign(x)\,\varepsilon)$ .
		\State Generate a random number $\xi$ from the random number generator.
            \State Then the rounded value can be computed as $\mathrm{fl}(y)= \begin{cases}
		\lfloor y \rfloor   \quad  &\text{if $\xi\le \wh{p}_{\varepsilon}(y)$},\\
		\lceil y\rceil  \quad    &\text{else}.
		\end{cases}$
		\State Output: $y = \sign(x)\cdot \mathrm{fl}(y)$.
	\end{algorithmic}
\end{algorithm}

\section{Gradient descent in floating-point arithmetic} \label{sec:GDwithfp} We recall the GD algorithm for minimizing a differentiable function $f:\Real^n\to \Real$. For a  fixed \emph{stepsize} $t>0$, the method iteratively updates in the opposite direction of the gradient with the rule
\begin{equation}
\bx^{(k+1)}=\bx^{(k)}-t \, \nabla f(\bx^{(k)}).
\label{eq:gd}
\end{equation}

\subsection{Source of rounding errors} \label{sec:sourceoferrors}
We denote by $\wh \bx^{(k)}$ the sequence generated by GD in finite precision. When implementing GD with floating-point numbers, there are three sources of rounding errors in the evaluation of \cref{eq:gd} that we have to take into account: the accumulated absolute rounding error $\bsigma_1$ arising from computing the gradient (see \cref{eq:gd_floatp0}), the roundoff error $\bdelta_2$ coming from the multiplication with the stepsize $t$ (see \cref{eq:gd_floatp1}), and $\bdelta_3$ caused by the final subtraction (see \cref{eq:gd_floatp2}). Note that we use an accumulated absolute rounding error for the computation of the gradient since in general the evaluation of the gradient may not be backward stable; for inner products and matrix-vector products see \cite[Secs.~3.1 and 3.5]{higham2002accuracy}.
We split the GD iteration into the following three steps:
\begin{subequations} \label{eq:gd_floatp}
\begin{align}
\wh{\nabla f(\wh{\bx}^{(k)})} &=\nabla f(\wh{\bx}^{(k)})+\bsigma_1^{(k)}, \label{eq:gd_floatp0}\\
 \bz^{(k+1)} &=\wh{\bx}^{(k)}-t\,\wh{\nabla f(\wh{\bx}^{(k)}}) \circ(\one+\bdelta_2^{(k)}), \label{eq:gd_floatp1}\\
\wh{\bx}^{(k+1)} &=\bz^{(k+1)} \circ(\one+\bdelta_3^{(k)}),\label{eq:gd_floatp2}\end{align}
\end{subequations}
 where $\one\in \Real^n$ is the vector of all ones and $\circ$ indicates the Hadamard product. The magnitudes of the entries of $ \bdelta_2^{(k)}, \bdelta_{3}^{(k)}$ are bounded by either $u$ or $2u$; see \cref{eq:flmodel} and \cref{eq:fltoper_srh}. We denote the corresponding absolute errors by $\bsigma_m^{(k)}$ and $\bh_m^{(k)}=\one+\bdelta_{m}^{(k)}$, for $m=2,3$. Finally, we remark that all the entries in $h_{m,i}^{(k)}$ are positive, since the rounding schemes mentioned in \cref{sec:rounding} do not change the sign of the output of a single arithmetic operation.

Although in floating-point arithmetic, the relative error is bounded by $u$ or $2u$ for a single operation, when a series of operations is implemented high relative accuracy may not be guaranteed. For instance, high
relative accuracy is not guaranteed for evaluating the inner product $\bx^T\by$ when $\vert \bx^T\by\vert \ll \vert \bx\vert^T \vert \by \vert$ (see, e.g., \cite[p. 63]{higham2002accuracy}). Therefore, for the gradient evaluation, we use the following bound including both absolute and relative errors, i.e., the entries of $\bsigma_1^{(k)}$ satisfy the bound 
\begin{align} \label{eq:boundsigma1}
 \vert\sigma_{1,i}^{(k)} \vert \le c\,u\,(\vert \nabla f(\wh{\bx}^{(k)})_i\vert +1),
\end{align}
where $c\ge 0$ is a non-negative constant dependent on $\nabla f$, which can be obtained analytically for a given $\nabla f$. For instance, for a quadratic function $\tfrac{1}{2} \, \bx^T\!A\bx$, if $A$ is a diagonal matrix, $c=2$. When $A$ is a full matrix and the iterates $\bx$ stay in the compact set $\{\by\in\mathbb R^n:\, \Vert \by\Vert_\infty\le M\}$, we can take $c= \frac{2nu\, \Vert A\Vert_{\infty} \,M}{1-2nu}$ (cf.~\cite[Sec.~3.5]{higham2002accuracy}). Note that one can also choose different values of $c$ for absolute error and relative error, i.e., $\vert\sigma_{1,i}^{(k)} \vert \le c_1\,u\,\vert \nabla f(\wh{\bx}^{(k)})_i\vert +c_2\,u$. For the sake of simplicity, we choose the same $c$ in \cref{eq:boundsigma1}.
\subsection{Stagnation and non-stagnation of GD with RN} \label{sec:stag} 
For every run of GD with limited precision and $\rn$, after a certain number of iteration steps, stagnation may happen due to rounding, i.e., $\wh{\bx}^{(k+1)}=\rn\big(\wh{\bx}^{(k)}-\rn(\,t\,\rn(\nabla f(\wh{\bx}^{(k)}))\,)\big)=\wh{\bx}^{(k)}$. Usually, this phenomenon happens earlier, with respect to the number of iteration steps, with low-precision computations. 
Let us have a closer look at this situation in the spirit of \cite[Thm.~2.2]{higham2002accuracy}. We denote $z_i^{(k+1)}=\wh{x}_i^{(k)}-\rn(t\,\rn(\nabla f(\wh{\bx}^{(k)})_i)\,)=\mu_i^{(k)}2^{e_i^{(k)}-s}$, where $\mu_i^{(k)}\in[2^{s-1},2^s)$ and $e_i^{(k)}\in\mathbb{N}$. We define
\[
\tau_k:= \max_{i=1,\dots,n} 2^{-e_i^{(k)}}\rn(t\,\rn(\nabla f(\wh{\bx}^{(k)})_i)\,)
\]
and $i_k:=\argmax_{i=1,\dots,n} 2^{-e_i^{(k)}}\rn(t\,\rn(\nabla f(\wh{\bx}^{(k)})_i)\,)$
as the maximum value and the corresponding index, respectively.
If $\tau_k \le \tfrac{1}{2}u$ and the least significant bit of $\wh{x}_{i_k}^{(k)}$ equals $0$, then GD stagnates with $\rn$ and only converges to a neighborhood of the optimal point. 

As an illustrative example, in \cref{fig:underflow} we show the trajectory of $\wh x^{(k)}$ when minimizing $f(x)=(x-1024)^2$ using GD with RN and {\sf binary8}. Although $1024$ can be represented exactly using $\rn$, when $k\ge 8$ GD stagnates as $\tau_k=0.046$; see \cref{fig:ratio}. Note that the stagnation may not always occur to GD with low-precision computation. When solving multi-dimensional optimization problems, GD may only stagnate along some coordinates of $\bx$. 
\begin{figure}[ht!]
\centering
\subfloat[$x$ trajectory ]{\label{fig:underflow}\includegraphics[width=0.42\textwidth]{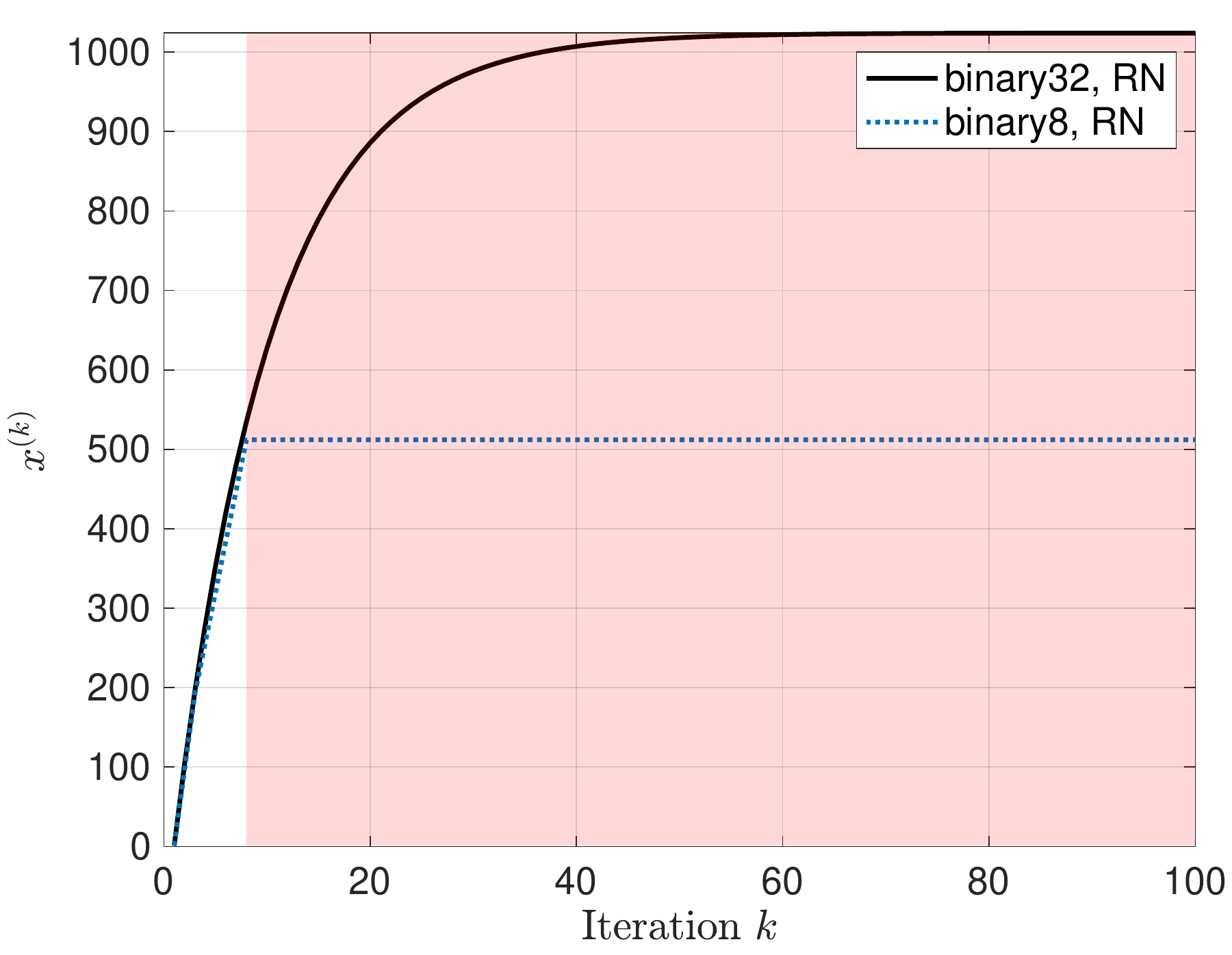}}\quad\quad
\subfloat[ratio]{\label{fig:ratio}\includegraphics[width=0.42\textwidth]{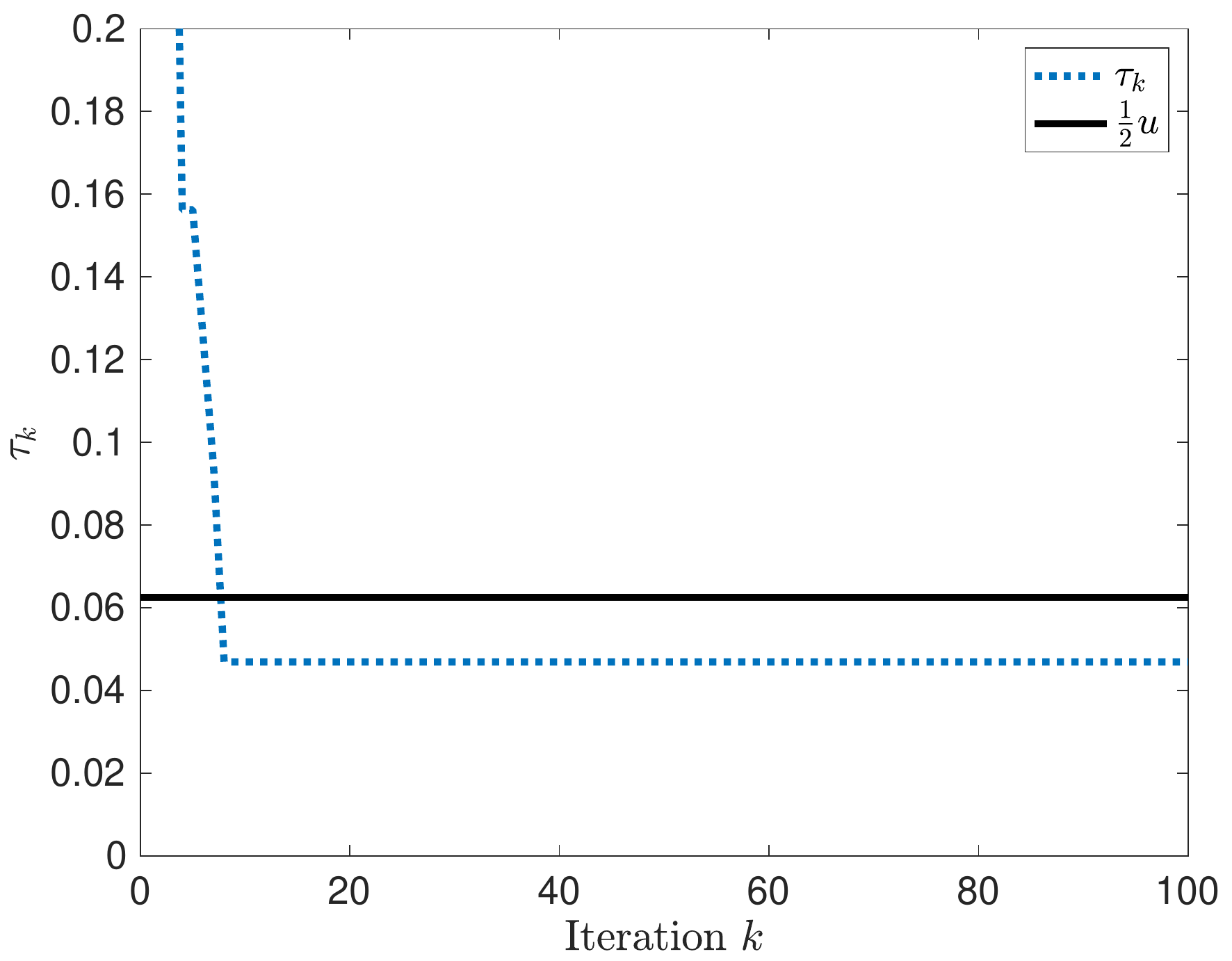}}
\caption{Minimizing $f(x)=(x-1024)^2$ using GD with {\sf binary8} ($u=2^{-3}$) and $\rn$, where the red area indicates where stagnation occurs.}
\end{figure}

Given the phenomena depicted in \cref{fig:underflow}, we split the analysis of the convergence of GD into the two scenarios mentioned in the Introduction. In Scenario 1, we focus on stochastic rounding methods that do not suffer from stagnation of GD, i.e., $\tau_k>\tfrac{1}{2}u$ in all iteration steps and in Scenario 2 we consider a special case when GD stagnates with $\rn$ ($\tau_k \le \tfrac{1}{2}u$ for a certain $k$). However, computing $\tau_k$ is impractical as it requires the exact (or accurate) value of the gradient update. Therefore, we propose the following conditions as an interpretation of $\tau_k$ for the two scenarios. To facilitate the analysis, we introduce the functions that return the successor and the predecessor of a given floating-point number $\wh x\in\float$:
\begin{align} \label{eq:flbound}
\mathrm{su}(\wh{x})=\min\{\wh{y}>\wh{x}\ \big\vert \ \wh{y}\in \float\} \quad \text{and}\quad \mathrm{pr}(\wh{x})=\max\{\wh{y}<\wh{x}\ \big\vert\  \wh{y}\in \float\}.
\end{align}
Note that $\mathrm{su}(\cdot)$ and $\mathrm{pr}(\cdot)$ differ from the ceiling and floor operations in view of the strict inequalities in \cref{eq:flbound}.  

\textbf{Condition of scenario 1} (no stagnation). $\rn$ rounds $\wh{x}_i^{(k)} -t\,\vert(\nabla f(\wh{\bx}^{(k)})_i+\sigma_{1,i}^{(k)})\,h_{2,i}^{(k)}\vert$ to either $\mathrm{su}(\wh{x}_i^{(k)})$ or $\mathrm{pr}(\wh{x}_i^{(k)})$, for $i=1,\dots, n$, i.e., we have that 
\begin{align}\label{eq:conditon_1}
 \bigg\vert\frac{t\,(\nabla f(\wh{\bx}^{(k)})_i+\sigma_{1,i}^{(k)})\,h_{2,i}^{(k)}}{\mathrm{su}(x_i^{(k)})-x_i^{(k)}}\bigg\vert > \tfrac{1}{2} \quad \text{or} \quad \bigg\vert\frac{t\,(\nabla f(\wh{\bx}^{(k)})_i+\sigma_{1,i}^{(k)})\,h_{2,i}^{(k)}}{x_i^{(k)}-\mathrm{pr}(x_i^{(k)})}\bigg\vert > \tfrac{1}{2}.\end{align}
 
 \textbf{Condition of scenario 2} (stagnation). $\rn$ rounds $\wh{x}_i^{(k)} -t\,\vert(\nabla f(\wh{\bx}^{(k)})_i+\sigma_{1,i}^{(k)})\,h_{2,i}^{(k)}\vert$ to $\wh{x}_i^{(k)}$, for $i=1,\dots, n$, i.e., we have that 
\begin{align}\label{eq:conditon_2}
 \bigg\vert\frac{t\,(\nabla f(\wh{\bx}^{(k)})_i+\sigma_{1,i}^{(k)})\,h_{2,i}^{(k)}}{\mathrm{su}(x_i^{(k)})-x_i^{(k)}}\bigg\vert \le \tfrac{1}{2} \quad \text{or} \quad \bigg\vert\frac{t\,(\nabla f(\wh{\bx}^{(k)})_i+\sigma_{1,i}^{(k)})\,h_{2,i}^{(k)}}{x_i^{(k)}-\mathrm{pr}(x_i^{(k)})}\bigg\vert \le \tfrac{1}{2}.\end{align}
Note that there may be a scenario that some of the components of $\bx^{(k)}$ satisfy condition \cref{eq:conditon_1} and some satisfy \cref{eq:conditon_2}; the convergence analysis will be the combination of two scenarios and maintain the monotonicity under the same condition as the above scenarios. For the simplicity of analysis, we only consider the above two scenarios. 
\section{Convergence analysis of GD for convex problems} \label{sec:convergence}
Throughout this section we consider the unconstrained optimization problem 
$\argmin_{\bx\in \mathbb{R}^n} f(\bx)$,
of which the objective function is assumed to satisfy the following condition and we denote by $\mathbf x^*\in\mathbb R^{n}$ the minimizer of $f$.
\begin{Assumption}
\label{model:convexfun}
The function $f$ is convex and its gradient $\nabla f: \mathbb{R}^n \to \mathbb{R}^n$ is Lipschitz continuous with constant $L>0$ and the error in evaluating the gradient satisfies \cref{eq:boundsigma1}. 
\end{Assumption}
This means that, denoting the Euclidean norm by $\Vert\cdot\Vert$, for all $\mathbf x, \by\in \Real^n$, $f$ satisfies:
\begin{align}
 f(\mathbf x)+\nabla f(\mathbf x)^T(\by-\mathbf x)\,\le \, f(\by)\,\le \, f(\mathbf x)+\nabla f(\mathbf x)^T(\by-\mathbf x)+\tfrac{1}{2}L\,\Vert \by-\mathbf x\Vert^2.\label{eq:lc_ineq}
\end{align}
The proof of \cref{eq:lc_ineq} can be found in, e.g., \cite[Thm.~2.1.5]{nesterov2003introductory}. On the basis of  \cref{eq:boundsigma1}, we have
\begin{align}
\expt\,[\,\vert\,\sigma_{1,i}^{(k)}\,\nabla f(\wh{\bx}^{(k)})_i\,\vert\,]&\le (\expt\,[\,\nabla f(\wh{\bx}^{(k)})_i^2\,]+\expt\,[\,\vert\nabla f(\wh{\bx}^{(k)})_i \vert \,])\,c\,u .\label{eq:sigma_1_c2}
\end{align}
Condition \cref{eq:sigma_1_c2} is applicable to general rounding methods and optimization problems. Additionally, we also investigate the impact of stochastic rounding errors on the convergence of GD for a special case when $\expt\,[\,\sigma_{1,i}^{(k)}\ \big\vert \ \nabla f(\wh{\bx}^{(k)})_i\,]=0$ holds, i.e.,
\begin{align}
\expt\,[\,\sigma_{1,i}^{(k)}\,\nabla f(\wh{\bx}^{(k)})_i\,]&=0.\label{eq:sigma_1_c1}
\end{align}
It is easy to check that condition \cref{eq:sigma_1_c1} is satisfied when using $\csr$ to evaluate $\sigma_{1,i}^{(k)}$ for a type of optimization problems, e.g., quadratic functions $(\bx-\bx^*)^TA(\bx-\bx^*)$ or when the gradient function is evaluated exactly.

 First let us recall the convergence rate of GD for exact arithmetic. When exact arithmetic is employed, the convergence rate of GD is at least sublinear with respect to the number of iteration steps, as the following result shows.
\begin{theorem}(\cite[Thm.~2.1.14, Cor.~2.1.2]{nesterov2003introductory})\label{theorem:convergencerate} 
Under \cref{model:convexfun}, the $k$th iterate of the gradient descent method with a fixed stepsize $t\le \frac{1}{L}$ satisfies the following inequality: 
\begin{equation}
f(\bx^{(k)})-f(\bx^*) \le \frac{2L}{4+L\,t\,k} \, \Vert \bx^{(0)}-\bx^*\Vert^2.
\label{eq:convergencerate_general}
\end{equation}
\end{theorem}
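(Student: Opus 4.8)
The statement is the classical sublinear rate for gradient descent on convex $L$-smooth functions, so the plan is to reconstruct the standard two-ingredient argument: a per-step decrease coming from smoothness, and a lower bound on the gradient norm coming from convexity. First I would invoke the right-hand inequality in \cref{eq:lc_ineq} with $\bx=\bx^{(k)}$ and $\by=\bx^{(k+1)}=\bx^{(k)}-t\,\nabla f(\bx^{(k)})$. Since the step is exactly $-t\,\nabla f(\bx^{(k)})$, the linear and quadratic terms combine to give
\[
f(\bx^{(k+1)})\le f(\bx^{(k)})-t\bigl(1-\tfrac12 L t\bigr)\Vert\nabla f(\bx^{(k)})\Vert^2,
\]
and the hypothesis $t\le 1/L$ forces $1-\tfrac12 L t\ge\tfrac12$. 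Writing $\Delta_k:=f(\bx^{(k)})-f(\bx^*)$, this yields the monotone decrease $\Delta_{k+1}\le\Delta_k-\tfrac{t}{2}\Vert\nabla f(\bx^{(k)})\Vert^2$.

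Next I would convert the gradient norm into the suboptimality gap. The convexity (left-hand) inequality in \cref{eq:lc_ineq}, taken at $\bx=\bx^{(k)}$, $\by=\bx^*$, gives $\Delta_k\le\nabla f(\bx^{(k)})^T(\bx^{(k)}-\bx^*)$, and Cauchy--Schwarz yields $\Delta_k\le\Vert\nabla f(\bx^{(k)})\Vert\,\Vert\bx^{(k)}-\bx^*\Vert$. To exploit this I need the distance to the minimizer to stay controlled by its starting value. Expanding
\[
\Vert\bx^{(k+1)}-\bx^*\Vert^2=\Vert\bx^{(k)}-\bx^*\Vert^2-2t\,\nabla f(\bx^{(k)})^T(\bx^{(k)}-\bx^*)+t^2\Vert\nabla f(\bx^{(k)})\Vert^2,
\]
I would observe that chaining the per-step decrease with $\Delta_{k+1}\ge 0$ and the convexity bound gives $\tfrac{t}{2}\Vert\nabla f(\bx^{(k)})\Vert^2\le\Delta_k\le\nabla f(\bx^{(k)})^T(\bx^{(k)}-\bx^*)$, whence $t^2\Vert\nabla f(\bx^{(k)})\Vert^2\le 2t\,\nabla f(\bx^{(k)})^T(\bx^{(k)}-\bx^*)$, so the last two terms of the expansion are nonpositive. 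Thus $\Vert\bx^{(k)}-\bx^*\Vert\le\Vert\bx^{(0)}-\bx^*\Vert$ for every $k$, and combined with the previous display this gives $\Vert\nabla f(\bx^{(k)})\Vert^2\ge\Delta_k^2/\Vert\bx^{(0)}-\bx^*\Vert^2$.

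Substituting this into the decrease inequality produces the quadratic recursion $\Delta_{k+1}\le\Delta_k-\tfrac{t}{2D}\Delta_k^2$ with $D:=\Vert\bx^{(0)}-\bx^*\Vert^2$. The crux of the argument, and the step I expect to require the most care, is solving this recursion. I would divide through by $\Delta_k\Delta_{k+1}>0$ and use $\Delta_{k+1}\le\Delta_k$ to obtain the telescoping bound $\tfrac{1}{\Delta_{k+1}}-\tfrac{1}{\Delta_k}\ge\tfrac{t}{2D}$; summing over the first $k$ steps gives $\tfrac{1}{\Delta_k}\ge\tfrac{1}{\Delta_0}+\tfrac{tk}{2D}$. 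Finally I would bound the initial gap by smoothness, $\Delta_0=f(\bx^{(0)})-f(\bx^*)\le\tfrac{L}{2}\Vert\bx^{(0)}-\bx^*\Vert^2=\tfrac{L}{2}D$ (again from \cref{eq:lc_ineq} with $\nabla f(\bx^*)=0$), so that $\tfrac{1}{\Delta_0}\ge\tfrac{2}{LD}$; inverting $\tfrac{1}{\Delta_k}\ge\tfrac{2}{LD}+\tfrac{tk}{2D}=\tfrac{4+Ltk}{2LD}$ delivers exactly $\Delta_k\le\tfrac{2L}{4+Ltk}\Vert\bx^{(0)}-\bx^*\Vert^2$, as claimed.
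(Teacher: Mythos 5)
Your proof is correct, and it is essentially the classical Nesterov argument that the paper cites for this theorem (the paper gives no proof of its own here) and later adapts to the finite-precision setting in \cref{lem:boundforfloating_point}: a per-step decrease $\Delta_{k+1}\le\Delta_k-\tfrac{t}{2}\Vert\nabla f(\bx^{(k)})\Vert^2$ from smoothness, the convexity bound $\Delta_k\le\Vert\nabla f(\bx^{(k)})\Vert\,\Vert\bx^{(k)}-\bx^*\Vert$, division by $\Delta_k\Delta_{k+1}$ and telescoping of reciprocals, and finally the initialization bound $\tfrac{1}{\Delta_0}\ge\tfrac{2}{L\Vert\bx^{(0)}-\bx^*\Vert^2}$, which is exactly the paper's \cref{eq:f0}. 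The one place you deviate from the textbook route is the monotonicity of $\Vert\bx^{(k)}-\bx^*\Vert$: Nesterov obtains it from co-coercivity of the gradient, whereas you derive it by chaining the descent inequality with convexity ($t^2\Vert\nabla f(\bx^{(k)})\Vert^2\le 2t\,\nabla f(\bx^{(k)})^T(\bx^{(k)}-\bx^*)$); this is a valid and slightly more elementary substitute, and it is worth noting that the paper's own adaptation sidesteps this issue entirely by replacing the non-increasing distance with the running maximum $\chi$, since monotonicity of the distance cannot be guaranteed once rounding errors enter the iteration.
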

\cref{theorem:convergencerate} ensures that, in exact arithmetic, GD asymptotically converges to the optimum. However, when implementing GD in floating-point arithmetic, the method may only converge to some level of accuracy depending on the rounding precision. 

Before stating the results for the two scenarios, let us discuss a bound on $u$, i.e., $u\le \frac{a}{c\,+\,4a\,+\,4}$ with $0<a<1$, that guarantees the convergence when the number of significant bits is limited. We show that a smaller $a$ yields a smaller lower bound on the norm of the gradient, while a larger $a$ causes a larger lower bound.
\begin{Proposition} \label{prop:boundonu}
Let the objective function f satisfy \cref{model:convexfun}. Assume that $k>0$ iteration steps of GD have been carried out with a fixed stepsize $t$ such that $t\le \tfrac{1}{L\,(1+2u)^2}$ and $u$ satisfies $u\le \frac{a}{c\,+\,4a\,+\,4}$, where $c$ is the constant given in \cref{eq:boundsigma1}, for a certain $a$ with $0<a<1$. If the gradient satisfies 
\begin{align} \label{eq:bound_gradient_general}
\Vert \nabla f(\wh{\bx}^{(k-1)})\Vert\ge (1-a)^{-1} \, (2+4\,u+\sqrt{1-a}\,) \, \sqrt{n}\,c\,u,
\end{align}
then $f(\bz^{(k)}) \le f(\wh{\bx}^{(k-1)})$.
\end{Proposition}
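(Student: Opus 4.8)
The plan is to apply the right-hand (descent) inequality in \cref{eq:lc_ineq} to the single inexact half-step that produces $\bz^{(k)}$, and then to show that the rounding errors incurred in that step are small compared with the gradient, so that $f$ still decreases. Combining \cref{eq:gd_floatp0} and \cref{eq:gd_floatp1} at index $k-1$, and abbreviating $\mathbf g:=\nabla f(\wh{\bx}^{(k-1)})$ and $\mathbf d:=(\mathbf g+\bsigma_1^{(k-1)})\circ\bh_2^{(k-1)}$, we have $\bz^{(k)}-\wh{\bx}^{(k-1)}=-t\,\mathbf d$, where the entries of $\bh_2^{(k-1)}=\one+\bdelta_2^{(k-1)}$ are positive and lie in $[1-2u,\,1+2u]$. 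Taking $\by=\bz^{(k)}$ and $\bx=\wh{\bx}^{(k-1)}$ in \cref{eq:lc_ineq} gives
\[
f(\bz^{(k)})-f(\wh{\bx}^{(k-1)})\le -t\,\mathbf g^{T}\mathbf d+\tfrac12 L\,t^{2}\,\Vert\mathbf d\Vert^{2},
\]
so, dividing by $t>0$, it is enough to prove the correlation inequality $\tfrac12 L\,t\,\Vert\mathbf d\Vert^{2}\le\mathbf g^{T}\mathbf d$.

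First I would remove the stepsize and the factor $\bh_2^{(k-1)}$. Since each entry of $\bh_2^{(k-1)}$ is at most $1+2u$ and $t\le\frac{1}{L(1+2u)^{2}}$, the quadratic term obeys $\tfrac12 L\,t\,\Vert\mathbf d\Vert^{2}\le\tfrac12 L\,t\,(1+2u)^{2}\Vert\mathbf g+\bsigma_1^{(k-1)}\Vert^{2}\le\tfrac12\Vert\mathbf g+\bsigma_1^{(k-1)}\Vert^{2}$, so the target reduces to $\tfrac12\Vert\mathbf g+\bsigma_1^{(k-1)}\Vert^{2}\le\mathbf g^{T}\mathbf d$. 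The left side I would expand and the right side I would lower-bound componentwise using $h_{2,i}^{(k-1)}\in[1-2u,1+2u]$ and Cauchy--Schwarz, so that both sides become functions of only $G:=\Vert\mathbf g\Vert$ and $S:=\Vert\bsigma_1^{(k-1)}\Vert$. Invoking \cref{eq:boundsigma1} together with $\Vert\,|\mathbf g|+\one\,\Vert\le G+\sqrt n$ gives $S\le c\,u\,(G+\sqrt n)$, and substituting this turns the inequality into a scalar quadratic inequality of the form $P\,G^{2}-Q\,G-R\ge0$, whose coefficients $P,Q,R\ge0$ depend only on $u$, $c$ and $n$, with $R$ proportional to $c^{2}u^{2}n$.

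The conclusion $f(\bz^{(k)})\le f(\wh{\bx}^{(k-1)})$ then holds for every $G$ at least as large as the positive root $\tfrac{1}{2P}\bigl(Q+\sqrt{Q^{2}+4PR}\bigr)$, and it remains to bound that root by the threshold in \cref{eq:bound_gradient_general}. Here the hypothesis $u\le\frac{a}{c+4a+4}$ enters decisively: it is calibrated so that, after the substitution $S\le cu(G+\sqrt n)$, the leading coefficient satisfies $2P\ge 1-a$, which is the origin of the factor $(1-a)^{-1}$. Splitting the discriminant by $\sqrt{Q^{2}+4PR}\le Q+\sqrt{4PR}$ separates a linear contribution $Q/P$ (yielding the $2+4u$ term) from a constant contribution $\sqrt{R/P}$, which -- because $R\propto c^{2}u^{2}n$ and $2P\ge 1-a$ -- collapses to $\tfrac{\sqrt{n}\,c\,u\,\sqrt{1-a}}{1-a}$, producing the $\sqrt{1-a}$ term; this is how the stated bound $(1-a)^{-1}(2+4u+\sqrt{1-a})\sqrt n\,c\,u$ emerges.

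I expect the main obstacle to be precisely this constant-chasing rather than any conceptual difficulty. The cross terms $\mathbf g^{T}\bsigma_1^{(k-1)}$ appear in both the expansion of $\tfrac12\Vert\mathbf g+\bsigma_1^{(k-1)}\Vert^{2}$ and the lower bound for $\mathbf g^{T}\mathbf d$, and they must be handled consistently: a consistent treatment keeps the coefficient of $G\,S$ small, and it is exactly this that lets $u\le\frac{a}{c+4a+4}$ force $2P\ge 1-a$ (one must also verify $P>0$, so that the parabola opens downward and large gradients are the favourable regime). A careless, independent bounding of these cross terms inflates the coefficient of $G\,S$ and breaks the calibration $2P\ge 1-a$. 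Once the sign bookkeeping is done so that the quadratic has the stated coefficients, the lower bound on $\Vert\nabla f(\wh{\bx}^{(k-1)})\Vert$ is exactly the condition guaranteeing $P\,G^{2}-Q\,G-R\ge 0$, and the claim follows.
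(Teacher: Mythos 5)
Your proposal is correct, and its skeleton coincides with the paper's proof of \cref{prop:boundonu}: apply the right-hand inequality of \cref{eq:lc_ineq} to the half-step producing $\bz^{(k)}$, bound the rounding contributions through \cref{eq:boundsigma1} and $\vert\delta_{2,i}^{(k-1)}\vert\le 2u$, reduce everything to a scalar quadratic inequality in $G=\Vert\nabla f(\wh{\bx}^{(k-1)})\Vert$, use $u\le\frac{a}{c+4a+4}$ to force the leading coefficient to be at least $\tfrac12(1-a)$, and verify that the threshold \cref{eq:bound_gradient_general} dominates the positive root (the paper does this last step by completing the square, you by the quadratic formula with $\sqrt{Q^2+4PR}\le Q+2\sqrt{PR}$ --- these are equivalent). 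The one genuine divergence is your treatment of the cross term: you arrange the comparison $\tfrac12\Vert\mathbf g+\bsigma_1^{(k-1)}\Vert^2\le \mathbf g^T\mathbf d$ so that $\mathbf g^T\bsigma_1^{(k-1)}$ appears identically on both sides and cancels exactly, whereas the paper bounds $\vert\nabla f(\wh{\bx}^{(k-1)})^T\bsigma_1^{(k-1)}\vert$ in the worst case via \cref{eq:innerp_gradient_sigma}. As a result your quadratic has linear coefficient $Q=(c^2+2c)u^2\sqrt n$, of order $u^2$, against the paper's $\sqrt n\,cu(1+2u)$ in \cref{eq:converrate_u}, of order $u$; your variant is therefore marginally sharper (it would support a slightly weaker gradient threshold), while both quadratics are comfortably dominated by the stated bound since $cu<a<1$. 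Two small points to fix in writing it up: the right-hand side of your correlation inequality is not a function of $G$ and $S$ alone until after the cross terms cancel (your final paragraph handles this correctly, but the earlier phrasing overstates it); and with $P>0$ the parabola $PG^2-QG-R$ opens \emph{upward} --- which is exactly what makes large gradients the favourable regime --- not downward.
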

\begin{proof}
Since $\nabla f$ is Lipschitz continuous with constant $L$, combining the updating rule with rounding errors \cref{eq:gd_floatp} and property \cref{eq:lc_ineq}, we have that
\begin{align*} 
f(\bz&^{(k+1)})- f(\wh{\bx}^{(k)})\nonumber\\
\le \,& -t\,\nabla f(\wh{\bx}^{(k)})^T(\nabla f(\wh{\bx}^{(k)})+\bsigma_1^{(k)}) \circ(\one+\bdelta_2^{(k)})\nonumber\\&\phantom{MM}+\tfrac{1}{2}\,L\,t^2\,\Vert (\nabla f(\wh{\bx}^{(k)})+\bsigma_1^{(k)}) \circ(\one+\bdelta_2^{(k)})\Vert^2\nonumber\\
=\, & \, -t\,\nabla f(\wh{\bx}^{(k)})^T(\nabla f(\wh{\bx}^{(k)}) \circ \bdelta_2^{(k)})-t\,\Vert \nabla f(\wh{\bx}^{(k)})\Vert^2-t\,\nabla f(\wh{\bx}^{(k)})^T\bsigma_1^{(k)}\nonumber\\
&\,\phantom{MM}  -t\,\nabla f(\wh{\bx}^{(k)})^T( \bsigma_1^{(k)} \circ \bdelta_2^{(k)})+\tfrac{1}{2}\,L\,t^2 \, \Vert (\nabla f(\wh{\bx}^{(k)})+\bsigma_1^{(k)}) \circ (\mathbf 1 + \bdelta_2^{(k)})\Vert^2.\nonumber
\end{align*}
Since $\vert \delta_{2,i}^{(k)}\vert\le 2u$ for $i = 1, \dots, n$, the upper bound for $f(\bz^{(k+1)})- f(\wh{\bx}^{(k)})$ becomes
\begin{align}
\label{eq:fzk_expansion}
 &t\,(2u-1)\,\Vert \nabla f(\wh{\bx}^{(k)})\Vert^2-t\,\nabla f(\wh{\bx}^{(k)})^T\bsigma_1^{(k)}+2u\,t\,\vert\nabla f(\wh{\bx}^{(k)})\vert^T\,\vert\boldsymbol{\sigma}_1^{(k)} \vert\nonumber\\
 &\phantom{MMM} +\tfrac{1}{2}\,L\,t^2\,(1+2u)^2\Vert \nabla f(\wh{\bx}^{(k)})+\bsigma_1^{(k)} \Vert^2\\
 & \le t\,(2u-1+\tfrac{1}{2}\,L\,t\,(1+2u)^2)\,\Vert \nabla f(\wh{\bx}^{(k)})\Vert^2+2u\,t\,\vert\nabla f(\wh{\bx}^{(k)})\vert^T\,\vert\boldsymbol{\sigma}_1^{(k)} \vert \nonumber\\
 &\phantom{MMM} +t\,(1-L\,t\,(1+2u)^2)\,\vert\nabla f(\wh{\bx}^{(k)})^T\bsigma_1^{(k)} \vert+\tfrac{1}{2}\,L\,t^2\,(1+2u)^2\,\Vert\bsigma_1^{(k)} \Vert^2.\nonumber
\end{align}
Based on \cref{eq:boundsigma1}, we have 
\begin{align} \label{eq:innerp_gradient_sigma}
 \vert\nabla f(\wh{\bx}^{(k)})^T\bsigma_1^{(k)} \vert \le \,&\vert\nabla f(\wh{\bx}^{(k)})\vert^T\,\vert\boldsymbol{\sigma}_1^{(k)} \vert\nonumber\\ \le \,&\sum_{i=1}^n \vert\nabla f(\wh{\bx}^{(k)})_i\vert\, (\vert\nabla f(\wh{\bx}^{(k)})_i\vert+1)\,c\,u\nonumber\\
 \le \,& c\,u\,(\Vert\nabla f(\wh{\bx}^{(k)})\Vert^2+ \sqrt{n}\,\Vert\nabla f(\wh{\bx}^{(k)})\Vert),
\end{align}
and 
\begin{align} \label{eq:norm_sigma1}
 \Vert \bsigma_1^{(k)} \Vert^2\le c^2\,u^2\,(\Vert\nabla f(\wh{\bx}^{(k)})\Vert^2+ 2\sqrt{n}\,\Vert\nabla f(\wh{\bx}^{(k)})\Vert+n).
\end{align}
Therefore, we obtain that $f(\bz^{(k+1)})-f(\wh{\bx}^{(k)})$ is bounded from above by
\begin{align}
& -t\,(\,(1-c\,u)-\tfrac{1}{2}\,L\,t\,(1+2u)^2(1-2\,c\,u+c^2\,u^2)-2u\,(1+c\,u)\,)\,\Vert \nabla f(\wh{\bx}^{(k)})\Vert^2\nonumber\\
& \phantom{MMM} +\sqrt{n}\,t\,c\,u\,(1+2u-L\,t\,(1+2u)^2(1-c\,u)\,)\,\Vert\nabla f(\wh{\bx}^{(k)})\Vert\nonumber\\
&\phantom{MMM}+\tfrac{1}{2}\,Lt^2(1+2u)^2\,n\,c^2u^2\nonumber\\
 &\le -t\,(\,(1-c\,u)(1-\tfrac{1}{2}\,L\,t\,(1+2u)^2(1-c\,u))-2u\,(1+c\,u)\,)\,\Vert \nabla f(\wh{\bx}^{(k)})\Vert^2\nonumber\\
 & \phantom{MMM} +\sqrt{n}\,t\,c\,u\,(1+2u-L\,t\,(1+2u)^2(1-c\,u)\,)\,\Vert\nabla f(\wh{\bx}^{(k)})\Vert+\tfrac{1}{2}\,t\,n\,c^2u^2.\nonumber
\end{align}
In the last inequality we have used that $Lt\,(1+2u)^2\le 1$. Note that $u\le \frac {a}{c\,+\,4a\,+\,4}$ and $a<1$ imply $0<c\,u<1$ which in turn gives $1-\tfrac{1}{2}\,L\,t\,(1+2u)^2(1-c\,u)\ge \tfrac{1}{2}$. Consequently, we achieve 
\begin{align} \label{eq:fz_general}
f(\bz^{(k+1)})-f(\wh{\bx}^{(k)}) \le \, 
&-t\,(\tfrac{1}{2}\,-\tfrac{1}{2}\,c\,u-2u-2\,c\,u^2\,)\,\Vert \nabla f(\wh{\bx}^{(k)})\Vert^2\\
 & \phantom{MMM}+\sqrt{n}\,t\,c\,u\,(1+2u)\,\Vert\nabla f(\wh{\bx}^{(k)})\Vert+\tfrac{1}{2}\,t\,n\,c^2u^2\nonumber.
\end{align}
To achieve $f(\bz^{(k+1)}) \le f(\wh{\bx}^{(k)})$, it is sufficient to have $\tfrac{1}{2}\,-\tfrac{1}{2}\,c\,u-2u-2\,c\,u^2>0$. Substituting $u\le \frac{a}{c\,+\,4a\,+\,4}$ into $\tfrac{1}{2}\,-\tfrac{1}{2}\,c\,u-2u-2\,c\,u^2$, based on the property $a>0$, one may check that $\tfrac{1}{2}\,-\tfrac{1}{2}\,c\,u-2u-2\,c\,u^2
\ge\tfrac{a}{2}\,\big(\tfrac{1}{a}-\tfrac{c^2+2\,(4a+4)\,c+4\,(4a+4)}{(c\,+\,4a\,+\,4)^2}\big)>\tfrac{1}{2}(1-a)$, 
which in turn yields an upper bound for $f(\bz^{(k+1)})-f(\wh{\bx}^{(k)})$:
\begin{align} \label{eq:converrate_u}
-\tfrac{1}{2}(1-a)\,t\,\Vert \nabla f(\wh{\bx}^{(k)})\Vert^2+\sqrt{n}\,t\,c\,u\,(1+2u)\,\Vert\nabla f(\wh{\bx}^{(k)})\Vert+\tfrac{1}{2}\,t\,n\,c^2u^2.
\end{align}
Property \cref{eq:bound_gradient_general} implies that $\big(\sqrt{1-a} \, \Vert \nabla f(\wh{\bx}^{(k-1)})\Vert-\frac{\sqrt{n}\,c\,u\,(1+2u)}{\sqrt{1-a}}\big)^2 \ge (\tfrac{1}{1-a}(1+2u)+1)^2\,n\,c^2\,u^2,$
which indicates that 
\begin{align}
(1-a)\,\Vert \nabla f(\wh{\bx}^{(k-1)})\Vert^2 &\ge 2\,n\,c\,u\,(1+2u)\,\Vert \nabla f(\wh{\bx}^{(k-1)})\Vert\nonumber\\
&\qquad\qquad+\tfrac{2}{1-a}(1+2u)\,n\,c^2\,u^2+n\,c^2\,u^2.\nonumber
\end{align}
Therefore, by applying \cref{eq:converrate_u} to the $k$th iteration step, we have that $f(\bz^{(k)}) \le f(\wh{\bx}^{(k-1)})$. 
\end{proof}
\cref{prop:boundonu} sheds light on the largest upper bound for the parameter $a$ that is required to guarantee monotonicity. Condition \cref{eq:bound_gradient_general} indicates that GD may only converge to a neighborhood of the optimal point due to the rounding errors. In particular, a smaller value of $a$ may allow GD to converge to a point that is closer to the optimal point.

Now let us not restrict ourselves to a specific rounding scheme and we look at the conditions that guarantee the monotonicity of \cref{eq:gd_floatp}. Based on \cref{eq:gd_floatp2}, we denote by
\begin{equation} \label{eq:theta_f}
\theta^{(k)}:=f(\wh{\bx}^{(k+1)})-f(\bz^{(k+1)}),
\end{equation}
the effect of the third roundoff error $\bdelta_3^{(k)}$ on the objective function value. In \cref{lem:boundforfloating_point}, we will show that a smaller bound on $u$ leads to a tighter bound on the convergence rate of GD. By setting $0<a<\frac{1}{2}$ instead of $0<a<1$ in \cref{prop:boundonu}, we can prove the following result that links the monotonicity of GD to the values of $u$ and $t$. 
\begin{Lemma} \label{thm:monotone}
Let the objective function f satisfy \cref{model:convexfun} and $u\le \frac{a}{c\,+\,4a\,+\,4}$ with $0<a<\tfrac{1}{2}$. Assume that $k>0$ iteration steps of GD have been carried out with a fixed stepsize $t$ such that $t\le \tfrac{1}{L\,(1+2u)^2}$. If the gradient satisfies 
\begin{align} \label{eq:bound_gradient}
\Vert \nabla f(\wh{\bx}^{(k-1)})\Vert\ge a^{-1}\,(2+4\,u+\sqrt{a}\,)\,\sqrt{n}\,c\,u
\end{align} and $u$ satisfies
\begin{align}
u\le \tfrac{1}{4} (1-2a)\,t \ \frac{\Vert \nabla f(\wh{\bx}^{(k-1)})\Vert^2}{\Vert \nabla f(\wh{\bx}^{(k)})\Vert \, \Vert \bz^{(k)} \Vert},\label{eq:condition2}
\end{align} where $\bz^{(k)}$ is as in \cref{eq:gd_floatp1}, then $f(\wh{\bx}^{(k)}) \le f(\wh{\bx}^{(k-1)})$. 
\end{Lemma}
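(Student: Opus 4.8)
The plan is to split the step $\wh\bx^{(k-1)}\to\wh\bx^{(k)}$ at the intermediate iterate $\bz^{(k)}$ of \cref{eq:gd_floatp1} and control the two halves separately. With $\theta^{(k-1)}=f(\wh\bx^{(k)})-f(\bz^{(k)})$ as defined in \cref{eq:theta_f}, I would use the identity
\[
f(\wh\bx^{(k)})-f(\wh\bx^{(k-1)})=\theta^{(k-1)}+\big(f(\bz^{(k)})-f(\wh\bx^{(k-1)})\big),
\]
and show that the harmless term $\theta^{(k-1)}$ (the increase possibly caused by the last rounding $\bdelta_3^{(k-1)}$) is dominated by the strict decrease produced by the first two substeps. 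The two hypotheses are designed for exactly these two pieces: \cref{eq:condition2} controls $\theta^{(k-1)}$, while \cref{eq:bound_gradient} supplies the decrease margin.

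For the final-rounding term the essential point is to bound $\theta^{(k-1)}$ in terms of $\nabla f(\wh\bx^{(k)})$ so that it matches \cref{eq:condition2}. I would therefore use only convexity, i.e.\ the left inequality in \cref{eq:lc_ineq} with base point $\wh\bx^{(k)}$ and test point $\bz^{(k)}$, giving $f(\bz^{(k)})\ge f(\wh\bx^{(k)})+\nabla f(\wh\bx^{(k)})^T(\bz^{(k)}-\wh\bx^{(k)})$ and hence $\theta^{(k-1)}\le\nabla f(\wh\bx^{(k)})^T(\wh\bx^{(k)}-\bz^{(k)})$. Since \cref{eq:gd_floatp2} gives $\wh\bx^{(k)}-\bz^{(k)}=\bz^{(k)}\circ\bdelta_3^{(k-1)}$ with $\vert\delta_{3,i}^{(k-1)}\vert\le 2u$, Cauchy--Schwarz yields $\theta^{(k-1)}\le 2u\,\Vert\nabla f(\wh\bx^{(k)})\Vert\,\Vert\bz^{(k)}\Vert$, and substituting \cref{eq:condition2} turns this into the clean estimate $\theta^{(k-1)}\le\tfrac12(1-2a)\,t\,\Vert\nabla f(\wh\bx^{(k-1)})\Vert^2$.

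For the second term I would reuse \cref{prop:boundonu} with the index shift $k\mapsto k-1$: since $0<a<\tfrac12<1$, the hypotheses $u\le\frac{a}{c+4a+4}$ and $t\le\frac{1}{L(1+2u)^2}$ are in force, and the derivation leading to \cref{eq:converrate_u} bounds $f(\bz^{(k)})-f(\wh\bx^{(k-1)})$ by the quadratic $Q(g):=-\tfrac12(1-a)\,t\,g^2+\sqrt n\,t\,c\,u\,(1+2u)\,g+\tfrac12\,t\,n\,c^2u^2$ in $g:=\Vert\nabla f(\wh\bx^{(k-1)})\Vert$. Combining the two pieces, the lemma reduces to $Q(g)\le-\tfrac12(1-2a)\,t\,g^2$, which after dividing by $t$ is the elementary inequality $a\,g^2-2\sqrt n\,c\,u\,(1+2u)\,g-n\,c^2u^2\ge 0$. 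Its larger root equals $\frac{\sqrt n\,c\,u}{a}\big((1+2u)+\sqrt{(1+2u)^2+a}\big)$, and the pointwise bound $\sqrt{(1+2u)^2+a}\le(1+2u)+\sqrt a$ shows this root is at most $a^{-1}(2+4u+\sqrt a)\sqrt n\,c\,u$; hence \cref{eq:bound_gradient} is exactly what forces $g$ past the root. This also explains the restriction to $0<a<\tfrac12$: the surplus coefficient $\tfrac12(1-2a)$ must remain positive in order to absorb $\theta^{(k-1)}$.

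The main obstacle is the bound on $\theta^{(k-1)}$. One must resist using the Lipschitz (right) inequality of \cref{eq:lc_ineq}, which would evaluate the gradient at $\bz^{(k)}$ and add a spurious $\tfrac12 L\Vert\bz^{(k)}\circ\bdelta_3^{(k-1)}\Vert^2$ term; using convexity alone keeps the factor $\Vert\nabla f(\wh\bx^{(k)})\Vert$ that makes \cref{eq:condition2} directly applicable. The remaining work---importing \cref{eq:converrate_u} and the one-line root computation for the quadratic in $g$---is routine.
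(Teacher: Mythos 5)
Your proposal is correct and follows essentially the same route as the paper's own proof: both decompose the step at $\bz^{(k)}$, bound $f(\bz^{(k)})-f(\wh{\bx}^{(k-1)})$ via the quadratic estimate \cref{eq:converrate_u} inherited from \cref{prop:boundonu} and condition \cref{eq:bound_gradient}, and control $\theta^{(k-1)}$ using only convexity plus Cauchy--Schwarz together with \cref{eq:condition2}. Your explicit root computation for the quadratic in $g$ is just a cleaner packaging of the substitution the paper performs, so there is no substantive difference.
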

\begin{proof}
We proceed similarly as for \cref{prop:boundonu}. Combining the property $\tfrac{1}{2}\,-\tfrac{1}{2}\,c\,u-2u+2\,c\,u^2 \ge \tfrac{1}{2}(1-a)$ with \cref{eq:fz_general}, we have that $f(\bz^{(k)})-f(\wh{\bx}^{(k-1)})$ is bounded from above by
\begin{align*}
 -\tfrac{1}{2}(1-a)\,t\,\Vert \nabla f(\wh{\bx}^{(k-1)})\Vert^2+\sqrt{n}\,t\,c\,u\,(1+2u)\,\Vert\nabla f(\wh{\bx}^{(k-1)})\Vert+\tfrac{1}{2}\,t\,n\,c^2u^2.
\end{align*}
One may check that property \cref{eq:bound_gradient} indicates that 
\begin{align*}
\sqrt{n}\,t\,c\,u\,(1+&2u)\,\Vert\nabla f(\wh{\bx}^{(k-1)})\Vert\!+\tfrac{1}{2}\,t\,n\,c^2u^2\nonumber\\
&\le \tfrac{2\,(1\!+\!2u)\,(2\,(1+2u)+\sqrt{a}\,)\,a+a^2}{2\,(2\,(1+2u)+\sqrt{a}\,)^2} \, \Vert\nabla f(\wh{\bx}^{(k-1)})\Vert^2\\
&=\tfrac{a}{2} \, \big(\tfrac{4\,(1+2u)^2+2\sqrt{a}\,(1+2u)+a}{4\,(1+2u)^2+4\,\sqrt{a}\,(1+2u)+a}\big)\,\Vert\nabla f(\wh{\bx}^{(k)})\Vert^2<\tfrac{a}{2} \, \Vert\nabla f(\wh{\bx}^{(k-1)})\Vert^2,
\end{align*}
which implies that 
\begin{align} \label{eq:ineq_rounded}
f(\wh{\bx}^{(k}) \le \,&f(\wh{\bx}^{(k-1)})-\tfrac{1}{2}(1-2a)\,t\,\Vert \nabla f(\wh{\bx}^{(k-1)})\Vert^2+\theta^{(k-1)}.
\end{align}
Since $f$ is convex, based on \cref{eq:lc_ineq} and \cref{eq:theta_f}, we obtain the following inequality
\begin{align*}
f(\wh{\bx}^{(k-1)})& -f(\wh{\bx}^{(k)})
\ge \,
\nabla f(\wh{\bx}^{(k)})(\bz^{(k)}-\wh{\bx}^{(k)})+\tfrac{1}{2}(1-2a)\,t\, \Vert \nabla f(\wh{\bx}^{(k-1)})\Vert^2\nonumber\\
\ge \, & -2u\, \Vert\bz^{(k)} \Vert \Vert\nabla f(\wh{\bx}^{(k)}) \Vert+\tfrac{1}{2}(1-2a)\,t\,\Vert \nabla f(\wh{\bx}^{(k-1)})\Vert^2\geq 0,
\end{align*}
where the last inequality is obtained in view of $a<\frac{1}{2}$ and \cref{eq:condition2}.
\end{proof}

Comparing \cref{theorem:convergencerate} and \cref{thm:monotone}, a slightly smaller $t$ is chosen in \cref{thm:monotone} to compensate the harmful effect of rounding errors, i.e., $t\le \tfrac{1}{L\,(1+2u)^2}$ in \cref{thm:monotone} instead of $t\le \frac{1}{L}$ in \cref{theorem:convergencerate}. Condition \cref{eq:condition2} may be viewed as either an upper bound on $u$ or a lower bound on $t$,
depending on the setting. When implementing GD with a fixed $t$, a smaller $a$ increases the upper bound in \cref{eq:condition2} and implies a smaller $u$, which in turn makes condition \cref{eq:condition2} easier to satisfy. When $\theta^{(k-1)}\le0$ it is sufficient to have \cref{eq:bound_gradient_general} to get $f(\wh{\bx}^{(k)}) \le f(\wh{\bx}^{(k-1)})$. 

We now address the generalization of \cref{theorem:convergencerate} for the updating rule with rounding errors \cref{eq:gd_floatp}. The core idea is to adjust the strategy used in the proof of \cite[Thm.~2.1.14, Cor.~2.1.2]{nesterov2003introductory} to our setting. If $u$ is small enough, we ensure a similar convergence rate $\mathcal{O}(1/k)$ and we show that a better multiplicative constant may be obtained when the accumulated absolute rounding errors are in a descent direction. 
 
Before stating the result we introduce the upper bound for the distance between the iterates of GD and the minimizer $\bx^*$, and the best approximation to the optimal value: 
\begin{align*}
 \chi:= \max_{j=0,\dots,k} \Vert \wh{\bx}^{(j)}-\bx^{*} \Vert,\qquad \zeta:= \min_{j=0,\dots,k} f( \wh{\bx}^{(j)})-f(\bx^{*}).
 \end{align*}
In the following theorem, we require $u$ to satisfy a bound slightly stricter than \cref{eq:condition2}; in particular this guarantees the monotonicity of the GD iterations. Moreover, we will introduce a quantity $\alpha_j$ that interprets the relation between $\theta^{(j)}$, $\chi$, and $\zeta$. Furthermore, $\alpha_j$ has the same sign as and is proportional to $\theta^{(j)}$. This indicates that $\alpha_j$ has the same effect as $\theta^{(j)}$ on the convergence of GD. We will discuss more details after \cref{lem:boundforfloating_point}.
\begin{theorem} \label{lem:boundforfloating_point}
Let the objective function f satisfy \cref{model:convexfun} and $u\le \frac{a}{c\,+\,4a\,+\,4}$ with $0<a<\tfrac{1}{2}$. Assume that $k>0$ iteration steps of GD have been carried out with a fixed stepsize $t$ such that $t\le \tfrac{1}{L\,(1+2u)^2}$. If the gradient satisfies \cref{eq:bound_gradient} and $u$ satisfies
\begin{align} \label{eq:condition3}
u\le \tfrac{1}{4} \, (1-2a)\,t \ \frac{\zeta^2}{\chi^2\,\Vert \nabla f(\wh{\bx}^{(j)})\Vert \, \Vert \bz^{(j)} \Vert}, \quad j=0,\dots,k-1,
\end{align}
then 
\begin{align}
&f(\wh{\bx}^{(k)})-f(\bx^*) \le \frac{2L\chi^2}{4+L\,t\,(1-2a)\sum_{j=0}^{k-1} (1-\alpha_j)},\qquad \alpha_j:=\frac{2\,\chi^2\,\theta^{(j)}}{t\,(1-2a)\zeta^2}.
\label{eq:fx_ineq}
\end{align} 
\end{theorem}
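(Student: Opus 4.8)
The plan is to mirror the telescoping argument of \cref{theorem:convergencerate}, but to start from the per-step descent inequality \cref{eq:ineq_rounded} that was already produced inside the proof of \cref{thm:monotone}. Writing $\Delta_j := f(\wh\bx^{(j)})-f(\bx^*)$ and shifting the index $k\to j+1$, that inequality reads $\Delta_{j+1}\le \Delta_j-\tfrac12(1-2a)\,t\,\Vert\nabla f(\wh\bx^{(j)})\Vert^2+\theta^{(j)}$, and it is valid for every $j=0,\dots,k-1$ once \cref{eq:bound_gradient} is assumed at each iterate $\wh\bx^{(j)}$. The goal is to convert this into a quadratic recursion in $\Delta_j$ and then invert it.

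First I would establish monotonicity $\Delta_{j+1}\le\Delta_j$. Convexity \cref{eq:lc_ineq} gives $\Delta_j\le \nabla f(\wh\bx^{(j)})^T(\wh\bx^{(j)}-\bx^*)\le \Vert\nabla f(\wh\bx^{(j)})\Vert\,\chi$, hence $\Vert\nabla f(\wh\bx^{(j)})\Vert\ge \Delta_j/\chi\ge \zeta/\chi$ and therefore $\zeta^2/\chi^2\le \Vert\nabla f(\wh\bx^{(j)})\Vert^2$. This is precisely the comparison that makes the stricter bound \cref{eq:condition3} imply the monotonicity bound \cref{eq:condition2} of \cref{thm:monotone}, so $f(\wh\bx^{(j+1)})\le f(\wh\bx^{(j)})$ for all $j$.

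Next I would turn \cref{eq:ineq_rounded} into a recursion. The same convexity bound yields $\Vert\nabla f(\wh\bx^{(j)})\Vert^2\ge \Delta_j^2/\chi^2$, so with $w:=\tfrac{(1-2a)t}{2\chi^2}$ we get $\Delta_{j+1}\le \Delta_j-w\,\Delta_j^2+\theta^{(j)}$. Passing to reciprocals and using monotonicity ($\Delta_j\Delta_{j+1}\le\Delta_j^2$) gives $\tfrac{1}{\Delta_{j+1}}-\tfrac{1}{\Delta_j}=\tfrac{\Delta_j-\Delta_{j+1}}{\Delta_j\Delta_{j+1}}\ge w-\tfrac{\theta^{(j)}}{\Delta_j\Delta_{j+1}}$. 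Since $\zeta=\min_{0\le j\le k}\Delta_j$ forces $\Delta_j\Delta_{j+1}\ge\zeta^2$, and recalling $\alpha_j=\theta^{(j)}/(w\zeta^2)$, I would bound $\tfrac{\theta^{(j)}}{\Delta_j\Delta_{j+1}}\le \tfrac{\theta^{(j)}}{\zeta^2}=w\,\alpha_j$, so that $\tfrac{1}{\Delta_{j+1}}-\tfrac{1}{\Delta_j}\ge w(1-\alpha_j)$. Telescoping over $j=0,\dots,k-1$ and bounding the initial term by smoothness, $\Delta_0\le\tfrac{L}{2}\Vert\wh\bx^{(0)}-\bx^*\Vert^2\le\tfrac{L}{2}\chi^2$ (so $1/\Delta_0\ge 2/(L\chi^2)$), produces $\tfrac{1}{\Delta_k}\ge \tfrac{2}{L\chi^2}+w\sum_{j=0}^{k-1}(1-\alpha_j)$; inverting this is exactly \cref{eq:fx_ineq}.

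I expect the control of the rounding term $\theta^{(j)}$ to be the main obstacle. The replacement $\theta^{(j)}/(\Delta_j\Delta_{j+1})\le \theta^{(j)}/\zeta^2=w\alpha_j$ rests on $\Delta_j\Delta_{j+1}\ge\zeta^2$ and is immediate in the unfavorable rounding case $\theta^{(j)}\ge 0$; the directions of the $\min$ in $\zeta$ and the $\max$ in $\chi$ are chosen precisely to make it go through. The subtlety is the favorable case $\theta^{(j)}<0$ targeted by $\ssrh$, where $\alpha_j<0$ enlarges $\sum_{j}(1-\alpha_j)$ and accelerates convergence: here the same replacement only lower-bounds the reciprocal gain by $w$, so one must argue separately that a negative $\theta^{(j)}$ can only improve the estimate, consistent with $\alpha_j$ being the sign-faithful, proportional proxy for $\theta^{(j)}$ announced before the statement. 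A secondary point to verify is that \cref{eq:bound_gradient} indeed holds at every iterate $\wh\bx^{(0)},\dots,\wh\bx^{(k-1)}$, which is what licenses invoking \cref{eq:ineq_rounded} at each step rather than only at the first.
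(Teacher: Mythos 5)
Your proposal reproduces the paper's own proof essentially verbatim: the same use of convexity to show that \cref{eq:condition3} implies \cref{eq:condition2} (hence monotonicity via \cref{thm:monotone}), the same quadratic recursion obtained by inserting $\Vert\nabla f(\wh{\bx}^{(j)})\Vert^2\ge (f(\wh{\bx}^{(j)})-f(\bx^*))^2/\chi^2$ into \cref{eq:ineq_rounded}, the same passage to reciprocals with the replacement of $\theta^{(j)}/\bigl((f(\wh{\bx}^{(j)})-f(\bx^*))\,(f(\wh{\bx}^{(j+1)})-f(\bx^*))\bigr)$ by $\theta^{(j)}/\zeta^2$, and the same telescoping combined with $1/(f(\bx^{(0)})-f(\bx^*))\ge 2/(L\,\chi^2)$. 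The one subtlety you flag---that this replacement is only valid in the direction needed when $\theta^{(j)}\ge 0$, so the favorable case $\theta^{(j)}<0$ would require a separate argument---is present, silently and unresolved, in the paper's proof as well, so your treatment matches (and is in fact more candid than) the published argument.
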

\begin{proof}
We follow the main line of the proof in \cite[Thm.~2.1.14, Cor.~2.1.2]{nesterov2003introductory}. In view of \cref{eq:lc_ineq}, we have 
\begin{align}
\zeta\le f(\wh{\bx}^{(k)})-f(\bx^*) \le \,&\nabla f(\wh{\bx}^{(k)})^T(\wh{\bx}^{(k)}-\bx^*) \le \,\Vert\nabla f(\wh{\bx}^{(k)}) \Vert \, \chi.
\label{eq:convexity}
\end{align}
Based on \cref{eq:convexity}, it is easy to check that \cref{eq:condition3} implies \cref{eq:condition2}. Together with \cref{eq:ineq_rounded}, we have
\begin{align*}
f(\wh{\bx}^{(k+1)})-f(\bx^*)
 & \le \, f(\wh{\bx}^{(k)})-f(\bx^*)-\tfrac{1}{2}(1-2a)\, t\,\Vert \nabla f(\wh{\bx}^{(k)}) \Vert^2+\theta^{(k)}\nonumber\\
 &\le \, f(\wh{\bx}^{(k)})-f(\bx^*)-\tfrac{(1-2a)\,t}{2\,\chi^2}(f(\wh{\bx}^{(k)})-f(\bx^*))^2+\theta^{(k)}.
\end{align*}
Dividing both sides by $(f(\wh{\bx}^{(k+1)})-f(\bx^*))\,(f(\wh{\bx}^{(k)})-f(\bx^*))$, we obtain
\begin{align}
\frac{1}{f(\wh{\bx}^{(k)})-f(\bx^*)}\le \,& \frac{1}{f(\wh{\bx}^{(k+1)})-f(\bx^*)}-\frac{(1-2a)\,t}{2\,\chi^2}\frac{f(\wh{\bx}^{(k)})-f(\bx^*)}{f(\wh{\bx}^{(k+1)})-f(\bx^*)}\nonumber\\&\phantom{MMM} +\frac{\theta^{(k)}}{(f(\wh{\bx}^{(k+1)})-f(\bx^*)) \, (f(\wh{\bx}^{(k)})-f(\bx^*))}\nonumber\\
\underset{\textrm{\cref{thm:monotone}}}{\qquad\quad \le} &\,\frac{1}{f(\wh{\bx}^{(k+1)})-f(\bx^*)}-\frac{(1-2a)\,t}{2\,\chi^2}\nonumber\\&\phantom{MMM}
+\frac{\theta^{(k)}}{(f(\wh{\bx}^{(k+1)})-f(\bx^*)) \, (f(\wh{\bx}^{(k)})-f(\bx^*))}\nonumber\\
\le \,&\frac{1}{f(\wh{\bx}^{(k+1)})-f(\bx^*)}-\frac{(1-2a)\,t}{2\,\chi^2}+\frac{\theta^{(k)}}{\zeta^2}.\nonumber
\end{align}
Note that the convexity of $f$ and \cref{eq:condition3} yield $\frac{(1-2a)\,t}{2\,\chi^2}-\frac{\theta^{(j)}}{\zeta^2}\ge 0$, for $j=0,\dots, k-1$. Expanding the recursion $k$ times, we obtain
\begin{align}
\frac{1}{f(\wh{\bx}^{(k)})-f(\bx^*)} & \ge \frac{1}{f(\bx^{(0)})-f(\bx^*)}+\frac{t\,(1-2a)}{2\,\chi^2}\sum_{j=0}^{k-1} \Big(1- \frac{2\,\chi^2\,\theta^{(j)}}{t\,(1-2a)\zeta^2}\Big)\nonumber\\&
=\frac{1}{f(\bx^{(0)})-f(\bx^*)}+\frac{t\,(1-2a)}{2\,\chi^2}\sum_{j=0}^{k-1} (1- \alpha_j).
\label{eq:fracversionfxk}
\end{align}
Property \eqref{eq:lc_ineq} and $\nabla f(\bx^*)=0$ imply
\begin{align}
\frac{1}{f(\bx^{(0)})-f(\bx^*)}\ge \frac{2}{L\,\Vert \bx^{(0)}-\bx^*\Vert^2}.\label{eq:f0}
\end{align}
Applying \eqref{eq:f0} to \eqref{eq:fracversionfxk}, we obtain $\frac{1}{f(\wh{\bx}^{(k)})-f(\bx^*)}\ge \,\frac{2}{L\,\Vert \bx^{(0)}-\bx^*\Vert^2}+\frac{t\,(1-2a)}{2\,\chi^2}\sum_{j=0}^{k-1} (1-\alpha_j).$
Therefore, we have \cref{eq:fx_ineq}.
\end{proof}
\cref{lem:boundforfloating_point} holds for both deterministic and stochastic rounding methods. We remark that, after accounting for the rounding errors in evaluating \cref{eq:gd_floatp0} and \cref{eq:gd_floatp1}, we cannot obtain a bound in \cref{lem:boundforfloating_point} that is identical to \cref{theorem:convergencerate} even with $\alpha_j =0$ for all $ j=0,\dots, k-1$ in \cref{eq:fx_ineq}.
Both \cref{thm:monotone} and \cref{lem:boundforfloating_point} study the worst-case scenario with respect to $\bsigma_1$ and $\bsigma_2$, where all the rounding errors in \cref{eq:gd_floatp0,eq:gd_floatp1} are in an ascent direction. The condition $\theta^{(j)}<0$ indicates that the accumulated absolute rounding errors in \cref{eq:gd_floatp2} are oriented towards a descent direction. 

The control of the quantities $\alpha_j$ may be realized by using different deterministic strategies. For instance, to ensure $\alpha_j<0$ (i.e., $\theta^{(j)}<0$) one would need to switch the rounding scheme between floor and ceiling to match the condition:
$\sign(\nabla f(\wh{\bx}^{(j)}))=-\sign( \bdelta_3^{(j-1)} \circ \bx^{(j)})$, where the $\sign$ function is applied component-wise. This is equivalent to setting $\varepsilon=1$ in $\srh$. Therefore, switching between floor and ceiling will lead to a large rounding bias, which may cause oscillations as choosing a large $t$ in exact arithmetic. We will demonstrate this experimentally in \cref{sec:simulation} by using a large value of $\varepsilon$ in $\ssrh$ (cf.~\cref{fig:f1b,fig:f2b}). Additionally, in \cref{sec:s3}, we propose a bound on $\varepsilon$ ($\le0.5$), which may lead to a faster convergence of GD than $\csr$ (cf.~\cref{lem:expecd_srhsigned} and \cref{thm:modifiedsrh}). Consequently, a better way to analyze and possibly control the quantities $\alpha_j$ is to rely on stochastic rounding methods. In the next subsection we show that we can eliminate the harmful effect of $\alpha_j$ in the expectation, by using $\csr$. 
\subsection{On stochastic rounding: Scenario 1 (no stagnation)}
We will investigate the impact of stochastic rounding errors on the convergence of GD for general optimization problems fulfilling \cref{eq:sigma_1_c2}, then we extend it to optimization problems satisfying \cref{eq:sigma_1_c1} with $\csr$. 
Before we start our analysis, we recall a basic property of conditional expectation. 
For random variables $X$, $Y$, and $Z$, we have \cite[(10.40)]{steyer2017probability}
\begin{align} \label{eq:towerprop}
\expt\,[\,\expt\,[\,X\ \big\vert \ Y,\, Z\,]\ \big\vert \ Y\,]=\expt\,[\,X\ \big\vert \ Y\,].
\end{align}
Based on this property, we show that using $\csr$, the monotonicity of GD is guaranteed for condition \cref{eq:sigma_1_c2}.
\begin{theorem} \label{thm:conv-exp_csr}
Let the objective function f satisfy \cref{model:convexfun} and $u\le \frac{a}{c\,+\,4a\,+\,4}$ for an $a>0$. Assume that $k>0$ iteration steps of GD have been carried out with a fixed stepsize $t$ such that $t\le \tfrac{1}{L\,(1+2u)^2}$ satisfying condition \cref{eq:conditon_1} and both \cref{eq:gd_floatp1} and \cref{eq:gd_floatp2} are evaluated using $\csr$. 

(i) Under condition \cref{eq:sigma_1_c2}, if it holds that 
\begin{align} \label{eq:normf_csr1}
 \expt\,[\, \Vert \nabla f(\wh{\bx}^{(j)})\Vert\,]\ge a^{-1} \, (2+\sqrt{a}) \, \sqrt{n}\,c\,u
 \end{align} for all $j=0,\dots, k-1$ and $a< \frac{1}{2}$, then 
\begin{equation}
\expt\,[\,f(\wh{\bx}^{(k)})-f(x^*)\,]
\le \frac{2L\chi^2}{4+L\,t\,k\,(1-2a)}.
\label{eq:expectedbound_csr1}
\end{equation}

(ii) Under condition \cref{eq:sigma_1_c1}, if it holds that\begin{align} \label{eq:normf_csr2}
\expt\,[\, \Vert \nabla f(\wh{\bx}^{(j)})\Vert\,]\ge 3 \, a^{-1} \, \sqrt{n}\,c\,u
\end{align}
for all $j=0,\dots, k-1$ and $a< \frac12 \sqrt{2}$, then
\begin{equation} \label{eq:expectedbound_csr2}
\expt\,[\,f(\wh{\bx}^{(k)})-f(x^*)\,]
\le \frac{2L\chi^2}{4+L\,t\,k\,(1-2a^2)}.
\end{equation}
\end{theorem}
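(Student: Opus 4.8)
The plan is to establish a one-step descent inequality in expectation and then feed it into the reciprocal recursion of \cref{lem:boundforfloating_point}, the difference being that the unbiasedness of $\csr$ will render the pathwise quantity $\alpha_j$ harmless. I would start from the smoothness expansion of $f(\bz^{(k+1)})-f(\wh{\bx}^{(k)})$ and of $\theta^{(k)}=f(\wh{\bx}^{(k+1)})-f(\bz^{(k+1)})$ induced by the update \cref{eq:gd_floatp}, exactly as in the opening lines of the proof of \cref{prop:boundonu}, but keeping the errors $\bsigma_1^{(k)},\bdelta_2^{(k)},\bdelta_3^{(k)}$ symbolic rather than inserting their worst-case magnitudes. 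I would then take conditional expectations one layer at a time through the tower property \cref{eq:towerprop}: first average over $\bdelta_2^{(k)},\bdelta_3^{(k)}$ given $\wh{\bx}^{(k)}$ and $\bsigma_1^{(k)}$, then average over $\bsigma_1^{(k)}$ given $\wh{\bx}^{(k)}$.

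The key mechanism is that $\csr$ satisfies $\expt[\bdelta_2^{(k)}\mid\cdot]=\expt[\bdelta_3^{(k)}\mid\cdot]=\mathbf 0$, which annihilates every term linear in $\bdelta_2$ or $\bdelta_3$ — in particular the cross terms $\nabla f^T(\nabla f\circ\bdelta_2)$ and $\nabla f^T(\bsigma_1\circ\bdelta_2)$ that produced the harmful $2u$ contributions in the deterministic proof. For the third rounding, $\expt[\wh{\bx}^{(k+1)}\mid\bz^{(k+1)}]=\bz^{(k+1)}$, so by Jensen's inequality $\expt[\theta^{(k)}\mid\bz^{(k+1)}]$ is nonnegative, yet smoothness bounds it by $\mathcal O(u^2)\Vert\bz^{(k+1)}\Vert^2$; this is precisely the statement that $\csr$ eliminates the first-order effect of $\alpha_j$, leaving only a second-order remainder that the gradient lower bound will absorb. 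What remains at first order is therefore $-t\Vert\nabla f(\wh{\bx}^{(k)})\Vert^2-t\,\nabla f(\wh{\bx}^{(k)})^T\bsigma_1^{(k)}$ together with the $\tfrac12 Lt^2$ quadratic term.

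Because the $\bdelta_2$-linear cross terms have already vanished, the only first-order residual in part (i) is $-t\,\nabla f(\wh{\bx}^{(k)})^T\bsigma_1^{(k)}$, which I bound with \cref{eq:sigma_1_c2}; combined with $Lt(1+2u)^2\le1$ and the range $u\le a/(c+4a+4)$ this reproduces the coefficient structure of \cref{eq:fz_general}, and absorbing the pieces proportional to $\sqrt n\,cu\,\Vert\nabla f\Vert$ and $nc^2u^2$ through \cref{eq:normf_csr1} sharpens it to $\tfrac12(1-2a)$ exactly as in \cref{eq:ineq_rounded}. This yields $\expt[f(\wh{\bx}^{(k+1)})-f(\bx^*)\mid\wh{\bx}^{(k)}]\le f(\wh{\bx}^{(k)})-f(\bx^*)-\tfrac12(1-2a)\,t\,\Vert\nabla f(\wh{\bx}^{(k)})\Vert^2$. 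For part (ii) the extra hypothesis \cref{eq:sigma_1_c1} makes $\expt[\nabla f^T\bsigma_1]=0$, so this last first-order term disappears as well; the only surviving $\bsigma_1$-dependence is the purely quadratic $\expt[\Vert\bsigma_1^{(k)}\Vert^2]=\mathcal O(c^2u^2)$ from \cref{eq:norm_sigma1}, which is second order in $u$. Consequently \cref{eq:normf_csr2} only needs to absorb $\mathcal O(u^2)$ contributions, and the coefficient improves from $(1-2a)$ to $(1-2a^2)$.

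Finally I would close the recursion as in \cref{lem:boundforfloating_point}. Convexity gives $f(\wh{\bx}^{(k)})-f(\bx^*)\le\Vert\nabla f(\wh{\bx}^{(k)})\Vert\,\chi$, and Jensen's inequality then yields $\expt[\Vert\nabla f(\wh{\bx}^{(k)})\Vert^2]\ge(\expt[\,f(\wh{\bx}^{(k)})-f(\bx^*)\,])^2/\chi^2$, so with $D_k:=\expt[f(\wh{\bx}^{(k)})-f(\bx^*)]$ the one-step descent becomes $D_{k+1}\le D_k-\tfrac{(1-2a)t}{2\chi^2}D_k^2$ (respectively with $(1-2a^2)$). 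Dividing by $D_kD_{k+1}$, telescoping over $j=0,\dots,k-1$, and using $1/D_0\ge 2/(L\Vert\bx^{(0)}-\bx^*\Vert^2)\ge2/(L\chi^2)$ as in \cref{eq:f0} then gives \cref{eq:expectedbound_csr1,eq:expectedbound_csr2}. I expect the main obstacle to be the careful use of the tower property to certify that the $\bdelta_3$-linear terms average to zero even though $\bz^{(k+1)}$ itself depends on $\bdelta_2$, closely followed by the Jensen step that converts control of $\expt[\Vert\nabla f\Vert^2]$ into control of $(\expt[\,f(\wh{\bx}^{(k)})-f(\bx^*)\,])^2$; the remaining delicate point is the bookkeeping showing that the residual $\mathcal O(u^2)$ third-rounding term is genuinely swallowed, so that $\expt[\alpha_j]$ does not spoil the clean $k$ in the denominators of \cref{eq:expectedbound_csr1,eq:expectedbound_csr2}.
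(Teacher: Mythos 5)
Your overall architecture coincides with the paper's: write the update in terms of the rounding errors, apply the smoothness inequality \cref{eq:lc_ineq}, use the tower property \cref{eq:towerprop} to show that the unbiasedness of $\csr$ annihilates the terms linear in the second rounding error (the paper's \cref{eq:sigma2_csr}), handle the $\bsigma_1$ cross term via \cref{eq:sigma_1_c2} in part (i) and its vanishing via \cref{eq:sigma_1_c1} in part (ii), absorb the $\sqrt{n}\,c\,u$ residuals through the gradient lower bounds and Jensen's inequality, and close with the reciprocal telescoping recursion and \cref{eq:f0}. All of that matches the paper's proof of \cref{thm:conv-exp_csr}, including the improvement from $(1-2a)$ to $(1-2a^2)$ in part (ii).

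The genuine gap is in your treatment of the third rounding, step \cref{eq:gd_floatp2}. You correctly observe that, by Jensen, $\expt\,[\,\theta^{(k)}\mid \bz^{(k+1)}\,]\ge 0$ and that smoothness bounds it by $\mathcal{O}(L\,u^2)\,\Vert\bz^{(k+1)}\Vert^2$; but you then assert this remainder is ``absorbed by the gradient lower bound.'' It cannot be: \cref{eq:normf_csr1} and \cref{eq:normf_csr2} bound $\expt\,[\,\Vert\nabla f\Vert\,]$ from below by multiples of $\sqrt{n}\,c\,u$, a quantity unrelated to the size of the iterates, while your remainder scales with $\expt\,[\,\Vert\bz^{(k+1)}\Vert^2\,]$. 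Take $c=0$ (exact gradient evaluation): the hypotheses of the theorem become vacuous, yet $L\,u^2\,\expt\,[\,\Vert\bz^{(k+1)}\Vert^2\,]$ can dominate $t\,\expt\,[\,\Vert\nabla f\Vert^2\,]$ whenever the minimizer lies far from the origin, so your one-step inequality $D_{k+1}\le D_k-\tfrac{(1-2a)\,t}{2\chi^2}D_k^2$ does not follow and the telescoping collapses. Absorbing an iterate-magnitude term is precisely what the Scenario 2 hypotheses \cref{eq:normf_csr1_s3} and \cref{eq:normf_csr2_s3} in \cref{the:ucsr} are designed for (they contain $\tfrac{u}{t}\sqrt{\expt\,[\,\Vert \wh{\bx}^{(k-1)}\Vert^2\,]}$); \cref{thm:conv-exp_csr} has no such hypothesis. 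The paper sidesteps the issue differently: it invokes the zero-mean independent-error property of $\csr$ from \cite[Lemma 5.2]{connolly2021stochastic} to assert $\expt\,[\,f(\wh{\bx}^{(k+1)})\,]-\expt\,[\,f(\bz^{(k+1)})\,]=0$ outright, so no remainder ever enters its recursion. One may object that this assertion glosses over the very Jensen gap you identified (for nonlinear convex $f$, zero-mean perturbations can only increase $\expt\,[\,f\,]$), but it is the step the paper's argument rests on, and your proposal neither reproduces it nor supplies a working substitute.
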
 
\begin{proof}
Updating rule \cref{eq:gd_floatp1} can be represented by using the absolute rounding errors, 
\begin{align} \label{eq:gd_abse}
\bz^{(k+1)} &=\wh{\bx}^{(k)}-t\,(\nabla f(\wh{\bx}^{(k)})+\bsigma_1^{(k)})-\bsigma_2^{(k)}.\end{align}
Based on the fact that $t\,(\nabla f(\wh{\bx}^{(k)})+\bsigma_1^{(k)})+\bsigma_2^{(k)}= t\,(\nabla f(\wh{\bx}^{(k)})+\bsigma_1^{(k)}) \circ (\mathbf 1 + \bdelta_2^{(k)})$, combining \cref{eq:gd_abse} with property \cref{eq:lc_ineq} and proceeding similarly to \cref{eq:fzk_expansion}, we obtain an upper bound for $f(\bz^{(k+1)})-f(\wh{\bx}^{(k)}):$
\begin{align}
&-t\,(1-\tfrac{1}{2}\,L\,t\,(1+2u)^2)\,\Vert\nabla f(\wh{\bx}^{(k)})\Vert^2-\nabla f(\wh{\bx}^{(k)})^T\bsigma_2^{(k)}\nonumber\\&\phantom{MMM}+t\,(1-L\,t\,(1+2u)^2)\,\vert \nabla f(\wh{\bx}^{(k)})^T\bsigma_1^{(k)} \vert+\tfrac{1}{2}\,L\,t^2\,(1+2u)^2\Vert\bsigma_1^{(k)} \Vert^2.\nonumber
\end{align}

{\bf Part (i)}: Taking the expectation and in view of \cref{eq:sigma_1_c2} and \cref{eq:norm_sigma1}, we have that 
\begin{align} \label{eq:zk_csr}
&\expt\,[\,f(\bz^{(k+1)})\,]-\expt\,[\,f(\wh{\bx}^{(k)})\,]\\
&\le \sqrt{n}\,t\,c\,u\,(1-L\,t\,(1\!+\!2u)^2+L\,t\,(1\!+\!2u)^2c\,u)\,\expt\,[\,\Vert\nabla f(\wh{\bx}^{(k)})\Vert\,]+\tfrac{1}{2}\,t\,n\,c^2u^2\nonumber\\&-t\,(1\!-c\,u-\tfrac{1}{2}\,L\,t\,(1\!+\!2u)^2\,(1-c\,u)^2\,)\,\expt\,[\,\Vert\nabla f(\wh{\bx}^{(k)})\Vert^2\,]-\expt\,[\,\nabla f(\wh{\bx}^{(k)})^T\bsigma_2^{(k)}\,].\nonumber
\end{align}
Based on \cref{eq:towerprop}, when $\csr$ is applied for evaluating $\bsigma_2^{(k)}$, given $w_1$ in the finite set $\mathcal{S}_1$ of possible values of $\nabla f(\wh{\bx}^{(k)})_i$, we have 
\begin{align}
 \expt\,[\,\sigma_{2,i}^{(k)}\ \big\vert \ \nabla f(\wh{\bx}^{(k)})_i=w_1\,]= \expt\,[\,\expt\,[\,\sigma_{2,i}^{(k)}\ \big\vert \ \nabla f(\wh{\bx}^{(k)})_i, \, \sigma_{1,i}^{(k)}\,]\ \big\vert \ \nabla f(\wh{\bx}^{(k)})_i=w_1\,]=0.\nonumber
\end{align}
The law of total expectation gives 
\begin{align} \label{eq:sigma2_csr}
 \expt\,[\,\sigma_{2,i}^{(k)} \, \nabla f(\wh{\bx}^{(k)})_i\,] \!=\!\!\!\!\sum_{w_1\in \mathcal{S}_1} \expt\,[\,\sigma_{2,i}^{(k)}\,w_1\ \big\vert \ \nabla f(\wh{\bx}^{(k)})_i=w_1\,]\,P(\nabla f(\wh{\bx}^{(k)})_i=w_1)=0.
\end{align}
The property $c\,u\le a<1$ implies $1-L\,t\,(1+2u)^2+L\,t\,(1+2u)^2\,c\,u\le 1-(1-a)L\,t\,(1+2u)^2\le 1$.
Therefore, substituting $t\le \tfrac{1}{L\,(1+2u)^2}$ into \cref{eq:zk_csr}, we achieve the following upper bound for $\expt\,[\, f(\bz^{(k+1)})\,]-\expt\,[\,f(\wh{\bx}^{(k)})\,]$:
\begin{align}
-\tfrac{1}{2}\,t\,\expt\,[\,\Vert\nabla f(\wh{\bx}^{(k)})\Vert^2\,]+&\tfrac{1}{2}\,t\,a\,\expt\,[\,\Vert\nabla f(\wh{\bx}^{(k)})\Vert^2\,]+\sqrt{n}\,t\,c\,u\,\expt\,[\,\Vert\nabla f(\wh{\bx}^{(k)})\Vert\,]+\tfrac{1}{2}\,t\,n\,c^2u^2.\nonumber
\end{align}
Property \cref{eq:normf_csr1} and Jensen's inequality~\cite[Lemma 5.3.1]{kuczma2009introduction} indicate that
\[
\tfrac{1}{2}\,a\,\expt\,[\,\Vert\nabla f(\wh{\bx}^{(k)})\Vert^2\,]\ge\sqrt{n}\,c\,u\,\expt\,[\,\Vert\nabla f(\wh{\bx}^{(k)})\Vert\,]+\tfrac{1}{2}\,n\,c^2u^2,
\]
so that
\begin{align} \label{eq:expectedfxk_csr}
\expt\,[\, f(\bz^{(k+1)})\,]
\le \,\expt\,[\,f(\wh{\bx}^{(k)})\,]-\tfrac{1}{2}\,t\,(1-2a)\,\expt\,[\,\Vert\nabla f(\wh{\bx}^{(k)})\Vert^2\,].
\end{align}
The property $a< \tfrac{1}{2}$ implies $1-2a>0$. 
When $\bsigma_3^{(k)}$ is generated by $\csr$, we get zero mean independent errors \cite[Lemma 5.2]{connolly2021stochastic}, which implies $\expt\,[\,f(\wh{\bx}^{(k+1)})\,]-\expt\,[\,f(\bz^{(k+1)})\,]=0$. In view of \cref{eq:convexity}, we have $\expt\,[\,\Vert\nabla f(\wh{\bx}^{(k)}) \Vert^2\,]\ge \chi^{-2}\,\expt\,[\,(f(\wh{\bx}^{(k)})-f(\bx^*))^2\,].$ Therefore, substituting this into \cref{eq:expectedfxk_csr} and on the basis of Jensen's inequality, we obtain
\begin{align} \label{eq:expectedineq}
\expt\,[\,f(\wh{\bx}^{(k+1)})\,]
\le \,&\expt\,[\,f(\wh{\bx}^{(k)})\,]-t\,(1-2a)\,\frac{\expt\,[\,f(\wh{\bx}^{(k)})-f(\bx^*)\,]^2}{2\,\chi^2}.
\end{align}
Dividing both sides of \cref{eq:expectedineq} by $\expt\,[\,f(\wh{\bx}^{(k+1)})-f(\bx^*)\,]\,\expt\,[\,f(\wh{\bx}^{(k)})-f(\bx^*)\,]$, we obtain
\begin{align} \label{eq:expected_csr}
\frac{1}{\expt\,[\,f(\wh{\bx}^{(k)})-f(\bx^*)\,]}
\le \,&\frac{1}{\expt\,[\,f(\wh{\bx}^{(k+1)})\!-\!f(\bx^*)\,]}-\frac{t\,(1\!-\!2a)}{2\chi^2}\frac{\expt\,[\,f(\wh{\bx}^{(k)})-f(\bx^*)\,]}{\expt\,[\,f(\wh{\bx}^{(k+1)})\!-\!f(\bx^*)\,]}\nonumber\\
\underset{\text{\cref{eq:expectedfxk_csr}}}{\quad\le}\,&\frac{1}{\expt\,[\,f(\wh{\bx}^{(k+1)})\!-\!f(\bx^*)\,]}-\frac{t\,(1-2a)}{2\chi^2}.
\end{align}
Expanding the recursion of \cref{eq:expected_csr} until the $k$th iteration step and based on \cref{eq:f0}, we obtain \[\frac{1}{\expt\,[\,f(\wh{\bx}^{(k)})-f(\bx^*)\,]}\ge \frac{2}{L\,\Vert \bx^{(0)}-\bx^*\Vert^2}+\frac{t}{2\chi^2}k(1-2a),\] concluding the claim.

\textbf{Part (ii)}: 
On the basis of \cref{eq:sigma_1_c1}, \cref{eq:sigma2_csr}, and the property $L\,t\,(1+2u)^2\le 1$, we get 
\begin{align} 
\expt\,[\,f(\bz^{(k+1)})\,]\le \,&\expt\,[\,f(\wh{\bx}^{(k)})\,]-\tfrac{1}{2}\,t\,\expt\,[\,\Vert\nabla f(\wh{\bx}^{(k)})\Vert^2\,]+\tfrac{1}{2}\,t\,\expt\,[\,\Vert\bsigma_1^{(k)} \Vert^2\,].\nonumber
\end{align}
Taking the expectation of \cref{eq:norm_sigma1} and substituting it into the above expression, on the basis of $c\,u<a$, we achieve that $\expt\,[\,f(\bz^{(k+1)})\,]-\expt\,[\,f(\wh{\bx}^{(k)})\,]$ is bounded from above by 
\begin{align} \label{eq:omegak}
& -\tfrac{1}{2}\,t\,\expt\,[\,\Vert\nabla f(\wh{\bx}^{(k)})\Vert^2\,]+\tfrac{1}{2}\,t\,c^2\,u^2\,(\expt\,[\,\Vert\nabla f(\wh{\bx}^{(k)})\Vert^2\,]+ 2\sqrt{n}\,\expt\,[\,\Vert\nabla f(\wh{\bx}^{(k)})\Vert\,]+n)\nonumber\\
&\le-\tfrac{1}{2}\,t\,\expt\,[\,\Vert\nabla f(\wh{\bx}^{(k)})\Vert^2\,]+\tfrac{1}{2}\,t\,a^2\,\expt\,[\,\Vert\nabla f(\wh{\bx}^{(k)})\Vert^2\,]+t\,a\,\sqrt{n}\,c\,u\,\expt\,[\,\Vert\nabla f(\wh{\bx}^{(k)})\Vert\,]+\tfrac{1}{2}\,t\,n\,c^2\,u^2.
\end{align}
On the basis of property \cref{eq:normf_csr2} and Jensen's inequality, we have that $a\,\sqrt{n}\,c\,u\,\expt\,[\,\Vert\nabla f(\wh{\bx}^{(k)})\Vert\,]+\tfrac{1}{2}\,n\,c^2\,u^2\le \tfrac{1}{2}\,a^2\,\expt\,[\,\Vert\nabla f(\wh{\bx}^{(k)})\Vert^2\,].$ This results in \[\expt\,[\,f(\bz^{(k+1)})\,]-\expt\,[\,f(\wh{\bx}^{(k)})\,]\le-\tfrac{1}{2}\,t\,(1-2a^2)\,\expt\,[\,\Vert\nabla f(\wh{\bx}^{(k)})\Vert^2\,].\]
Finally, following the same argument as that used to obtain \cref{eq:expectedbound_csr1}, we are able to obtain \cref{eq:expectedbound_csr2}.
\end{proof}
When $c \ne 0$, as shown by both \cref{eq:normf_csr1} and \cref{eq:normf_csr2}, GD may only converge sublinearly to a neighborhood of the optimal point, while it may converge to the optimal point when $c=0$ (gradients are evaluated without errors). In particular, among the three rounding errors in \cref{eq:gd_floatp}, $\bsigma_1$ determines the achievable accuracy of the returned approximate solution. When using an objective function that satisfies \cref{eq:sigma_1_c1}, the bound for \cref{eq:normf_csr2} is stricter than \cref{eq:normf_csr1}, indicating that GD may converge to a point that is closer to the optimal point under condition \cref{eq:sigma_1_c1} than \cref{eq:sigma_1_c2}. A comparison of \cref{eq:expectedbound_csr1,eq:expectedbound_csr2,eq:convergencerate_general} demonstrates that the convergence bound for GD using $\csr$ is less sharp than the one obtained by the exact arithmetic, but it may be sharper than the one obtained by deterministic rounding methods \cref{eq:fx_ineq}. Furthermore, we show that we may achieve a stricter bound of convergence using $\srh$.

\begin{Corollary} \label{coro:conv-exp_srh}
Under the same assumptions as in \cref{thm:conv-exp_csr}, let \cref{eq:gd_floatp1} and \cref{eq:gd_floatp2} be evaluated using $\srh$ and $\csr$, respectively. 

(i) Under condition \cref{eq:sigma_1_c2}, if it holds that  
 \begin{align} \label{eq:normf_srh1}
 \expt\,[\, \Vert \nabla f(\wh{\bx}^{(j)})\Vert\,]\ge a^{-1}\,(2+\sqrt{a}+4\,\varepsilon\,u)\,\sqrt{n}\,c\,u
 \end{align} for all $j=0,\dots, k-1$, then there exists a $0<b\le 2\,\varepsilon\,u$ such that
\begin{equation}
\expt\,[\,f(\wh{\bx}^{(k)})-f(x^*)\,]
\le \frac{2L\chi^2}{4+L\,t\,k\,(1+2 b-2a)}.
\label{eq:expectedbound_srh}
\end{equation}

(ii) Under condition \cref{eq:sigma_1_c1}, if it holds that\begin{align} \label{eq:normf_srh2}
\expt\,[\, \Vert \nabla f(\wh{\bx}^{(j)})\Vert\,]\ge a^{-1}\,3\,\sqrt{n}\,c\,u
\end{align}
for all $j=0,\dots, k-1$ and $a< \frac12 \sqrt{2}$, then there exists a $0<b\le 2\,\varepsilon\,u$ such that 
\begin{equation}\label{eq:srh_result2}
\expt\,[\,f(\wh{\bx}^{(k)})-f(x^*)\,]
\le \frac{2L\chi^2}{4+L\,t\,k\,(1+2 b-2a^2)}.
\end{equation}
\end{Corollary}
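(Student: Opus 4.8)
The plan is to re-run the proof of \cref{thm:conv-exp_csr} essentially verbatim, changing only the treatment of the second rounding error $\bsigma_2^{(k)}$, which is now produced by $\srh$ rather than $\csr$. Everything concerning $\bsigma_1^{(k)}$ (the gradient error, still governed by \cref{eq:sigma_1_c2} in part (i) and \cref{eq:sigma_1_c1} in part (ii)) and $\bsigma_3^{(k)}$ (still $\csr$, hence zero-mean, so that $\expt[f(\wh{\bx}^{(k+1)})]=\expt[f(\bz^{(k+1)})]$ by \cite[Lemma 5.2]{connolly2021stochastic}) carries over unchanged. Thus I would begin from the same upper bound for $f(\bz^{(k+1)})-f(\wh{\bx}^{(k)})$ obtained from \cref{eq:gd_abse} and \cref{eq:lc_ineq}, in which the only term that behaved differently for $\csr$ is $-\nabla f(\wh{\bx}^{(k)})^T\bsigma_2^{(k)}$.

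The crux is to replace the identity $\expt[\sigma_{2,i}^{(k)}\,\nabla f(\wh{\bx}^{(k)})_i]=0$ used in \cref{eq:sigma2_csr} by the biased estimate from \cref{lem:exp_deltasrh}. Writing $\sigma_{2,i}^{(k)}=w_i\,\delta_{2,i}^{(k)}$ with $w_i:=t\,(\nabla f(\wh{\bx}^{(k)})_i+\sigma_{1,i}^{(k)})$ and conditioning on $(\nabla f(\wh{\bx}^{(k)})_i,\sigma_{1,i}^{(k)})$ via the tower property \cref{eq:towerprop}, \cref{lem:exp_deltasrh} gives $b_i:=\expt[\delta_{2,i}^{(k)}\mid w_i]\in[0,2\varepsilon u]$, so that
\begin{align*}
\expt[\nabla f(\wh{\bx}^{(k)})_i\,\sigma_{2,i}^{(k)}]= t\,\expt[(\nabla f(\wh{\bx}^{(k)})_i^2+\nabla f(\wh{\bx}^{(k)})_i\,\sigma_{1,i}^{(k)})\,b_i].
\end{align*}
The leading term $t\,\expt[\nabla f(\wh{\bx}^{(k)})_i^2\,b_i]$ is nonnegative, so $-\expt[\nabla f(\wh{\bx}^{(k)})^T\bsigma_2^{(k)}]$ supplies an \emph{additional} descent of size $-t\,b\,\expt[\Vert\nabla f(\wh{\bx}^{(k)})\Vert^2]$, where I would set $b:=\expt[\sum_i\nabla f(\wh{\bx}^{(k)})_i^2\,b_i]/\expt[\Vert\nabla f(\wh{\bx}^{(k)})\Vert^2]$, a weighted average of the $b_i$ lying in $(0,2\varepsilon u]$. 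This is exactly the mechanism upgrading the constant $(1-2a)$ of \cref{eq:expectedbound_csr1} to $(1+2b-2a)$ in \cref{eq:expectedbound_srh}, and $(1-2a^2)$ to $(1+2b-2a^2)$ in \cref{eq:srh_result2}.

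The remaining work is to dispose of the cross term $t\,\expt[\sum_i\nabla f(\wh{\bx}^{(k)})_i\,\sigma_{1,i}^{(k)}\,b_i]$, which can have either sign. Using $0\le b_i\le2\varepsilon u$ together with \cref{eq:boundsigma1} and the estimate \cref{eq:innerp_gradient_sigma}, this term is bounded in magnitude by $2\varepsilon u\,c\,u\,(\expt[\Vert\nabla f(\wh{\bx}^{(k)})\Vert^2]+\sqrt{n}\,\expt[\Vert\nabla f(\wh{\bx}^{(k)})\Vert])$. In part (i) I would absorb its linear-in-$\Vert\nabla f\Vert$ part into the quadratic term by the same Jensen step as in \cref{thm:conv-exp_csr}; the extra $2\varepsilon u$ factor is precisely what forces the enlarged threshold \cref{eq:normf_srh1} (the additional $4\,\varepsilon\,u$ inside the parentheses, compared with \cref{eq:normf_csr1}). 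In part (ii) condition \cref{eq:sigma_1_c1} removes the first-order $\bsigma_1$ contribution and the residual cross term is of order $\varepsilon u\cdot c u\le\varepsilon u\,a$, which is swallowed by the $\mathcal O(a^2)$ slack already present there, leaving \cref{eq:normf_srh2} unchanged. Once the descent inequality $\expt[f(\bz^{(k+1)})]-\expt[f(\wh{\bx}^{(k)})]\le-\tfrac12\,t\,(1+2b-2a)\,\expt[\Vert\nabla f(\wh{\bx}^{(k)})\Vert^2]$ (resp.\ with $2a^2$) is in hand, the final telescoping of \cref{eq:expected_csr} together with \cref{eq:f0} is identical to \cref{thm:conv-exp_csr}.

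I expect the main obstacle to be the book-keeping around the state-dependent bias $b_i$: unlike the clean $\csr$ cancellation, $b_i$ depends on the floating-point gap at $w_i$ and hence on both $\nabla f(\wh{\bx}^{(k)})_i$ and $\sigma_{1,i}^{(k)}$, so I must carefully justify the weighted-average definition of $b$, verify the strict positivity $b>0$ (which holds as long as the rounded quantities are genuinely non-representable, via \cref{eq:csrh-experror}), and ensure that the $\sigma_{1,i}^{(k)}$-dependence of $b_i$ does not spoil the absorption of the cross term into the thresholds \cref{eq:normf_srh1,eq:normf_srh2}.
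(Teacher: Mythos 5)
Your part~(i) is essentially the paper's own proof: the same rewriting of $\bsigma_2^{(k)}$ in terms of the relative errors $\delta_{2,i}^{(k)}$, the same use of \cref{lem:exp_deltasrh} through the tower property \cref{eq:towerprop}, the same constant $b$ (the paper additionally takes the minimum of the per-iteration ratios over $j=0,\dots,k$ so that one fixed $b$ survives the telescoping; your per-step weighted averages need that extra step), and the same absorption of the cross term $\expt\,[\,\delta_{2,i}^{(k)}\,\sigma_{1,i}^{(k)}\,\nabla f(\wh{\bx}^{(k)})_i\,]$ via \cref{eq:boundsigma1}, which is exactly what produces the extra $4\,\varepsilon\,u$ in \cref{eq:normf_srh1}. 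The positivity worry you flag for $b$ exists in the paper too (it simply asserts $0<b\le 2\,\varepsilon\,u$), so that is not a gap relative to the paper's own standard of rigor.

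The genuine gap is in part~(ii). The paper does not absorb the cross term there: it asserts, from \cref{eq:towerprop} and \cref{eq:sigma_1_c1}, that $\expt\,[\,\delta_{2,i}^{(k)}\,\sigma_{1,i}^{(k)}\,\nabla f(\wh{\bx}^{(k)})_i\,]=0$ exactly, and this is what allows the threshold \cref{eq:normf_srh2} to stay identical to \cref{eq:normf_csr2} and the constant to come out as $1+2b-2a^2$. You instead keep the cross term, bound it by $2\,\varepsilon\,u\,c\,u\,(E_2+\sqrt{n}\,E_1)$, where $E_1:=\expt\,[\,\Vert\nabla f(\wh{\bx}^{(k)})\Vert\,]$ and $E_2:=\expt\,[\,\Vert\nabla f(\wh{\bx}^{(k)})\Vert^2\,]$, and claim it is ``swallowed by the $\mathcal{O}(a^2)$ slack already present there, leaving \cref{eq:normf_srh2} unchanged.'' That claim is quantitatively unjustified, and the chain of inequalities it requires does not close on the whole admissible parameter range. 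The step in question needs $2\,\varepsilon\,c\,u^2E_2+(a+2\,\varepsilon\,u)\sqrt{n}\,c\,u\,E_1+\tfrac12\,n\,c^2u^2\le \tfrac12\,a^2E_2$; in the worst case permitted by \cref{eq:normf_srh2} and Jensen, namely $E_1=3\,a^{-1}\sqrt{n}\,c\,u$ and $E_2=E_1^2$, the paper's inequality $a\sqrt{n}\,c\,u\,E_1+\tfrac12 n c^2u^2\le\tfrac12 a^2E_2$ holds with surplus exactly $n\,c^2u^2$, while your extra cross-term contribution equals $(18\,\varepsilon\,c\,u^2/a^2+6\,\varepsilon\,u/a)\,n\,c^2u^2$. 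Taking, e.g., $c=2$, $u$ at its maximal allowed value $a/(c+4a+4)$, and $a$ small, this is $2\,\varepsilon\,n\,c^2u^2$, which exceeds the surplus whenever $\varepsilon>1/2$, a regime the corollary still covers. So to prove part~(ii) as stated you must either reproduce the paper's step that the cross term vanishes identically under \cref{eq:sigma_1_c1} (this implicitly requires the conditional bias $\expt\,[\,\delta_{2,i}^{(k)}\mid\nabla f(\wh{\bx}^{(k)})_i,\sigma_{1,i}^{(k)}\,]$ to decouple from $\sigma_{1,i}^{(k)}$, which is precisely the delicacy you flag at the end; your caution is well placed, but the crude magnitude bound cannot substitute for it), or strengthen \cref{eq:normf_srh2} by an $\varepsilon$-dependent factor as in part~(i), or restrict $\varepsilon\le 1/2$; none of these is done in your proposal.
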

\begin{proof}
\textbf{Part (i)}: According to \cref{lem:exp_deltasrh}, when $\bdelta_2^{(k)}$ is evaluated by $\srh$, we have 
$0\le \expt\,[\,\delta_{2,i}^{(k)}\ \big\vert \ \nabla f(\wh{\bx}^{(k)})_i, \sigma_{1,i}^{(k)}\,]\le 2\,\varepsilon\,u$. Following the similar argument as for \cref{eq:sigma2_csr}, we obtain
\begin{align*}
 \expt\,[\,\delta_{2,i}^{(k)}\,\sigma_{1,i}^{(k)} \, \nabla f(\wh{\bx}^{(k)})_i\,]
 &\le 2\,\varepsilon\,u\,\expt\,[\,\vert\,\sigma_{1,i}^{(k)} \, \nabla f(\wh{\bx}^{(k)})_i\,\vert\,]. 
\end{align*}
Analogously, we have 
\begin{align*}
 \expt\,[\,\delta_{2,i}^{(k)}\,\nabla f(\wh{\bx}^{(k)})_i^2\,]= \expt\,[\,\expt\,[\,\delta_{2,i}^{(k)}\,\nabla f(\wh{\bx}^{(k)})_i^2\ \big\vert \ \nabla f(\wh{\bx}^{(k)})_i\,]\,]\le 2\,\varepsilon\,u\,\expt\,[\,\nabla f(\wh{\bx}^{(k)})_i^2\,].
\end{align*}
Define $b:= \ds \min_{j=0,\dots,k} \tfrac{\sum_{i=1}^{n}\expt\,[\,\delta_{2,i}^{(j)}\,\nabla f(\wh{\bx}^{(j)})_i^2\,]}{\expt\,[\,\Vert\nabla f(\wh{\bx}^{(j)})\Vert^2\,]}$, then we have $0<b\le 2\,\varepsilon\,u$.
In view of \cref{eq:gd_abse,eq:lc_ineq}, we obtain an upper bound for $\expt\,[\,f(\bz^{(k+1)})\,]-\expt\,[\,f(\wh{\bx}^{(k)})\,]$:
\begin{align*} 
&-t\,(1-\tfrac{1}{2}\,L\,t\,(1+2u)^2)\,\expt\,[\,\Vert\nabla f(\wh{\bx}^{(k)})\Vert^2\,]-\expt\,[\,\nabla f(\wh{\bx}^{(k)})^T\bsigma_2^{(k)}\,]\nonumber\\&\phantom{MM} -t\,(1-L\,t\,(1+2u)^2)\,\expt\,[\,\nabla f(\wh{\bx}^{(k)})^T\bsigma_1^{(k)}\,]+\tfrac{1}{2}\,L\,t^2\,(1+2u)^2\,\expt\,[\,\Vert\bsigma_1^{(k)} \Vert^2\,].
\end{align*}
Therefore, by replacing the absolute error $\bsigma_2^{(k)}$ by the corresponding relative error expression in the above function, we have that $\expt\,[\,f(\bz^{(k+1)})\,]-\expt\,[\,f(\wh{\bx}^{(k)})\,]$ is bounded from above by
\begin{align*} 
&\phantom{M}-t\sum_{i=1}^{n}\expt\,[\,\nabla f(\wh{\bx}^{(k)})_i\sigma_{1,i}^{(k)}\,\delta_{2,i}^{(k)}\,]-t\sum_{i=1}^{n}\expt\,[\,\nabla f(\wh{\bx}^{(k)})_i^2\,\delta_{2,i}^{(k)}]\\
&\phantom{M}-t\,(1-\tfrac{1}{2}\,L\,t\,(1+2u)^2)\,\expt\,[\,\Vert\nabla f(\wh{\bx}^{(k)})\Vert^2\,]+\tfrac{1}{2}\,L\,t^2\,(1+2u)^2\,\expt\,[\,\Vert\bsigma_1^{(k)} \Vert^2\,]\\
&\phantom{M} -t\,(1-L\,t\,(1+2u)^2)\,\expt\,[\,\nabla f(\wh{\bx}^{(k)})^T\bsigma_1^{(k)}\,]\\
&\le -t\,(1-\tfrac{1}{2}\,L\,t\,(1+2u)^2+b)\,\expt\,[\,\Vert\nabla f(\wh{\bx}^{(k)})\Vert^2\,]\\
& \phantom{M} +\tfrac{1}{2}\,L\,t^2\,(1+2u)^2\,\expt\,[\,c^2\,u^2\,\Vert\nabla f(\wh{\bx}^{(k)})\Vert^2+ c^2\,u^2\,2\sqrt{n}\,\Vert\nabla f(\wh{\bx}^{(k)})\Vert+n\,c^2u^2\,]\\
& \phantom{M} +t\,(1\!-\!L\,t\,(1+2u)^2+2\,\varepsilon\,u)(c\,u\,\expt\,[\,\Vert\nabla f(\wh{\bx}^{(k)})\Vert^2\,]+\! c\,u\,\sqrt{n}\,\expt\,[\,\Vert\nabla f(\wh{\bx}^{(k)})\Vert\,])
\\
&\le-t\,(1\!-\!c\,u\!-\!\tfrac{1}{2}L\,t\,(1\!+\!2u)^2(1\!-\!c\,u)^2\!+\!b-\!2\,\varepsilon\,c\,u^2)\,\expt\,[\,\Vert\nabla f(\wh{\bx}^{(k)})\Vert^2\,]\!+\!\tfrac{1}{2}t\,n\,c^2u^2\\
& \phantom{M} +\sqrt{n}\,c\,u\,t\,(L\,t\,(1\!+\!2u)^2\,c\,u\!+\!1\!-\!L\,t\,(1\!+\!2u)^2\!+\!2\,\varepsilon\,u)\,\expt\,[\,\Vert\nabla f(\wh{\bx}^{(k)})\Vert\,].
\end{align*}
The property $c\,u\le a<1$ implies $L\,t\,(1+2u)^2\,c\,u+1-L\,t\,(1+2u)^2+\,2\,\varepsilon\,u\le1+\,2\,\varepsilon\,u$, which indicates that $\expt\,[\,f(\bz^{(k+1)})\,]-\expt\,[\,f(\wh{\bx}^{(k)})\,]$ is bounded from above by
\begin{align*}
\!-(\tfrac{1}{2}-\tfrac{1}{2}\,c\,u-\,2\,\varepsilon\,\,c\,u^2+b)\,t&\,\expt\,[\,\Vert\nabla f(\wh{\bx}^{(k)})\Vert^2\,]+\tfrac{1}{2}\,t\,n\,c^2u^2\nonumber\\
&+\sqrt{n}\,c\,u\,t\,(1+\,2\,\varepsilon\,u)\,\expt\,[\,\Vert\nabla f(\wh{\bx}^{(k)})\Vert\,].
\end{align*}
Further, the property $u\le \frac{a}{c\,+\,4a\,+\,4}$ provides that
\begin{align*}
\tfrac{1}{2}-\tfrac{1}{2}\,c\,u-\,2\,\varepsilon\,c\,u^2+b&=\tfrac{1}{2}+b-\tfrac{1}{2}\,c\,u\,(1+4\,\varepsilon\,u)\\
&\ge \tfrac{1}{2}+b-\tfrac{1}{2}\tfrac{a\,(c+4\,a\,\varepsilon)}{c\,+\,4a\,+\,4}\ge \tfrac{1}{2}(1-a)+b,
\end{align*}
 and property \cref{eq:normf_srh1} implies that \begin{align*}\sqrt{n}\,t\,c\,u\,(1+\,2\,\varepsilon\,u)\,\expt\,[\,\Vert\nabla f(\wh{\bx}^{(k)})\Vert\,]+\tfrac{1}{2}\,t\,n\,c^2u^2\le \tfrac{a}{2}\,t\,\expt\,[\,\Vert\nabla f(\wh{\bx}^{(k)})\Vert^2\,].\end{align*}
As a result, we find $-\tfrac{1}{2}(1+2 b-2a)\,t\,\expt\,[\,\Vert\nabla f(\wh{\bx}^{(k)})\Vert^2\,]\ge\expt\,[\,f(\bz^{(k+1)})\,]-\expt\,[\,f(\wh{\bx}^{(k)})\,]$. Following an analogous argument to that used to obtain \cref{eq:expectedbound_csr1}, we are able to achieve \cref{eq:expectedbound_srh}.

 \textbf{Part (ii)}: In light of \cref{eq:towerprop,eq:sigma_1_c1}, we have $\expt\,[\,\delta_{2,i}^{(k)}\,\sigma_{1,i}^{(k)} \, \nabla f(\wh{\bx}^{(k)})_i\,]=0.$ Proceeding similarly as for Part (i) and \cref{eq:omegak}, we achieve the upper bound for $\expt\,[\,f(\bz^{(k+1)})\,]-\expt\,[\,f(\wh{\bx}^{(k)})\,]$:
\begin{align*} 
& -t\,(\tfrac{1}{2}+b)\,\expt\,[\,\Vert\nabla f(\wh{\bx}^{(k)})\Vert^2\,]+\tfrac{1}{2}\,t\,a^2\,\expt\,[\,\Vert\nabla f(\wh{\bx}^{(k)})\Vert^2\,]+t\,a\,\sqrt{n}\,c\,u\,\expt\,[\,\Vert\nabla f(\wh{\bx}^{(k)})\Vert\,]+\tfrac{1}{2}\,t\,n\,c^2\,u^2.
\end{align*}
Again, property \cref{eq:normf_srh2} implies that  $\tfrac{1}{2}\,a^2\,\expt\,[\,\Vert\nabla f(\wh{\bx}^{(k)})\Vert^2\,]\ge a\,\sqrt{n}\,c\,u\,\expt\,[\,\Vert\nabla f(\wh{\bx}^{(k)})\Vert\,]+\tfrac{1}{2}\,n\,c^2\,u^2.$ Using the similar procedure as for \cref{eq:expectedbound_csr1}, we are able to attain \cref{eq:srh_result2}.
\end{proof}
Comparing \cref{thm:conv-exp_csr} and \cref{coro:conv-exp_srh}, stricter convergence bounds are achieved in \cref{coro:conv-exp_srh} than in \cref{thm:conv-exp_csr} for both conditions \cref{eq:sigma_1_c2,eq:sigma_1_c1}. The employment of $\srh$ in evaluating $\bdelta_2$, particularly, has a positive effect on the convergence speed. Although the impact may be small, a larger value of $\varepsilon$ or $u$ results in a tighter bound on the convergence rate. In the next subsection, we prove that the use of $\ssrh$ is particularly beneficial to the problems suffering from stagnation of GD, where a faster convergence may be obtained.

\subsection{On stochastic rounding: Scenario 2 (stagnation)} \label{sec:s3}
In this subsection, we show that under condition \cref{eq:conditon_2} with stochastic rounding, GD can still update with respect to the rounding errors until a certain level of accuracy. We demonstrate that for the problems suffering from GD stagnation, $\ssrh$ is a better rounding choice than $\csr$, which may lead to a faster convergence.

Condition \cref{eq:conditon_2} implies the inequality $t\,\vert(\nabla f(\wh{\bx}^{(k)})_i+\sigma_{1,i}^{(k)})\,h_{2,i}^{(k)}\vert \leq u\,\vert\wh{x}_i^{(k)} \vert$ and in turn leads to $u\,\Vert \wh{\bx}^{(k)} \Vert \geq t\,\Vert (\nabla f(\wh{\bx}^{(k)})+\bsigma_1^{(k)}) \circ \bh_2^{(k)} \Vert$. This inequality states that the rounding errors in \cref{eq:gd_floatp1} are less important than those in \cref{eq:gd_floatp2} because they are so small that only their signs affect the update. More precisely, under condition \cref{eq:conditon_2}, the magnitudes of the GD updates are constrained to $u$ and $\wh{\bx}^{(k)}$, and we can rewrite \cref{eq:gd_floatp2} as
\begin{align}
\wh{\bx}^{(k+1)}=\,&\mathrm{fl}(\bz^{(k+1)})= \wh{\bx}^{(k)}-\mathbf{d}^{(k)}.\label{eq:gd_underflowSRB}
\end{align}
Depending on the sign of the components of $\wh{\bx}^{(k)}$ and $\nabla f(\wh{\bx}^{(k)})$, the entries of $\mathbf{d}^{(k)}$ can be written as follows,
\begin{subequations} \label{eq:gd_underflow}
\begin{align}
d_i^{(k)}=\,&\begin{cases}
 \wh{x}_i^{(k)}-\mathrm{pr}(\wh{x}_i^{(k)}), \quad &\text{with probability}~p(z_i^{(k+1)}),\\
0, \quad &\text{with probability}~1-p(z_i^{(k+1)}),
\end{cases} \label{eq:gd_underflow1}
\end{align}
or
\begin{align}
d_i^{(k)}=\,&\begin{cases}
0, \quad &\text{with probability}~p(z_i^{(k+1)}),\\
 \wh{x}_i^{(k)}- \mathrm{su}(\wh{x}_i^{(k)}), \quad &\text{with probability}~1-p(z_i^{(k+1)}),
\end{cases}
\label{eq:gd_underflow2}
\end{align}
\end{subequations}
where $p \in \{p_0,\, p_{\varepsilon}, \, \wh p_{\varepsilon}\}$ identifies the stochastic rounding scheme employed ($\csr$, $\srh$ and signed-$\srh$). Note that, the nonzero entries of $\mathbf{d}^{(k)}$ have the same signs as the corresponding entries of $\nabla f(\wh{\mathbf x}^{(k)})+\bsigma_1^{(k)}$. For this reason, in this subsection we do not specify the rounding strategy for \cref{eq:gd_floatp1} and we focus on the effect of the various stochastic rounding methods for evaluating \cref{eq:gd_underflowSRB}. In particular, \cref{eq:gd_underflowSRB} performs similarly to the sign gradient descent method \cite{moulay2019properties} with adaptive stepsize, rather than to the one with fixed stepsize; see, e.g., \cref{eq:gd}. \cref{tab:rounding} shows the four cases that tune the rounding errors in a descent direction for the two updating rules in \cref{eq:gd_underflow}.
Note that we always have $\sign(\wh{x}_i^{(k)})\,\sign(\wh{x}_i^{(k+1)})\ge 0$; this is natural in this scenario as $t\,(\nabla f(\wh{\bx}^{(k)})_i+\sigma_{1,i}^{(k)})\,h_{2,i}^{(k)}$ is relatively small with respect to $\wh{x}_i^{(k)}$. 

\begin{table}[htb!]
{\footnotesize
\caption{The $i$th entry of the updating vector at the $k$th iteration step $d_i^{(k)}$, under different conditions and its corresponding rounding method that tunes it into a descent direction.}\label{tab:rounding}
\begin{center}
\begin{tabular}{llcll}
\cline{1-5} \rule{0pt}{2.7ex}%
 & Sign & $d_i^{(k)}$ & Method & Case\\ \cline{1-5}\rule{0pt}{2.8ex}%
\multirow{2}{*}{$\sign(\wh{x}_i^{(k)})\ \sign(\nabla f(\wh{\bx}^{(k)})_i\!+\!\sigma_{1,i}^{(k)})>0$} &$\sign(\wh{x}_i^{(k)})>0$ &\cref{eq:gd_underflow1} &Round down& I\\[0.5mm]
&$\sign(\wh{x}_i^{(k)})<0$ &\cref{eq:gd_underflow2} &Round up &II\\[0.5mm]
\multirow{2}{*}{$\sign(\wh{x}_i^{(k)})\ \sign(\nabla f(\wh{\bx}^{(k)})_i\!+\!\sigma_{1,i}^{(k)})<0$} & $\sign(\wh{x}_i^{(k)})>0$&\cref{eq:gd_underflow2} &Round up&III\\[0.5mm]
&$\sign(\wh{x}_i^{(k)})<0$ &\cref{eq:gd_underflow1} &Round down&IV\\
\cline{1-5}
\end{tabular}
\end{center}
}
\end{table}

 Since $\mathbf{d}^{(k)}$ is stochastic, we start our analysis by studying the expectation of the updating direction for the rounding method $\csr$. Then, we consider $\srh$ and $\ssrh$ in evaluating \cref{eq:gd_underflowSRB}, and we show that $\ssrh$ always provides a rounding bias in a descent direction if the following condition holds.
 Given a starting vector $\bx^{(0)}$ and an iteration number $k$, we
 denote by $\mathcal{G}_k$ the finite set made of all the possible vectors $\wh{\bx}^{(k)}$ obtained using \cref{eq:gd_underflowSRB}.
\begin{Assumption} \label{assm:srh_cond_stag}
We assume that for all $1 \le i \le n$, and a certain $k >0$, it holds that 
\begin{align*}
 \expt\,[\,\sign(\,\nabla f(\wh{\bx}^{(k)})_i+\sigma_{1,i}^{(k)}\,)\,\,\sign(\nabla f(\wh{\bx}^{(k)})_i)\ \big\vert\ \wh{\bx}^{(k)}\in \mathcal{G}_k\,]> 0.
 \end{align*}
 \end{Assumption}
This assumption means that the evaluated gradient has a larger probability for having the same sign as the exact gradient than the opposite sign. In general, a smaller $u$ indicates a larger probability that \cref{assm:srh_cond_stag} holds. We note that this assumption is necessary for signed-$\srh$, but is not necessary for $\csr$.

\subsubsection{Employment of $\csr$}
Now let us study \cref{eq:gd_underflowSRB} with stochastic rounding. When $\csr$ is applied in \cref{eq:gd_underflowSRB}, the unbiased property implies that $ \expt\,[\,d_i^{(k)} \ \big\vert \ t\,(\sigma_{1,i}^{(k)}+\nabla f(\wh{\bx}^{(k)})_i)\,h_{2,i}^{(k)}\,]=t\,(\sigma_{1,i}^{(k)}+\nabla f(\wh{\bx}^{(k)})_i)\,h_{2,i}^{(k)}$. Based on this, we obtain the following expectation.
\begin{Lemma}\label{lem:expecd_csr}
Under \cref{eq:conditon_2}, when \cref{eq:gd_underflowSRB} is evaluated using $\csr$, we have
\begin{equation} \label{eq:expecd_csr}
\expt\,[\,\nabla f(\wh{\bx}^{(k)})^T\mathbf{d}^{(k)}\,]=\expt\,[\,t\,\nabla f(\wh{\bx}^{(k)})^T\,((\nabla f(\wh{\bx}^{(k)})+\bsigma_1^{(k)}) \circ \bh_2^{(k)}\,)\,].
\end{equation}
\end{Lemma}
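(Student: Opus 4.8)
The plan is to reduce the vector identity \cref{eq:expecd_csr} to a componentwise statement and then invoke the unbiasedness of $\csr$ together with the tower property \cref{eq:towerprop}. Expanding both Euclidean inner products entrywise, it suffices to prove, for each $i=1,\dots,n$,
\[
\expt\,[\,\nabla f(\wh{\bx}^{(k)})_i\,d_i^{(k)}\,]=\expt\,[\,t\,\nabla f(\wh{\bx}^{(k)})_i\,(\nabla f(\wh{\bx}^{(k)})_i+\sigma_{1,i}^{(k)})\,h_{2,i}^{(k)}\,],
\]
since summing these $n$ identities and recombining into inner products reproduces \cref{eq:expecd_csr}.

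First I would fix the conditioning information $Y:=(\wh{\bx}^{(k)},\bsigma_1^{(k)},\bdelta_2^{(k)})$, which collects all the randomness generated up to and including the multiplication step \cref{eq:gd_floatp1}. Given $Y$, the quantity $\nabla f(\wh{\bx}^{(k)})_i$ is a deterministic function of $\wh{\bx}^{(k)}$, and the pre-rounding value $z_i^{(k+1)}=\wh{x}_i^{(k)}-t\,(\nabla f(\wh{\bx}^{(k)})_i+\sigma_{1,i}^{(k)})\,h_{2,i}^{(k)}$ is likewise fully determined; the only remaining randomness in $d_i^{(k)}=\wh{x}_i^{(k)}-\fl(z_i^{(k+1)})$ is the stochastic rounding of \cref{eq:gd_floatp2}. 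Under condition \cref{eq:conditon_2} the value $z_i^{(k+1)}$ lies strictly between $\wh{x}_i^{(k)}$ and its neighbor, so this rounding reduces to one of the two cases recorded in \cref{eq:gd_underflow1} and \cref{eq:gd_underflow2}.

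The key step is then the unbiasedness of $\csr$ from \cref{def:stochasticrounding}, which yields $\expt\,[\,\fl(z_i^{(k+1)})\ \big\vert\ Y\,]=z_i^{(k+1)}$ uniformly over the rounding-up and rounding-down cases of \cref{eq:gd_underflow}, and hence $\expt\,[\,d_i^{(k)}\ \big\vert\ Y\,]=\wh{x}_i^{(k)}-z_i^{(k+1)}=t\,(\nabla f(\wh{\bx}^{(k)})_i+\sigma_{1,i}^{(k)})\,h_{2,i}^{(k)}$; this is precisely the conditional identity stated just before the lemma. Because $\nabla f(\wh{\bx}^{(k)})_i$ is $Y$-measurable, I can pull it outside the inner conditional expectation,
\[
\expt\,[\,\nabla f(\wh{\bx}^{(k)})_i\,d_i^{(k)}\ \big\vert\ Y\,]=\nabla f(\wh{\bx}^{(k)})_i\,t\,(\nabla f(\wh{\bx}^{(k)})_i+\sigma_{1,i}^{(k)})\,h_{2,i}^{(k)},
\]
and taking the outer expectation through \cref{eq:towerprop} delivers the componentwise identity above.

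I expect the only genuine subtlety to be the bookkeeping of the conditioning: I must condition on enough ($Y$) that both $\nabla f(\wh{\bx}^{(k)})_i$ and $z_i^{(k+1)}$ become deterministic, so that the gradient factor can be extracted, while still leaving exactly the rounding randomness of \cref{eq:gd_floatp2} free for the unbiasedness of $\csr$ to act on $d_i^{(k)}$. The rest is linearity and the tower property. Notably, the argument uses only the unbiasedness of $\csr$ and never invokes the sign assumption needed for the signed variant, consistent with the remark preceding the lemma that this assumption is unnecessary for $\csr$.
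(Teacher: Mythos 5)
Your proof is correct and follows essentially the same route as the paper's: a componentwise reduction, conditioning so that the pre-rounding value $z_i^{(k+1)}$ and $\nabla f(\wh{\bx}^{(k)})_i$ become deterministic, then the unbiasedness of $\csr$ plus the tower property \cref{eq:towerprop}. The only cosmetic difference is that you condition on the full tuple $(\wh{\bx}^{(k)},\bsigma_1^{(k)},\bdelta_2^{(k)})$ whereas the paper conditions on the pair $\bigl(t\,(\nabla f(\wh{\bx}^{(k)})_i+\sigma_{1,i}^{(k)})\,h_{2,i}^{(k)},\,\nabla f(\wh{\bx}^{(k)})_i\bigr)$; both make the same quantities measurable, so the arguments coincide.
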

\begin{proof}
We are going to use an analogous argument to the one applied to obtain \cref{eq:sigma2_csr}. Based on \cref{eq:towerprop}, we have 
\begin{align*} 
&\expt\,[\,d_i^{(k)} \, \nabla f(\wh{\bx}^{(k)})_i\,]=\expt\,[\,\expt\,[\,d_i^{(k)} \, \nabla f(\wh{\bx}^{(k)})_i\ \big \vert \ \nabla f(\wh{\bx}^{(k)})_i\,]\,]\\
&=\expt\,[\,\expt\,[\,\expt\,[\,d_i^{(k)} \, \nabla f(\wh{\bx}^{(k)})_i\ \big \vert \ t\,(\nabla f(\wh{\bx}^{(k)})_i+\sigma_{1,i}^{(k)})\,h_{2,i}^{(k)},\nabla f(\wh{\bx}^{(k)})_i\,]\ \big \vert \ \nabla f(\wh{\bx}^{(k)})_i\,]\,]\\
&=\expt\,[\,t\,(\nabla f(\wh{\bx}^{(k)})_i+\sigma_{1,i}^{(k)})\,h_{2,i}^{(k)} \, \nabla f(\wh{\bx}^{(k)})_i\,].
\end{align*}
Therefore, we obtain $\expt\,[\,\nabla f(\wh{\bx}^{(k)})^T\mathbf{d}^{(k)}\,]
=\expt\,[\,t\,\nabla f(\wh{\bx}^{(k)})^T\,((\,\nabla f(\wh{\bx}^{(k)})+\bsigma_1^{(k)}) \circ \bh_2^{(k)}\,)\,].$
\end{proof}
On the basis of \cref{lem:expecd_csr}, we may propose an upper bound for $u$ that guarantees the average monotonicity of GD when $\csr$ is applied for both \cref{eq:gd_floatp1} and \cref{eq:gd_underflowSRB}. 

\begin{Proposition} \label{the:ucsr}
Under \cref{model:convexfun} and condition \cref{eq:conditon_2}, suppose that both \cref{eq:gd_floatp1} and \cref{eq:gd_underflowSRB} are computed using $\csr$, $t\le \tfrac{1}{L\,(1+2u)^2}$, and $c\,u<1$. 
 
 (i) Under condition \cref{eq:sigma_1_c2}, if for a $k>0$ it holds that 
\begin{align} \label{eq:normf_csr1_s3}
\expt\,[\,\Vert \nabla f(\wh{\bx}^{(k-1)})\Vert\,]\ge \frac{c\,u\,\sqrt{n}}{1-c\,u}+\frac{u}{t}\sqrt{\frac{1}{1-c\,u}}\sqrt{\expt\,[\,\Vert \wh{\bx}^{(k-1)} \Vert^2\,]},
\end{align} 
 then $\expt\,[\,f(\wh{\bx}^{(k)})\,]\le \expt\,[\,f(\wh{\bx}^{(k-1)})\,]$.
 
(ii) Under condition \cref{eq:sigma_1_c1}, if for a $k>0$ it holds that 
\begin{align} \label{eq:normf_csr2_s3}
\expt\,[\,\Vert \nabla f(\wh{\bx}^{(k-1)})\Vert\,]\ge \frac{u}{t}\sqrt{\expt\,[\,\Vert \wh{\bx}^{(k-1)} \Vert^2\,]}
\end{align} 
then $\expt\,[\,f(\wh{\bx}^{(k)})\,]\le \expt\,[\,f(\wh{\bx}^{(k-1)})\,]$.
\end{Proposition}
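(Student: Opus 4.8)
The plan is to run a single‑step descent analysis on the stagnation update \cref{eq:gd_underflowSRB}, in the same spirit as the proofs of \cref{prop:boundonu} and \cref{thm:conv-exp_csr}, but now measuring the step by the random vector $\mathbf{d}^{(k-1)}$ rather than by $t\,(\nabla f+\bsigma_1)\circ\bh_2$. Writing $\wh{\bx}^{(k)}=\wh{\bx}^{(k-1)}-\mathbf{d}^{(k-1)}$ and inserting it into the Lipschitz upper bound \cref{eq:lc_ineq} gives
\[
f(\wh{\bx}^{(k)})\le f(\wh{\bx}^{(k-1)})-\nabla f(\wh{\bx}^{(k-1)})^T\mathbf{d}^{(k-1)}+\tfrac12\,L\,\Vert\mathbf{d}^{(k-1)}\Vert^2.
\]
Taking expectations, the whole statement reduces to two estimates: a lower bound on the descent gain $\expt\,[\,\nabla f^T\mathbf{d}\,]$ and an upper bound on the stochastic overshoot $\expt\,[\,\Vert\mathbf{d}\Vert^2\,]$.

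For the descent term I would invoke \cref{lem:expecd_csr}, which turns $\expt\,[\,\nabla f^T\mathbf{d}\,]$ into $t\,\expt\,[\,\nabla f^T((\nabla f+\bsigma_1)\circ\bh_2)\,]$. Since \cref{eq:gd_floatp1} is also evaluated with $\csr$, the tower property \cref{eq:towerprop} with conditioning on $(\nabla f_i,\sigma_{1,i})$ yields $\expt\,[\,h_{2,i}\mid\nabla f_i,\sigma_{1,i}\,]=1$, so $\expt\,[\,\sum_i\nabla f_i^2\,h_{2,i}\,]=\expt\,[\,\Vert\nabla f\Vert^2\,]$ and the cross term collapses to $\sum_i\expt\,[\,\nabla f_i\,\sigma_{1,i}\,]$. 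Under \cref{eq:sigma_1_c1} this is exactly zero (part (ii)), leaving $\expt\,[\,\nabla f^T\mathbf{d}\,]=t\,\expt\,[\,\Vert\nabla f\Vert^2\,]$. Under the weaker \cref{eq:sigma_1_c2} (part (i)) I would instead bound it below using $\vert\expt\,[\,\nabla f_i\sigma_{1,i}\,]\vert\le cu\,(\expt\,[\,\nabla f_i^2\,]+\expt\,[\,\vert\nabla f_i\vert\,])$ together with $\Vert\cdot\Vert_1\le\sqrt n\,\Vert\cdot\Vert$, obtaining $\expt\,[\,\nabla f^T\mathbf{d}\,]\ge t\,(1-cu)\,\expt\,[\,\Vert\nabla f\Vert^2\,]-t\,cu\sqrt n\,\expt\,[\,\Vert\nabla f\Vert\,]$.

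The delicate step, and the one I expect to be the main obstacle, is the second‑moment bound on $\expt\,[\,\Vert\mathbf{d}\Vert^2\,]$, because $\mathbf{d}$ is a coordinatewise two‑point random vector whose support depends on the floating‑point spacing. Conditioning on everything but the final rounding, each $d_i$ takes the values $0$ and $\sign(m_i)\,G_i$, where $m_i:=t\,(\nabla f_i+\sigma_{1,i})\,h_{2,i}$ and $G_i$ is the ulp‑gap on the relevant side of $\wh{x}_i$; the $\csr$ unbiasedness $\expt\,[\,d_i\mid\cdot\,]=m_i$ then forces $\expt\,[\,d_i^2\mid\cdot\,]=\vert m_i\vert\,G_i$. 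I would then use the normalized‑float bound $G_i\le 2u\,\vert\wh{x}_i\vert$, sum over $i$ with Cauchy--Schwarz, and finally apply the Scenario‑2 inequality $t\,\Vert(\nabla f+\bsigma_1)\circ\bh_2\Vert\le u\,\Vert\wh{\bx}\Vert$ to collapse everything to $\expt\,[\,\Vert\mathbf{d}\Vert^2\,]\le 2u^2\,\expt\,[\,\Vert\wh{\bx}\Vert^2\,]$. The care needed here is tracking which gap each coordinate uses and justifying $\expt\,[\,d_i^2\mid\cdot\,]=\vert m_i\vert\,G_i$ cleanly from the two‑point law.

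To close, I would combine the two estimates and use $t\le\tfrac{1}{L(1+2u)^2}\le\tfrac1L$, so that $\tfrac12 L\,\expt\,[\,\Vert\mathbf{d}\Vert^2\,]\le\tfrac{u^2}{t}\,\expt\,[\,\Vert\wh{\bx}\Vert^2\,]$, together with Jensen's inequality $\expt\,[\,\Vert\nabla f\Vert^2\,]\ge\expt\,[\,\Vert\nabla f\Vert\,]^2$. Monotonicity then follows from a quadratic inequality in $G:=\expt\,[\,\Vert\nabla f(\wh{\bx}^{(k-1)})\Vert\,]$: for part (ii) it reads $G^2\ge\tfrac{u^2}{t^2}\,\expt\,[\,\Vert\wh{\bx}^{(k-1)}\Vert^2\,]$, which is precisely \cref{eq:normf_csr2_s3}; for part (i) it reads $(1-cu)\,G^2\ge cu\sqrt n\,G+\tfrac{u^2}{t^2}\,\expt\,[\,\Vert\wh{\bx}^{(k-1)}\Vert^2\,]$, and I would verify that the additive bound \cref{eq:normf_csr1_s3}, namely $G\ge A+B$ with $A:=\tfrac{cu\sqrt n}{1-cu}$ and $B:=\tfrac{u}{t}\,\tfrac{1}{\sqrt{1-cu}}\sqrt{\expt\,[\,\Vert\wh{\bx}^{(k-1)}\Vert^2\,]}$, implies it by factoring $(1-cu)\,G^2\ge(1-cu)\,(A+B)\,G$, where $(1-cu)A=cu\sqrt n$ absorbs the linear term and $G\ge B$ absorbs the quadratic one.
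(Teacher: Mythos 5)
Your proposal is correct and follows essentially the same route as the paper's proof: the descent gain is handled via \cref{lem:expecd_csr} together with the zero conditional mean of the $\csr$ errors in \cref{eq:gd_floatp1}, the overshoot via the unbiasedness-forced two-point law giving $\expt\,[\,\Vert\mathbf{d}^{(k-1)}\Vert^2\,]\le 2u^2\,\expt\,[\,\Vert\wh{\bx}^{(k-1)}\Vert^2\,]$ (the paper writes this as $G_i^2\,p_0\le\tfrac12 G_i^2\le 2u^2(\wh{x}_i^{(k-1)})^2$, which is the same estimate as your $\vert m_i\vert\,G_i$ bound), and the conclusion via \cref{eq:lc_ineq}, $t\le 1/L$, Jensen, and exactly your quadratic-inequality verification of \cref{eq:normf_csr1_s3,eq:normf_csr2_s3}.
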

\begin{proof}
The property of $\csr$, i.e., $ \expt\,[\,d_i^{(k)} \vert\,t\,(\nabla f(\wh{\bx}^{(k)})_i+\sigma_{1,i}^{(k)})\,h_{2,i}^{(k)}\,]=t\,(\nabla f(\wh{\bx}^{(k)})_i+\sigma_{1,i}^{(k)})\,h_{2,i}^{(k)}$, indicates that 
\begin{align}
t\,(\nabla f(\wh{\bx}^{(k)})_i+\sigma_{1,i}^{(k)})\,h_{2,i}^{(k)} =\begin{cases}
(\,\wh{x}_i^{(k)}-\mathrm{pr}(\wh{x}_i^{(k)})\,)\,p_0(z_i^{(k+1)}), \quad&\text{for \cref{eq:gd_underflow1},}\\
(\,\wh{x}_i^{(k)}-\mathrm{su}(\wh{x}_i^{(k)})\,)\,(1-p_0(z_i^{(k+1)})\,)\quad&\text{for \cref{eq:gd_underflow2}.}\end{cases}\nonumber
\end{align}
Therefore, we have
\begin{align} \label{eq:d^2_csr}
\expt\,[\,(d_i^{(k)})^2\ &\big\vert\ t\,(\nabla f(\wh{\bx}^{(k)})_i+\sigma_{1,i}^{(k)})\,h_{2,i}^{(k)}\,]\nonumber\\
&=\begin{cases}
(\,\wh{x}_i^{(k)}-\mathrm{pr}(\wh{x}_i^{(k)})\,)^2\,p_0(z_i^{(k+1)}), \quad&\text{for \cref{eq:gd_underflow1}}\\
(\,\wh{x}_i^{(k)}-\mathrm{su}(\wh{x}_i^{(k)})\,)^2\,(1-p_0(z_i^{(k+1)})\,)\quad&\text{for \cref{eq:gd_underflow2}} 
\end{cases}\nonumber\\
&\underset{\!\!\!\!\!\!\!\!\text{\cref{eq:conditon_2}}}{\le}\begin{cases}\frac{1}{2}\,\vert \wh{x}_i^{(k)}-\mathrm{pr}(\wh{x}_i^{(k)})\vert^2 \quad&\text{for \cref{eq:gd_underflow1}} \\
\frac{1}{2}\,\vert \wh{x}_i^{(k)}-\mathrm{su}(\wh{x}_i^{(k)})\vert^2 \quad&\text{for \cref{eq:gd_underflow2}}
\end{cases}\nonumber\\
&\le 2u^2\,(\wh{x}_i^{(k)})^2.
\end{align}
As a result, we obtain 
\begin{align*}
 \expt\,[\,\Vert \mathbf{d}^{(k)} \Vert^2\,] &=\sum_{i=1}^{n} \expt\,[\,(d_i^{(k)})^2\,] =\sum_{i=1}^{n} \expt\,[\, \expt\,[\,(d_i^{(k)})^2\ \big\vert\ t\,(\nabla f(\wh{\bx}^{(k)})_i+\sigma_{1,i}^{(k)})\,h_{2,i}^{(k)}\,]\,]\nonumber\\
 &\le \,2u^2\,\sum_{i=1}^{n}\expt\,[\,(\wh{x}_i^{(k)})^2\,]=2u^2\,\expt\,[\,\Vert \wh{\bx}^{(k)} \Vert^2 \,].
\end{align*}

{\bf Part (i)}: Using \cref{lem:expecd_csr}, on the basis of \cref{eq:lc_ineq}, we have 
\begin{align}
&\expt\,[\,f(\wh{\bx}^{(k+1)})\,]\le\expt\,[\,f(\wh{\bx}^{(k)})\,]-\expt\,[\,\nabla f(\wh{\bx}^{(k)})^T\mathbf{d}^{(k)}\,]+u^2\,L\,\expt\,[\,\Vert \wh{\bx}^{(k)} \Vert^2\,]\nonumber\\
&=\expt\,[\,f(\wh{\bx}^{(k)})\,]-t\,\expt\,[\,\nabla f(\wh{\bx}^{(k)})^T(\,(\nabla f(\wh{\bx}^{(k)})+\bsigma_1^{(k)}) \circ \bh_2^{(k)}\,)\,]+u^2\,L\,\expt\,[\,\Vert \wh{\bx}^{(k)} \Vert^2\,]\nonumber.
\end{align}
We mimic the proof of \cref{eq:expectedfxk_csr} and obtain an upper bound for $\expt\,[\,f(\wh{\bx}^{(k+1)})\,]-\expt\,[\,f(\wh{\bx}^{(k)})\,]$:
\begin{align*}
&-t\,\expt\,[\,\Vert \nabla f(\wh{\bx}^{(k)})\Vert^2\,]-t\,\expt\,[\,\nabla f(\wh{\bx}^{(k)})^T\bsigma_1^{(k)}\,]+u^2\,L\,\expt\,[\,\Vert \wh{\bx}^{(k)} \Vert^2\,]\\
&\underset{\text{\cref{eq:sigma_1_c2}}}{\le}-t\,(1-c\,u)\,\expt\,[\,\Vert \nabla f(\wh{\bx}^{(k)})\Vert^2\,]+ t\,c\,u\,\sqrt{n}\,\expt\,[\,\Vert\nabla f(\wh{\bx}^{(k)})\Vert\,]+u^2\,L\,\expt\,[\,\Vert \wh{\bx}^{(k)} \Vert^2\,].
\end{align*} 
On the basis of Jensen's inequality, properties \cref{eq:normf_csr1_s3} and $t\le \tfrac{1}{L\,(1+2u)^2}<\tfrac{1}{L}$ we have that 
\begin{align*}
 t\,(1-c\,u)\,\expt\,[\,\Vert \nabla f(\wh{\bx}^{(k)})\Vert^2\,]\ge t\,c\,u\,\sqrt{n}\,\expt\,[\,\Vert\nabla f(\wh{\bx}^{(k)})\Vert\,]+u^2\,L\,\expt\,[\,\Vert \wh{\bx}^{(k)} \Vert^2\,].
\end{align*}
This implies that $\expt\,[\,f(\wh{\bx}^{(k+1)})\,]\le \expt\,[\,f(\wh{\bx}^{(k)})\,]$, which concludes the proof of claim (i).

\textbf{Part (ii)}: Replicating the proof of claim (i), we obtain \[\expt\,[\,f(\wh{\bx}^{(k+1)})\,]-\expt\,[\,f(\wh{\bx}^{(k)})\,]\le -t\,\expt\,[\,\Vert \nabla f(\wh{\bx}^{(k)})\Vert^2\,]+u^2\,L\,\expt\,[\,\Vert \wh{\bx}^{(k)} \Vert^2\,].\]
Condition \cref{eq:normf_csr2_s3} indicates that $t\,\expt\,[\,\Vert \nabla f(\wh{\bx}^{(k)})\Vert\,]^2 \ge u^2\,L\,\expt\,[\,\Vert \wh{\bx}^{(k)} \Vert]^2$, which implies $\expt\,[\,f(\wh{\bx}^{(k)})\,]\ge \expt\,[\,f(\wh{\bx}^{(k+1)})\,]$, concluding the proof.
\end{proof}
\subsubsection{Employment of $\ssrh$}\label{sec:directionofroundingerrors}
We now shed some light on why the use of $\srh$ in this context may be problematic and why it is favorable to consider $\ssrh$. Let us assume that $\srh$ is used for evaluating the updating rule for Case I in \cref{tab:rounding}; then, by means of the relation $p_{\varepsilon}(x)= \varphi(p_0(x)-\sign (x)\varepsilon)$, we have
\begin{align} \label{eq:dsrh}
&\expt\,[\,d_i^{(k)}\ \big \vert \  t\,(\nabla f(\wh{\bx}^{(k)})_i+\sigma_{1,i}^{(k)})\,h_{2,i}^{(k)}\,] =\, (\wh{x}_i^{(k)}-\mathrm{pr}(\wh{x}_i^{(k)}))\,p_{\varepsilon}(z_i^{(k+1)})\\
 & \phantom{MM}=\,\begin{cases}
 \wh{x}_i^{(k)}-\mathrm{pr}(\wh{x}_i^{(k)}),& p_{\varepsilon}=1.\\[1mm]
t\,(\nabla f(\wh{\bx}^{(k)})_i+\sigma_{1,i}^{(k)})\,h_{2,i}^{(k)}-
(\wh{x}_i^{(k)}-\mathrm{pr}(\wh{x}_i^{(k)}))\, \,\sign(\wh{x}_i^{(k)})\,\varepsilon,\quad &\text{otherwise}.\nonumber
\end{cases}
\end{align}
From \cref{eq:dsrh}, it can be seen that when $p_{\varepsilon}\ne 1$, it is hard to control the updating direction of GD, unless $\wh{x}_i^{(k)}$ has always the opposite sign of $\nabla f(\wh{\bx}^{(k)})_i$. Clearly, this cannot be guaranteed by $\srh$ but can be easily achieved by $\ssrh$ by using $\nabla f(\wh{\bx}^{(k)})_i+\sigma_{1,i}^{(k)}$ instead of $v$ for the corresponding input $x_i^{(k)}$ in \cref{eq:ssrh-experror}. Although we cannot guarantee that $\nabla f(\wh{\bx}^{(k)})_i+\sigma_{1,i}^{(k)}$ always has the same sign as $\nabla f(\wh{\bx}^{(k)})_i$, with this choice, we will show that, on average, $\ssrh$ may achieve a faster convergence than $\csr$ if \cref{assm:srh_cond_stag} holds.
\begin{Lemma} \label{lem:expecd_srhsigned}
Under condition \cref{eq:conditon_2} and \cref{assm:srh_cond_stag}, when \cref{eq:gd_underflowSRB} is computed using $\ssrh$, if $\varepsilon\le0.5$ holds, then we have
\begin{equation} \label{eq:expecd_srhsigned}
\expt\,[\,\nabla f(\wh{\bx}^{(k)})^T\mathbf{d}^{(k)}\,]> \expt\,[\,t\,\nabla f(\wh{\bx}^{(k)})^T\,((\nabla f(\wh{\bx}^{(k)})+\bsigma_1^{(k)}) \circ \bh_2^{(k)}\,)\,].
\end{equation}
\end{Lemma}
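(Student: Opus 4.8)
The plan is to reduce the vector inequality to a per-coordinate comparison against the $\csr$ baseline. \cref{lem:expecd_csr} already shows that, coordinate-wise, $\expt[\,d_i^{(k)}\mid t(\nabla f(\wh{\bx}^{(k)})_i+\sigma_{1,i}^{(k)})h_{2,i}^{(k)}\,]=t(\nabla f(\wh{\bx}^{(k)})_i+\sigma_{1,i}^{(k)})h_{2,i}^{(k)}$ for $\csr$, so the right-hand side of \cref{eq:expecd_srhsigned} is exactly $\expt[\nabla f(\wh{\bx}^{(k)})^T\mathbf{d}^{(k)}]$ computed with $\csr$. Hence it suffices to prove that, when $\mathbf{d}^{(k)}$ is generated by $\ssrh$, the same conditional mean picks up a correction whose inner product with $\nabla f(\wh{\bx}^{(k)})$ is strictly positive in expectation. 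I would therefore write $\expt[\nabla f(\wh{\bx}^{(k)})^T\mathbf{d}^{(k)}]$ under $\ssrh$ as the $\csr$ value plus $\sum_i \expt[\nabla f(\wh{\bx}^{(k)})_i\,(\expt[d_i^{(k)}\mid\cdot]_{\ssrh}-\expt[d_i^{(k)}\mid\cdot]_{\csr})]$ and show the latter sum is positive.

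Next I would compute the coordinate-wise correction explicitly. Setting $v=\nabla f(\wh{\bx}^{(k)})_i+\sigma_{1,i}^{(k)}$ as prescribed before \cref{lem:expecd_srhsigned} and writing $z=z_i^{(k+1)}$, \cref{def:signedsrh} gives the round-down probability $\wh{p}_\varepsilon(z)=\varphi(p_0(z)+\sign(v)\,\varepsilon)$, versus $p_0(z)$ for $\csr$. The key point is that the clamp $\varphi$ is inactive under the hypotheses: condition \cref{eq:conditon_2} forces $p_0(z)\le\tfrac12$ whenever the update is downward (the \cref{eq:gd_underflow1} cases, where $\sign(v)>0$) and $p_0(z)\ge\tfrac12$ whenever it is upward (the \cref{eq:gd_underflow2} cases, where $\sign(v)<0$); combined with $\varepsilon\le0.5$ this keeps $p_0(z)+\sign(v)\,\varepsilon\in[0,1]$, so the rounding probability shifts by exactly $\sign(v)\,\varepsilon$. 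Translating this shift through the two update laws in \cref{eq:gd_underflow} and checking all four rows of \cref{tab:rounding}, the conditional mean of $d_i^{(k)}$ is shifted by exactly $\varepsilon\,\sign(v)\,\Delta_i$, where $\Delta_i>0$ is the spacing between $\wh{x}_i^{(k)}$ and its relevant floating-point neighbour.

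Finally I would assemble the correction and invoke the assumption. Multiplying by $\nabla f(\wh{\bx}^{(k)})_i$ and using $\nabla f(\wh{\bx}^{(k)})_i\,\sign(v)=|\nabla f(\wh{\bx}^{(k)})_i|\,\sign(\nabla f(\wh{\bx}^{(k)})_i)\,\sign(\nabla f(\wh{\bx}^{(k)})_i+\sigma_{1,i}^{(k)})$, I would condition on $\wh{\bx}^{(k)}\in\mathcal{G}_k$ (which fixes $\nabla f(\wh{\bx}^{(k)})_i$ and $\Delta_i$, leaving $\sigma_{1,i}^{(k)}$ random) and apply the tower property \cref{eq:towerprop}. \cref{assm:srh_cond_stag} then makes each term $\varepsilon\,\Delta_i\,|\nabla f(\wh{\bx}^{(k)})_i|\,\expt[\sign(\nabla f(\wh{\bx}^{(k)})_i+\sigma_{1,i}^{(k)})\,\sign(\nabla f(\wh{\bx}^{(k)})_i)\mid\cdot]$ nonnegative, and strictly positive on coordinates with $\nabla f(\wh{\bx}^{(k)})_i\neq0$; summing over $i$ and over $\mathcal{G}_k$ yields the strict inequality \cref{eq:expecd_srhsigned}.

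I expect the main obstacle to be the sign bookkeeping that makes the cancellations work uniformly: one must verify, across all four cases of \cref{tab:rounding}, that the probability perturbation $\sign(v)\,\varepsilon$ always translates into a shift of $d_i^{(k)}$ whose sign matches $\sign(v)$, and that the clamp $\varphi$ never truncates it — which is precisely where $\varepsilon\le0.5$ and condition \cref{eq:conditon_2} enter. A related delicate point is that the neighbour spacing $\Delta_i$ depends on $\sign(v)$ when $\wh{x}_i^{(k)}$ sits at a binade boundary, so strict positivity should be argued in the generic regime where $z_i^{(k+1)}$ stays in the same binade as $\wh{x}_i^{(k)}$, so that the downward and upward spacings coincide; this is the relevant situation during stagnation, where the updates are of order $u\,|\wh{x}_i^{(k)}|$.
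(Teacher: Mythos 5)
Your proposal follows essentially the same route as the paper's proof: you show that $\varepsilon\le0.5$ together with condition \cref{eq:conditon_2} keeps the clamp $\varphi$ inactive, so that conditionally the mean of $d_i^{(k)}$ equals the unbiased ($\csr$) value $t\,(\nabla f(\wh{\bx}^{(k)})_i+\sigma_{1,i}^{(k)})\,h_{2,i}^{(k)}$ plus a shift $\varepsilon\,\sign(v)\,\Delta_i$, and you then multiply by $\nabla f(\wh{\bx}^{(k)})_i$, apply the tower property \cref{eq:towerprop} and the law of total expectation over $\mathcal{G}_k$, and invoke \cref{assm:srh_cond_stag} to conclude that the correction term (the paper's $\sum_i Q_i^{(k)}$) is strictly positive --- exactly the structure of the paper's argument via \cref{eq:case1,eq:case2,eq:g,eq:Q_i}. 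Your extra care about the unequal up/down spacings when $\wh{x}_i^{(k)}$ sits at a binade boundary is a sound refinement of a point the paper glosses over when it factors $g(\wh{\bx}^{(k)})$ out of the conditional expectation, not a departure from its method.
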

\begin{proof}
When $\varepsilon\le0.5$, condition \cref{eq:conditon_2} shows that $t\,(\nabla f(\wh{\bx}^{(k)})_{i}+\sigma_{1,i}^{(k)})\,h_{2,i}^{(k)}+\varepsilon\,(\,\wh{x}_i^{(k)}-\mathrm{pr}(\wh{x}_i^{(k)})\,) \le \wh{x}_i^{(k)}-\mathrm{pr}(\wh{x}_i^{(k)})$ or $t\,(\nabla f(\wh{\bx}^{(k)})_{i}+\sigma_{1,i}^{(k)})\,h_{2,i}^{(k)}+\varepsilon\,(\,\mathrm{su}(\wh{x}_i^{(k)})-\wh{x}_i^{(k)}\,) \le \mathrm{su}(\wh{x}_i^{(k)})-\wh{x}_i^{(k)}.$
Together with \cref{def:srh} (cf.~\cref{eq:epsilon}), it indicates that $0<\wh{p}_{\varepsilon}<1$.

Let us denote by $\mathcal{S}_1$ the finite set of values that can be assumed for the $i$th component of $t\,(\nabla f(\wh{\bx}^{(k)})+\bsigma_1^{(k)}) \circ \bh_2^{(k)}$ and that satisfy Case I in \cref{tab:rounding}. Analogously we define $\mathcal{S}_j$, $j=2,3,4$, for Cases II, III, and IV.
When $t\,(\nabla f(\wh{\bx}^{(k)})_i+\sigma_{1,i}^{(k)})\,h_{2,i}^{(k)}\in \mathcal{S}_1\cup\mathcal{S}_4$, in particular we have $\nabla f(\wh{\bx}^{(k)})_{i}+\sigma_{1,i}^{(k)}>0$. Taking the conditional expectation of \cref{eq:gd_underflow1} and proceeding analogously to \cref{eq:dsrh}, on the basis of the property that $\wh{p}_{\varepsilon}\ne 0$ or $1$, we obtain
\begin{align} \label{eq:case1}
\expt\,[&\,d_i^{(k)}\ \big\vert\ \wh{\bx}^{(k)},\, t\,(\nabla f(\wh{\bx}^{(k)})_i+\sigma_{1,i}^{(k)})\,h_{2,i}^{(k)}\in\mathcal{S}_1\cup\mathcal{S}_4\,]\\
 & \phantom{MM}=\,(\wh{x}_i^{(k)}-\mathrm{pr}(\wh{x}_i^{(k)}))\,\varphi(\,p_0(z_i^{(k+1)})+\sign(\nabla f(\wh{\bx}^{(k)})_i+\sigma_{1,i}^{(k)})\, \varepsilon\,)\nonumber\\
&\phantom{MM}=\,
 t\,(\nabla f(\wh{\bx}^{(k)})_i+\sigma_{1,i}^{(k)})\,h_{2,i}^{(k)}+
(\wh{x}_i^{(k)}-\mathrm{pr}(\wh{x}_i^{(k)}))\,\sign(\nabla f(\wh{\bx}^{(k)})_i+\sigma_{1,i}^{(k)})\,\varepsilon.\nonumber
\end{align}
For $\nabla f(\wh{\bx}^{(k)})_{i}+\sigma_{1,i}^{(k)}<0$ (Cases II and III), applying the similar steps as for \cref{eq:case1}, we obtain
\begin{align} \label{eq:case2}
 &\expt\,[\,d_i^{(k)}\ \big\vert \ \wh{\bx}^{(k)},\,t\,(\nabla f(\wh{\bx}^{(k)})_i+\sigma_{1,i}^{(k)})\,h_{2,i}^{(k)}\in \mathcal{S}_2\cup\mathcal{S}_3\,]\\
&\phantom{MM}=\,
 t\,(\nabla f(\wh{\bx}^{(k)})_i+\sigma_{1,i}^{(k)})\,h_{2,i}^{(k)}+ \, 
\vert \wh{x}_i^{(k)}-\mathrm{su}(\wh{x}_i^{(k)})\vert\,\sign(\nabla f(\wh{\mathbf x}^{(k)})_i+\sigma_{1,i}^{(k)})\, \varepsilon.\nonumber
\end{align}
Let us define the function $g: \Real^n\to \Real^+$ as
\begin{align}\label{eq:g}
g(\wh{\bx}^{(k)})&:=\varepsilon\,\vert \nabla f(\wh{\bx}^{(k)})_i\vert\,(\wh{x}_i^{(k)}\!-\!\mathrm{pr}(\wh{x}_i^{(k)}))\,P(\,t\,(\nabla f(\wh{\bx}^{(k)})_i\!+\!\sigma_{1,i}^{(k)})\,h_{2,i}^{(k)}\in \mathcal{S}_1\cup\mathcal{S}_4\,)\nonumber\\&\ +\varepsilon\,\vert \nabla f(\wh{\bx}^{(k)})_i\vert\,(\mathrm{su}(\wh{x}_i^{(k)})\!-\!\wh{x}_i^{(k)})\,P(\,t\,(\nabla f(\wh{\bx}^{(k)})_i\!+\!\sigma_{1,i}^{(k)})\,h_{2,i}^{(k)}\in \mathcal{S}_2\cup\mathcal{S}_3\,).
\end{align}
Then we obtain
\begin{align*}
 &\expt\,[\,d_i^{(k)} \, \nabla f(\wh{\bx}^{(k)})_i\ \big\vert\ \wh{\bx}^{(k)},\, t\,(\nabla f(\wh{\bx}^{(k)})_i+\sigma_{1,i}^{(k)})\,h_{2,i}^{(k)}\,]\\
&\phantom{m}=
 t\,(\nabla f(\wh{\bx}^{(k)})_i+\sigma_{1,i}^{(k)})\,h_{2,i}^{(k)}+g(\wh{\bx}^{(k)})\, \sign(\nabla f(\wh{\mathbf x}^{(k)})_i+\sigma_{1,i}^{(k)})\, \sign(\nabla f(\wh{\mathbf x}^{(k)})_i).
\end{align*}
On the basis of \cref{eq:towerprop}, it is further proven that 
\begin{align*}
\expt\,[\,d_i^{(k)}& \, \nabla f(\wh{\bx}^{(k)})_i\ \big\vert\ \wh{\bx}^{(k)}\,]\\
&=\expt\,[\,\expt\,[\,d_i^{(k)} \, \nabla f(\wh{\bx}^{(k)})_i\ \big\vert\ \wh{\bx}^{(k)},\, t\,(\nabla f(\wh{\bx}^{(k)})_i+\sigma_{1,i}^{(k)})\,h_{2,i}^{(k)}\,]\ \big\vert\ \wh{\bx}^{(k)}\,]\\
&=\expt\,[\,t\,(\nabla f(\wh{\bx}^{(k)})_i+\sigma_{1,i}^{(k)})\,h_{2,i}^{(k)}\,\nabla f(\wh{\bx}^{(k)})_i\ \big\vert\ \wh{\bx}^{(k)}\,]\\
 &\phantom{MM}\quad+\expt\,[\,g(\wh{\bx}^{(k)})\,
 \sign(\nabla f(\wh{\mathbf x}^{(k)})_i+\sigma_{1,i}^{(k)})\,\sign(\nabla f(\wh{\mathbf x}^{(k)})_i)\ \big\vert\ \wh{\bx}^{(k)}\,]\\
 &=\,
 \expt\,[\,t\,(\nabla f(\wh{\bx}^{(k)})_i+\sigma_{1,i}^{(k)})\,h_{2,i}^{(k)}\,\nabla f(\wh{\bx}^{(k)})_i\ \big\vert\ \wh{\bx}^{(k)}\,]\\
 &\phantom{MM}\quad+g(\wh{\bx}^{(k)})\,\expt\,[\,
 \sign(\nabla f(\wh{\mathbf x}^{(k)})_i+\sigma_{1,i}^{(k)})\,\sign(\nabla f(\wh{\mathbf x}^{(k)})_i)\ \big\vert\ \wh{\bx}^{(k)}\,].
\end{align*}
Based on the law of total expectation and under \cref{assm:srh_cond_stag}, we achieve
\begin{align}
 &\expt\,[\,d_i^{(k)} \, \nabla f(\wh{\bx}^{(k)})_i\,]=\expt\,[\,t\,(\nabla f(\wh{\bx}^{(k)})_i+\sigma_{1,i}^{(k)})\,h_{2,i}^{(k)}\,\nabla f(\wh{\bx}^{(k)})_i\,] \nonumber\\
&\phantom{MM}+\sum_{\by\in\mathcal{G}_k}g(\by)\,\expt\,[\,
 \sign(\nabla f(\wh{\mathbf x}^{(k)})_i+\sigma_{1,i}^{(k)})\,\sign(\nabla f(\wh{\mathbf x}^{(k)})_i)\ \big\vert\ \wh{\bx}^{(k)}=\by].\nonumber
\end{align}
Let us define\begin{align} \label{eq:Q_i} Q_i^{(k)}:=\sum_{\by\in\mathcal{G}_k}g(\by)\,\expt\,[\,
 \sign(\nabla f(\wh{\mathbf x}^{(k)})_i+\sigma_{1,i}^{(k)})\,\sign(\nabla f(\wh{\mathbf x}^{(k)})_i)\ \big\vert\ \wh{\bx}^{(k)}=\by].
\end{align}
Since $g(\by)>0$, \cref{assm:srh_cond_stag} implies that $Q_i^{(k)}> 0$. Therefore, we have 
\begin{align} \label{eq:expdssrh}
\expt\,[\,\nabla f(\wh{\bx}^{(k)})^T\mathbf{d}^{(k)}\,]=\expt\,[\,t\,\nabla f(\wh{\bx}^{(k)})^T\,((\nabla f(\wh{\bx}^{(k)})+\bsigma_1^{(k)}) \circ \bh_2^{(k)}\,)\,]+\sum_{i=1}^n Q_{i}^{(k)}.\end{align}
\end{proof}
Based on the fact that $\vert \wh{x}_i^{(k)}-\mathrm{pr}(\wh{x}_i^{(k)})\vert \le 2u\,\,\vert\wh{x}_i^{(k)} \vert$ and $\vert \wh{x}_i^{(k)}-\mathrm{su}(\wh{x}_i^{(k)})\vert \le 2u\,\,\vert \wh{x}_i^{(k)} \vert$, one may check that $g(\wh{\bx}^{(k)}) \le 2\,\varepsilon\,u\vert \nabla f(\wh{\bx}^{(k)})_i\,\vert\,\vert \wh{x}_i^{(k)} \vert$ (cf.~\cref{eq:g}). Equation \cref{eq:Q_i} shows that the magnitude of $Q_i^{(k)}$ is determined by $g(\wh{\bx}^{(k)})$. Therefore, based on $g(\wh{\bx}^{(k)}) \le 2\,\varepsilon\,u\vert \nabla f(\wh{\bx}^{(k)})_i\,\vert\,\vert\wh{x}_i^{(k)} \vert$, we have\begin{align} \label{eq:bound_sumQ}
\sum_{i=1}^n Q_{i}^{(k)}\le 2\,\varepsilon\,u\,\,\vert\nabla f(\wh{\bx}^{(k)})\vert^T \, \vert \wh{\bx}^{(k)}|.
\end{align}
In particular, the magnitude of $\sum_{i=1}^n Q_{i}^{(k)}$ may depend on the magnitudes of $\varepsilon$, $u$, $\nabla f(\wh{\bx}^{(k)})$ and $\wh{\bx}^{(k)}$.
In contrast to the constant stepsize of GD with exact arithmetic, $\ssrh$ makes \cref{eq:gd_underflow} perform similarly to GD with adaptive stepsize, where the stepsize is automatically adjusted to its current iterate $\wh{\bx}^{(k)}$.

Under the same conditions as the analysis with $\csr$, we show that $\ssrh$ guarantees the strict monotonicity of the GD iteration.
\begin{Proposition} \label{thm:modifiedsrh}
 Under the same condition as in \cref{the:ucsr} and \cref{assm:srh_cond_stag}, suppose that \cref{eq:gd_floatp1} and \cref{eq:gd_underflowSRB} are computed using $\csr$ and $\ssrh$ with $\varepsilon\le0.5$, respectively. 
 
 (i) Under condition \cref{eq:sigma_1_c2}, if  for a $k>0$ it holds that 
\begin{align} \label{eq:normf_srh1_s3}
\expt\,[\,\Vert \nabla f(\wh{\bx}^{(k-1)})\Vert\,]\ge \frac{c\,u\,\sqrt{n}}{1-c\,u}+\frac{u}{t}\sqrt{\frac{1+2\,\varepsilon}{1-c\,u}}\sqrt{\expt\,[\,\Vert \wh{\bx}^{(k-1)} \Vert^2\,]},
\end{align} 
 then $\expt\,[\,f(\wh{\bx}^{(k)})\,]< \expt\,[\,f(\wh{\bx}^{(k-1)})\,]$.
 
(ii) Under condition \cref{eq:sigma_1_c1}, if for a $k>0$ it holds that 
\begin{align} \label{eq:normf_srh2_s3}
\expt\,[\,\Vert \nabla f(\wh{\bx}^{(k-1)})\Vert\,]\ge \frac{u}{t}\sqrt{1+2\,\varepsilon}\sqrt{\expt\,[\,\Vert \wh{\bx}^{(k-1)} \Vert^2\,]}
\end{align} 
then $\expt\,[\,f(\wh{\bx}^{(k)})\,]< \expt\,[\,f(\wh{\bx}^{(k-1)})\,]$. 
\end{Proposition}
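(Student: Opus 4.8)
The plan is to follow the same line as the proof of \cref{the:ucsr}, replacing the unbiased identity of \cref{lem:expecd_csr} by the strict inequality of \cref{lem:expecd_srhsigned} and adjusting the second-moment estimate of the update vector to account for the biased rounding probability. Concretely, I would first apply the Lipschitz upper bound \cref{eq:lc_ineq} to $\wh{\bx}^{(k+1)}=\wh{\bx}^{(k)}-\mathbf{d}^{(k)}$ (cf.~\cref{eq:gd_underflowSRB}) and take expectations, obtaining
\[
\expt\,[\,f(\wh{\bx}^{(k+1)})\,]\le \expt\,[\,f(\wh{\bx}^{(k)})\,]-\expt\,[\,\nabla f(\wh{\bx}^{(k)})^T\mathbf{d}^{(k)}\,]+\tfrac{1}{2}\,L\,\expt\,[\,\Vert \mathbf{d}^{(k)} \Vert^2\,].
\]
For the linear term I would substitute the decomposition \cref{eq:expdssrh} of \cref{lem:expecd_srhsigned}, so that the descent contribution splits into the same quantity $\expt\,[\,t\,\nabla f(\wh{\bx}^{(k)})^T((\nabla f(\wh{\bx}^{(k)})+\bsigma_1^{(k)})\circ \bh_2^{(k)})\,]$ appearing in \cref{the:ucsr}, plus the strictly positive correction $\sum_{i=1}^n Q_i^{(k)}>0$.

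For the quadratic term I would redo the second-moment computation \cref{eq:d^2_csr} for $\ssrh$. The possible nonzero values of $d_i^{(k)}$ are unchanged, so the jump magnitude is still bounded by $2u\,\vert \wh{x}_i^{(k)}\vert$; the only difference from the $\csr$ case is that the nonzero outcome now has probability $\wh{p}_\varepsilon$ rather than $p_0$. Under \cref{eq:conditon_2} one has $p_0\le \tfrac12$, and since $\ssrh$ shifts this probability by at most $\varepsilon$ (cf.~\cref{def:signedsrh}), with $\varepsilon\le 0.5$ the nonzero outcome occurs with probability at most $\tfrac12+\varepsilon\le 1$. This yields $\expt\,[\,(d_i^{(k)})^2 \mid \cdot\,]\le (1+2\varepsilon)\,2u^2\,(\wh{x}_i^{(k)})^2$ and hence, via the tower property \cref{eq:towerprop}, $\expt\,[\,\Vert \mathbf{d}^{(k)} \Vert^2\,]\le (1+2\varepsilon)\,2u^2\,\expt\,[\,\Vert \wh{\bx}^{(k)} \Vert^2\,]$, i.e.\ exactly the factor $(1+2\varepsilon)$ that appears under the square root in \cref{eq:normf_srh1_s3,eq:normf_srh2_s3}.

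Combining the two estimates, the right-hand side becomes the bound obtained in \cref{the:ucsr} but with $u^2L\,\expt\,[\,\Vert\wh{\bx}^{(k)}\Vert^2\,]$ replaced by $(1+2\varepsilon)\,u^2L\,\expt\,[\,\Vert\wh{\bx}^{(k)}\Vert^2\,]$ and with the extra term $-\sum_{i=1}^n Q_i^{(k)}$. For part (ii) I would use condition \cref{eq:sigma_1_c1} to discard the $\bsigma_1^{(k)}$ cross term, so the descent part reduces to $-t\,\expt\,[\,\Vert\nabla f(\wh{\bx}^{(k)})\Vert^2\,]$; Jensen's inequality together with \cref{eq:normf_srh2_s3} and $t<1/L$ then gives $t\,\expt\,[\,\Vert\nabla f(\wh{\bx}^{(k)})\Vert^2\,]\ge (1+2\varepsilon)\,u^2L\,\expt\,[\,\Vert\wh{\bx}^{(k)}\Vert^2\,]$, so that everything except $-\sum_i Q_i^{(k)}$ is nonpositive; part (i) is identical except that the cross term is kept and absorbed using \cref{eq:sigma_1_c2} and \cref{eq:normf_srh1_s3} exactly as in \cref{the:ucsr}. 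In both cases the strictly positive $\sum_i Q_i^{(k)}$ upgrades the nonstrict bound $\expt\,[\,f(\wh{\bx}^{(k)})\,]\le \expt\,[\,f(\wh{\bx}^{(k-1)})\,]$ to the strict one claimed. The main obstacle I anticipate is the bookkeeping in the second-moment step: one must confirm that the biased probability stays $\le 1$ (which is precisely where $\varepsilon\le 0.5$ enters) and that the $(1+2\varepsilon)$ factor is tracked consistently down to the gradient-size conditions, while simultaneously verifying that the positive term $\sum_i Q_i^{(k)}$ genuinely survives—rather than being absorbed—when the remaining terms are only shown to be nonpositive, which is what secures the strict inequality.
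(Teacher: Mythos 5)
Your proposal is correct and follows essentially the same route as the paper's proof: apply the Lipschitz bound \cref{eq:lc_ineq} to $\wh{\bx}^{(k+1)}=\wh{\bx}^{(k)}-\mathbf{d}^{(k)}$, use \cref{lem:expecd_srhsigned} (i.e.\ \cref{eq:expdssrh}) for the linear term, redo the second-moment estimate of \cref{eq:d^2_csr} with the shifted probability to get the factor $2(1+2\varepsilon)u^2$, absorb the remaining terms via Jensen's inequality and the gradient-size conditions \cref{eq:normf_srh1_s3,eq:normf_srh2_s3}, and let the strictly positive $\sum_{i=1}^n Q_i^{(k)}$ deliver the strict inequality. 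Your accounting of where $\varepsilon\le 0.5$ enters and why $(1+2\varepsilon)$ propagates into the hypotheses matches the paper's argument (the paper merely cites "a similar argument as for \cref{eq:d^2_csr}", which you have correctly fleshed out).
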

\begin{proof}
Following a similar argument as for \cref{eq:d^2_csr}, when using $\ssrh$ to evaluate \cref{eq:gd_underflowSRB}, we obtain
$\expt\,[\,(d_i^{(k)})^2\vert\,t\,(\nabla f(\wh{\bx}^{(k)})_i+\sigma_{1,i}^{(k)})\,h_{2,i}^{(k)}\,]
\le 2\,(1+2\,\varepsilon)\,u^2\,(\wh{x}_i^{(k)})^2,$
and by means of the law of total expectation we finally get
\begin{align}
 \expt\,[\,\Vert \mathbf{d}^{(k)} \Vert^2\,] \le 2\,(1+2\,\varepsilon)\,u^2\,\expt\,[\,\Vert \wh{\bx}^{(k)} \Vert^2 \,].\nonumber
\end{align}

\textbf{Part (i)}:
On the basis of \cref{eq:lc_ineq} and using \cref{lem:expecd_srhsigned}, \cref{eq:sigma_1_c2,eq:expdssrh}, we have 
\begin{align} \label{eq:inq_ssrh}
\expt\,[\,f(\wh{\bx}^{(k+1)})\,]
\le&\, \expt\,[\,f(\wh{\bx}^{(k)})\,]-t\,(1-c\,u)\,\expt\,[\,\Vert \nabla f(\wh{\bx}^{(k)})\Vert^2\,]-\sum_{i=1}^n Q_{i}^{(k)}\nonumber\\
&\quad+ t\,c\,u\,\sqrt{n}\,\expt\,[\,\Vert\nabla f(\wh{\bx}^{(k)})\Vert\,]+(1+2\,\varepsilon)\,u^2\,L\,\expt\,[\,\Vert \wh{\bx}^{(k)} \Vert^2\,]\nonumber\\
\underset{\text{\cref{eq:normf_srh1_s3}}}{\quad\le}&\,\expt\,[\,f(\wh{\bx}^{(k)})\,]-\sum_{i=1}^n Q_{i}^{(k)},
\end{align}
where $Q_i^{(k)}>0$.

\textbf{Part (ii)}: With the same argument used for (i), and by means of \cref{eq:sigma_1_c1}, we achieve
\begin{align} \label{eq:inq_ssrh1}
\expt\,[\,f(\wh{\bx}^{(k+1)})\,]\le \,&\expt\,[\,f(\wh{\bx}^{(k)})\,]-t\,\expt\,[\,\Vert \nabla f(\wh{\bx}^{(k)})\Vert^2\,]+(1+2\,\varepsilon)\,u^2\,L\,\expt\,[\,\Vert \wh{\bx}^{(k)} \Vert^2\,]-\sum_{i=1}^n Q_{i}^{(k)},\nonumber\\
\underset{\text{\cref{eq:normf_srh2_s3}}}{\quad\le}\,& \expt\,[\,f(\wh{\bx}^{(k)})\,]-\sum_{i=1}^n Q_{i}^{(k)},
\end{align} 
which gives the claim
\end{proof}

Despite the fact that \cref{thm:modifiedsrh} does not provide a significant advantage of $\ssrh$ with respect to $\csr$, inequality \cref{eq:inq_ssrh} suggests that $\ssrh$ may lead to a faster convergence depending on the accumulated rounding bias that is determined by the value of $\sum_{i=1}^n Q_{i}^{(k)}$. According to \cref{eq:bound_sumQ}, the magnitude of $\sum_{i=1}^n Q_{i}^{(k)}$ depends on $\varepsilon$, $u$, and $\wh{\bx}^{(k)}$. For exact arithmetic, the convergence rate of GD is determined by $t$ and $\nabla f(\bx^{(k)})$. When $\bx^{(k)}=\wh{\bx}^{(k)}$ and $t\,\Vert \nabla f(\wh{\bx}^{(k)})\Vert$ is smaller than $u\,\Vert \wh{\bx}^{(k)}\Vert$, then $\ssrh$ may even lead to a faster convergence than exact arithmetic. In particular, $\ssrh$ may be beneficial in solving multi-variable optimization problems, especially for the case when GD stagnates in some coordinates of $\bx$. In the next section, we show, by means of numerical simulations, that this advantage is indeed tangible.

\section{Simulation study} \label{sec:simulation}
In this section, we validate the theoretical analysis by testing the performances of GD for various choices of the rounding schemes used in performing steps \cref{eq:gd_floatp1} and \cref{eq:gd_floatp2}. As case studies, we consider the minimization of quadratic functions, the training of an MLR, and the training of a two-layer NN, with low-precision floating-point computations. 

As representatives of low-precision number formats, we consider {\sf bfloat16} for the quadratic optimization and {\sf binary8} for the training of MLR and NN. The baselines are obtained by {\sf binary32} with the default rounding mode in IEEE, i.e., $\rn$ with ties to even. See \cref{sec:flp} for the complete descriptions of the number formats. Note that the roundoff errors caused by {\sf binary32} are almost negligible compared to the limited precision employed by {\sf bfloat16} and {\sf binary8}. We look at the comparison with the baseline as a comparison with GD in exact arithmetic. Further, all the expectations and variances obtained when using $\csr$, $\srh$, and $\ssrh$ are estimated over 20 simulations.\footnote{The MATLAB code is
available upon request to the corresponding author.} 
Note that all the plots in this section adopt a logarithmic scale along the vertical axis. 

\subsection{Quadratic optimization}
In this first experiment we apply GD to
the quadratic optimization problem $\min_{\bx\in\mathbb R^n} f(\bx)=\tfrac{1}{2}\,(\bx-\bx^*)^T\!A\,(\bx-\bx^*)$,
for two choices of the matrix $A$, the starting vector $\bx^{(0)}$, the minimizer $\bx^*$, and the stepsize $t$. Our first choice (Setting I) is $A=\mathrm{diag}(10^{-3},\dots,10^{-3},1)\in\mathbb R^{1000\times 1000}$, $\bx^{(0)}=[10^{-3},\dots,10^{-3},1]^T$, $\bx^{*}=[0,\dots,0]^T$, and $t=10^{-5}$. 
The stepsize is relatively small compared to $L^{-1}$ and all the entries of the initial point are close to the minimizer apart from the last entry. In the second choice (Setting II), we consider a symmetric matrix $A\in\mathbb R^{1000\times 1000}$ containing only nonzero elements and having eigenvalues $1,\dots, 1000$, $\bx^{(0)}=[1000,999,\dots,1]^T$, $\bx^{*}=[2^{-4},\dots,2^{-4}]^T$, and $t=L^{-1}=10^{-3}$. We remark that in Setting II, we select the largest possible stepsize among those that guarantee convergence, and a starting point that is far from the minimizer. 

\begin{figure*}[tbhp]
\centering
\subfloat[]{\label{fig:qa}\includegraphics[width=0.45\textwidth]{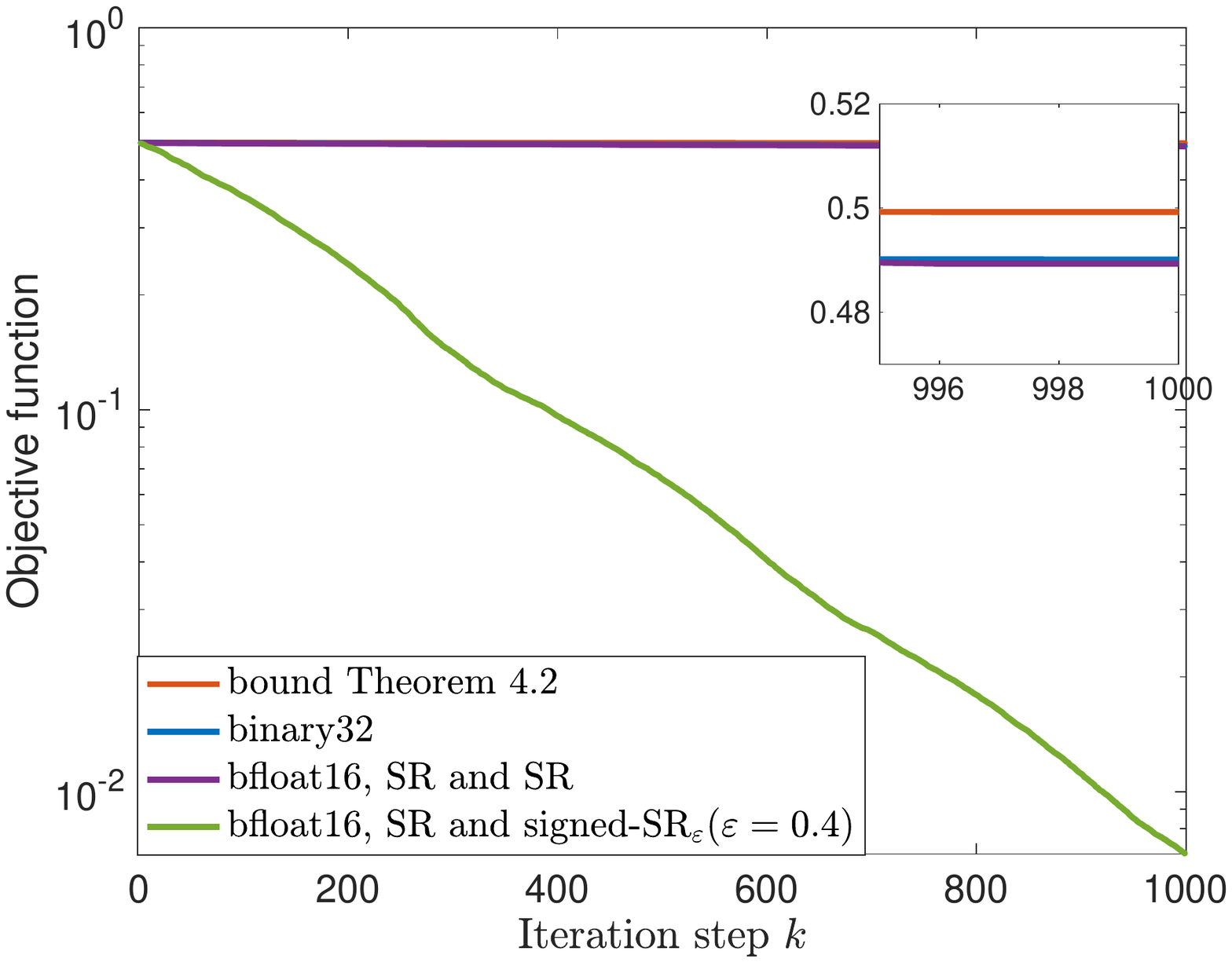}}\quad\quad
\subfloat[]{\label{fig:qb}\includegraphics[width=0.45\textwidth]{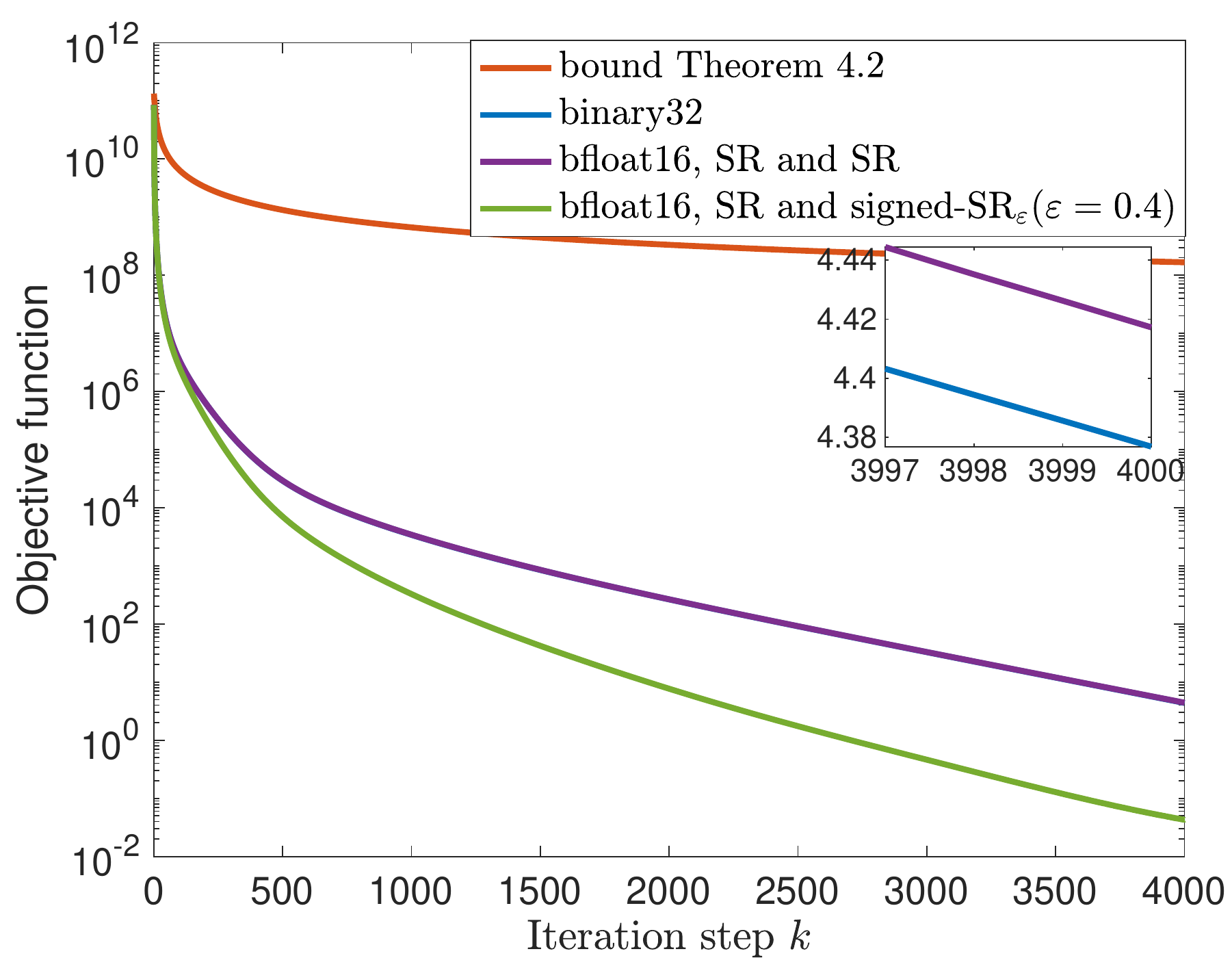}}
\caption{Comparison of the bound from \cref{theorem:convergencerate} and the expectation of the objective function while using $\csr$ to implement \cref{eq:gd_floatp1} and different rounding schemes to implement \cref{eq:gd_floatp2}, i.e., $\csr$ and $\ssrh$ with $\varepsilon= 0.4$ for Setting I (a) and Setting II (b).}\label{fig:quadratic}
\end{figure*}
\cref{fig:quadratic} illustrates the convergence history of the implementations of GD with various choices of number formats and rounding schemes, together with the bound $\frac{2L}{4+L\,t\,k} \, \Vert \bx^{(0)}-\bx^*\Vert^2$ from \cref{theorem:convergencerate}. More precisely, we compare the objective function values obtained using {\sf binary32} and RN with the average of the objective function values obtained using {\sf bfloat16} and $\csr$ for \cref{eq:gd_floatp1} and different stochastic rounding schemes for \cref{eq:gd_floatp2}. The results for Setting I are shown in \cref{fig:qa} while those for Setting II are depicted in \cref{fig:qb}. We did not include the convergence history of GD with RN, in the {\sf bfloat16} format, as it stagnates from the very beginning of the GD iteration. From \cref{fig:qa}, it can be seen that the bound in \cref{theorem:convergencerate} is very close to the objective function obtained by {\sf binary32} and the one by {\sf bfloat16} with $\csr$. Using $\ssrh$ to implement \cref{eq:gd_floatp2}, we achieve almost linear convergence for GD, which is consistent with the discussion after \cref{thm:modifiedsrh}. From \cref{fig:qb}, it can be seen that, in Setting II, the bound in \cref{theorem:convergencerate} is not strict anymore. Again, the employment of $\csr$ with {\sf bfloat16} leads to similar expectations of the objective function values to the one with {\sf binary32}. The utilization of $\ssrh$ yields a much faster convergence than both {\sf binary32} with $\rn$ and {\sf bfloat16} with $\csr$. Additionally, at the $4000$th iteration step, the averaged relative error $\Vert\bx^{(4000)}-\bx^* \Vert/\Vert\bx^* \Vert$ obtained by $\ssrh$ is $0.12$ while that acquired by $\csr$ is $1.50$. We can conclude that, for both settings, the use of $\ssrh$ for \cref{eq:gd_floatp2} accelerates the convergence significantly compared to both $\rn$ and $\csr$.
\subsection{Multinomial logistic regression (MLR)}
MLR is an optimization problem that models multi-label classification tasks. The objective function of MLR is convex \cite{bohning1992multinomial}; for a detailed description; see \cite[pp.~269--272]{hosmer2013applied}. We consider the solution of MLR for classifying the MNIST database \cite{deng2012mnist}, which is a large database of 10 handwritten digits (from 0 to 9), containing 60000 training images and 10000 test images. 

\begin{figure*}[tbhp]
\centering
\subfloat[]{\label{fig:f1a}\includegraphics[width=0.45\textwidth]{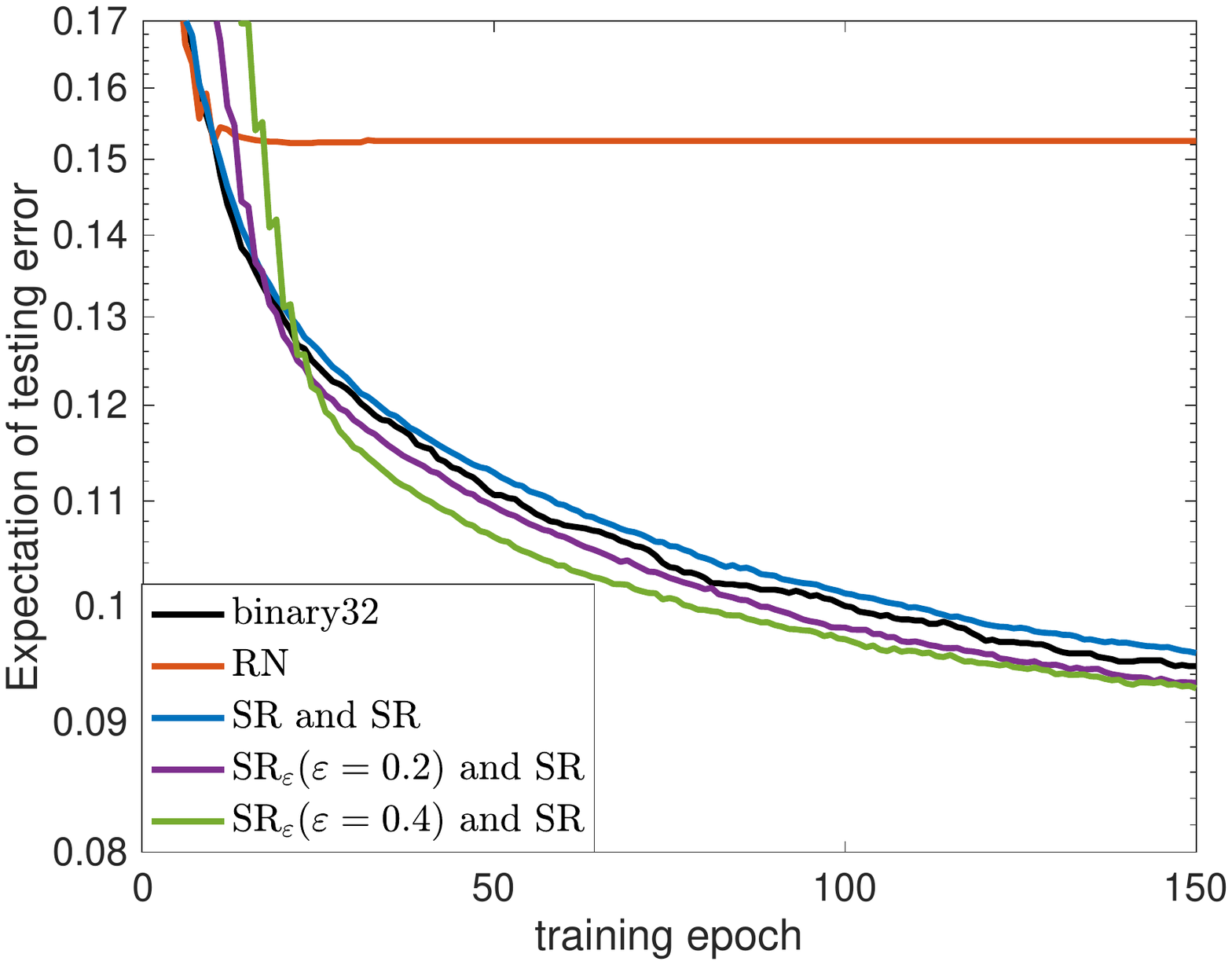}}\quad\quad
\subfloat[]{\label{fig:f1b}\includegraphics[width=0.45\textwidth]{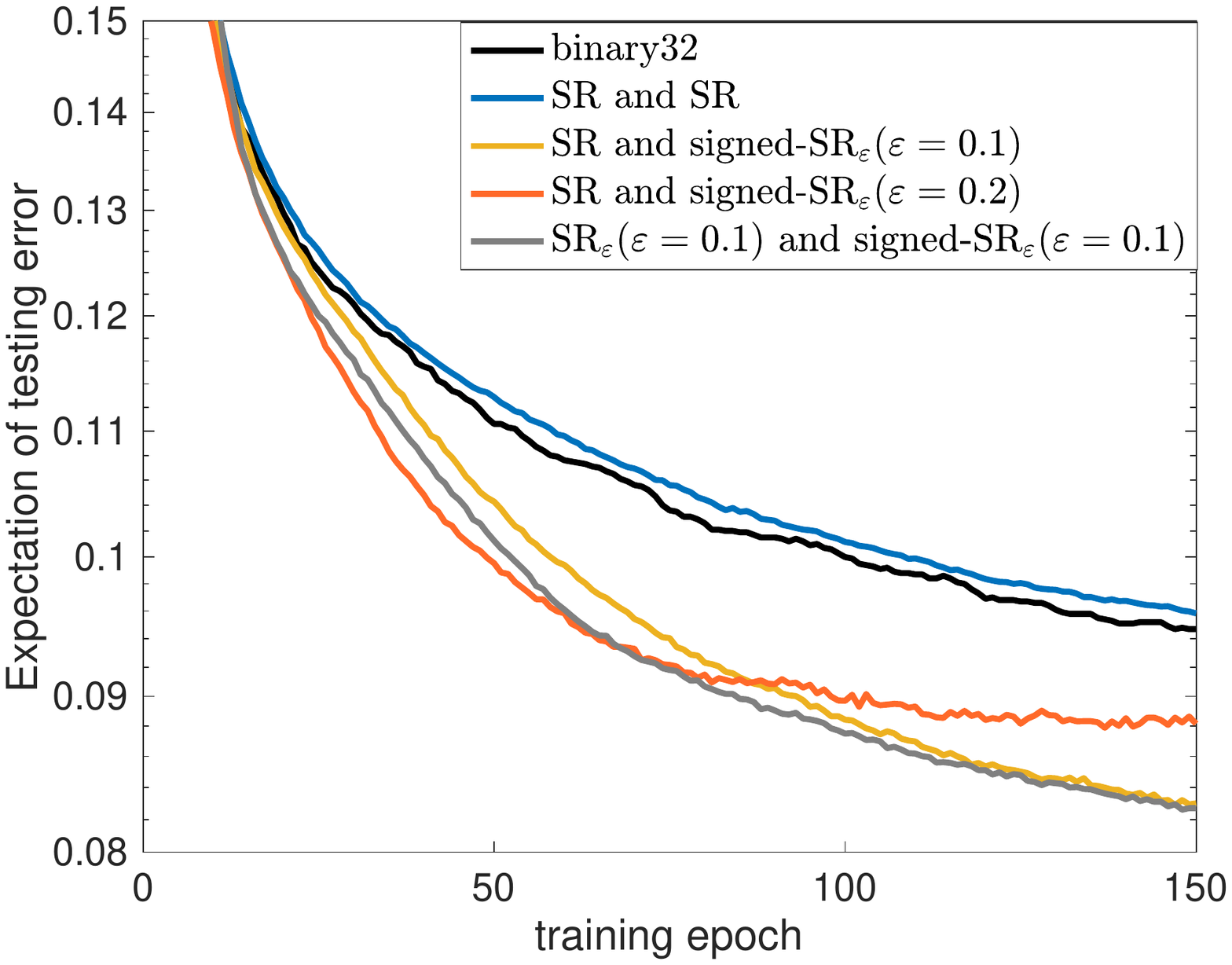}}
\caption{Comparison of the expectation of testing errors of the MLR model with stepsize $t=0.5$ while using $\csr$ to implement \cref{eq:gd_floatp2} and different rounding schemes to implement both \cref{eq:gd_floatp0,eq:gd_floatp1} (\cref{eq:gd_floatp0,eq:gd_floatp1} are implemented using the same rounding scheme), i.e., $\rn$, $\csr$, and $\srh$ with $\varepsilon=0.2, 0.4$ (a) and different combinations of rounding schemes to implement \cref{eq:gd_floatp} (b).}\label{fig:figure1}
\end{figure*}
In our first experiment, we apply $\csr$ to evaluate \cref{eq:gd_floatp2}, and we test different stochastic rounding methods for \cref{eq:gd_floatp1}. \cref{fig:f1a} shows the expectation of testing errors of the MLR model when classifying 0 to 9 with the 10000 test images. After $10$ epochs, {\sf binary8} with $\rn$ stagnates due to the loss of gradient information. With the same number of training epochs, the testing errors of the MLR model obtained by $\csr$ are slightly higher than the baseline, while those achieved by $\srh$ are slightly lower than the baseline, which is consistent with the conclusions after \cref{thm:conv-exp_csr,coro:conv-exp_srh}. Further, a faster convergence is achieved with larger $\varepsilon$ when using $\srh$.
 
In the second experiment, we use $\csr$ and $\srh$ to implement \cref{eq:gd_floatp0,eq:gd_floatp1}, respectively; for \cref{eq:gd_floatp2} we use $\csr$ and $\ssrh$ with the same settings of \cref{lem:expecd_srhsigned}. \cref{fig:f1b} shows the comparison of the expectation of testing errors of the MLR model when implementing GD with different combinations of rounding schemes. It can be seen that the convergence is significantly faster when using $\ssrh$ for \cref{eq:gd_floatp2}. Specifically, with 150 training epochs, the testing error of the baseline is $0.086$. A similar accuracy is obtained by $\ssrh(\varepsilon=0.1)$ with $t=0.1$ and $82$ training epochs. Further increasing the parameter $\varepsilon$ used in $\ssrh$, leads GD to ``jump over" the optimum, which can be seen as employing a very large learning stepsize with exact computations. We also measure the population variance \cite{singh1988estimation} over 20 simulations for all the experiments in \cref{fig:figure1}; after 50 training epochs, all the population variances are less than $10^{-5}$. This indicates small deviations from the average cases.
 
\begin{figure*}[htb!]
\centering
\subfloat[]{\label{fig:lra}\includegraphics[width=0.45\textwidth]{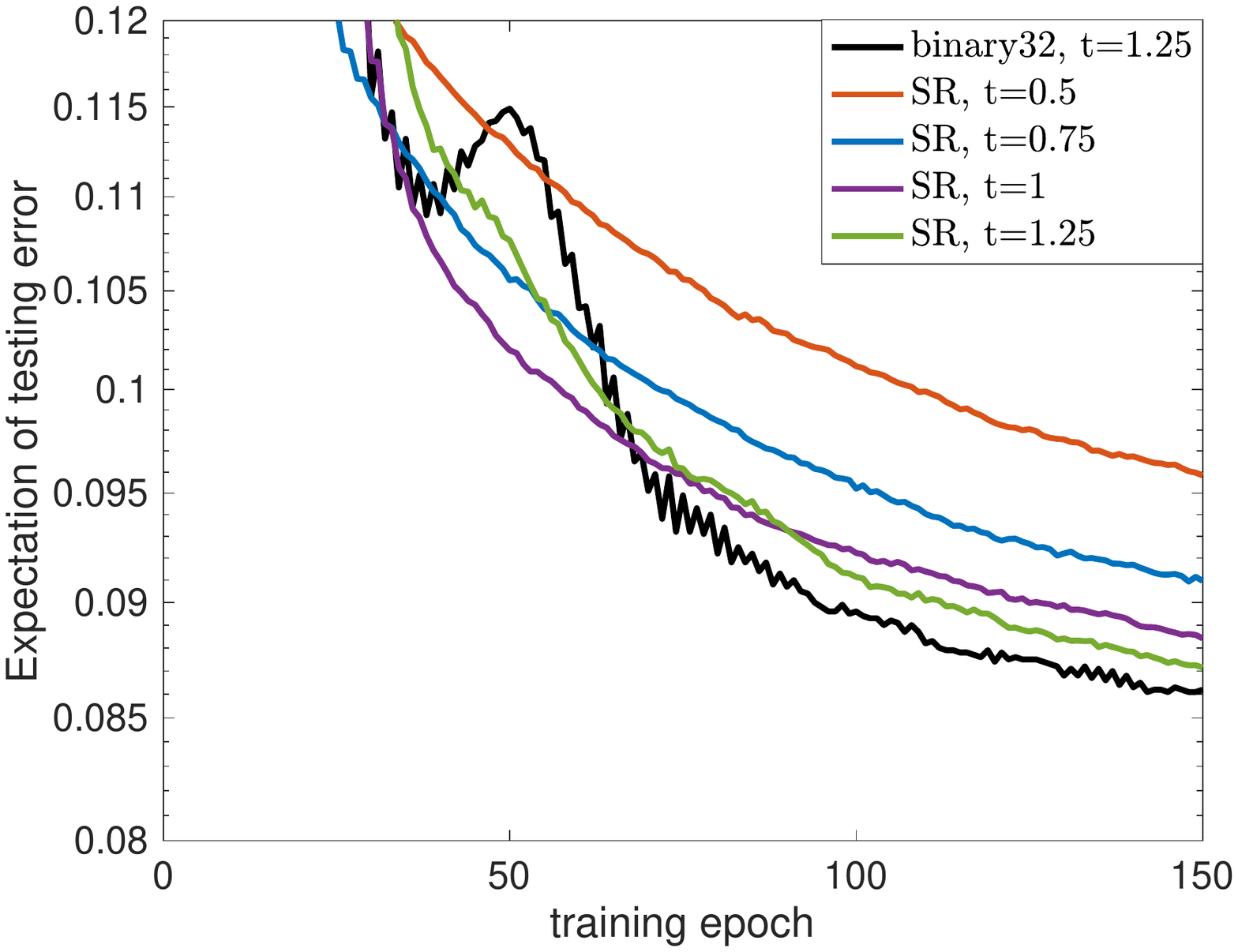}}\quad\quad
\subfloat[]{\label{fig:lrb}\includegraphics[width=0.45\textwidth]{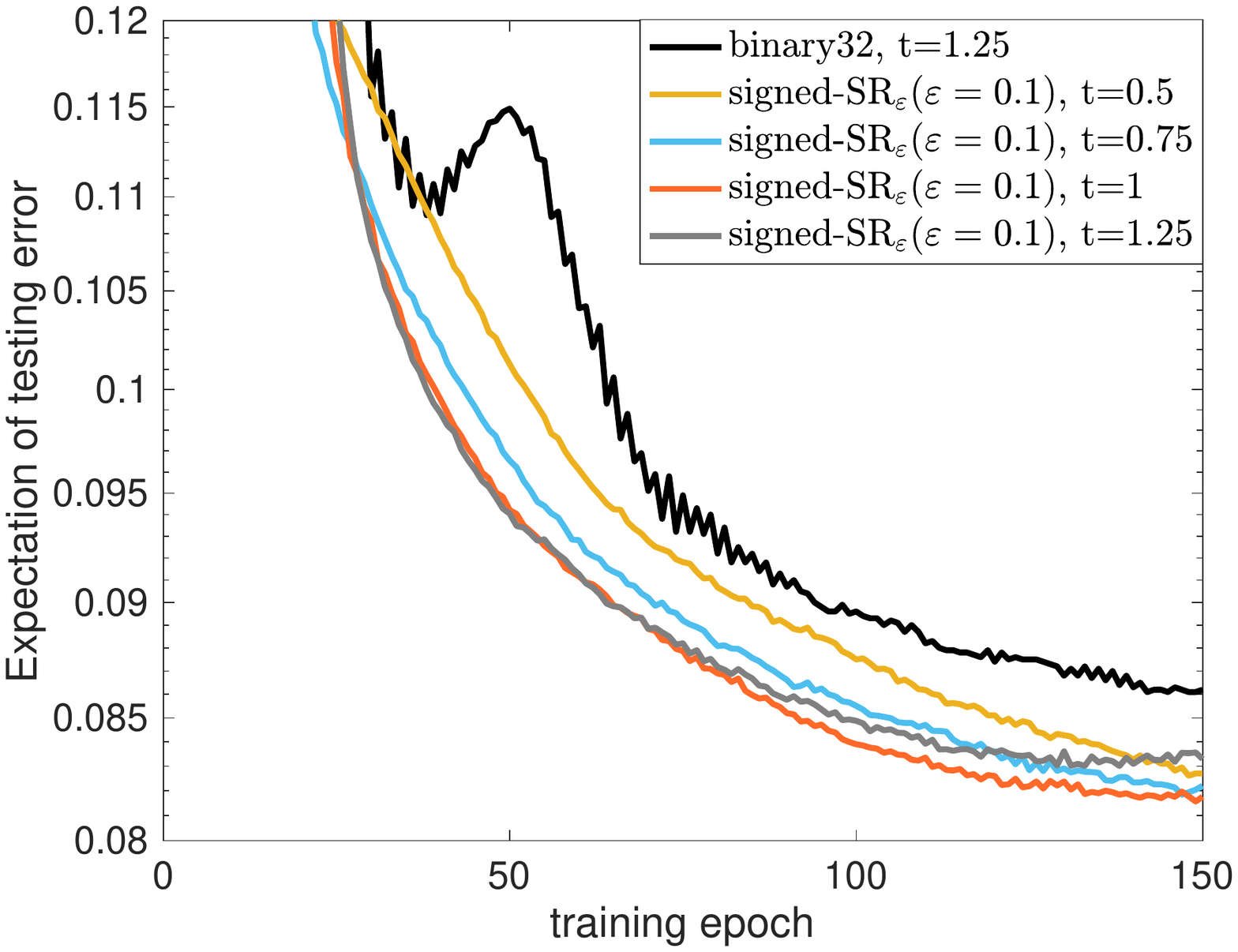}}
\caption{Comparison of the expectation of testing errors of the MLR model with different learning rate $t$ while using $\csr$ to implement \cref{eq:gd_floatp} (a) and using $\srh$ with $\varepsilon=0.1$ to implement \cref{eq:gd_floatp0} and $\ssrh$ with $\varepsilon=0.1$ to implement \cref{,eq:gd_floatp1,eq:gd_floatp2} (b).}\label{fig:learningrate}
\end{figure*}
To further investigate the performances of the various rounding schemes, we analyze the effect of varying the parameter $t$. In \cref{fig:lra} we report the expectation of testing errors of the MLR model with different learning rate $t$ while using $\csr$ to implement both steps of \cref{eq:gd_floatp}. It can be seen that the convergence rate increases with the learning rate $t$ although it never beats the baseline obtained by {\sf binary32} and $t=1.25$. We remark that further increasing $t$, with {\sf binary32}, leads to large oscillations. The experiment is repeated by applying $\ssrh$ with $\varepsilon=0.1$ for \cref{eq:gd_floatp1} and \cref{eq:gd_floatp2}. The results reported in \cref{fig:lrb} show that the convergence obtained with $t=0.5$ is already faster than the baseline. Increasing $t$ until $1$ leads to even faster convergence. However, when $t=1.25$, the testing error starts to increase after 125 training epochs, which indicates that $t=1.25$ is too large for this rounding strategy. With 150 training epochs, the baseline obtains a testing error of $0.086$, while a similar value is obtained by $\ssrh$ with $t=1$ after only 84 training epochs (see \cref{fig:lrb}). 
\subsection{A two-layer NN for binary classification}
Although the training of a two-layer NN is not a convex problem, GD with $\srh$ still shows a similar convergence behavior to the one described when training an MLR model. The training is performed on the images comprised of the digits $3$ and $8$, i.e., $11982$ training images and $1984$ testing images. As in \cite{gupta2015deep}, the pixel values are normalized to $[0,1]$. A two-layer NN is built with the ReLU activation function in the hidden layer and the sigmoid activation function in the output layer. The hidden layer contains 100 units. In the backward propagation, a binary cross-entropy loss function is optimized using GD. The weights matrix is initialized based on Xavier initialization \cite{glorot2010understanding} and the
bias is initialized as a zero vector. Further, the default decision threshold for interpreting probabilities to class labels is $0.5$, since the sample class sizes are almost equal \cite{chen2006decision}. Specifically, class 1 is defined for those predicted scores larger than or equal to $0.5$.

\begin{figure}[htb!]
\centering
\subfloat[]{\label{fig:f2a}\includegraphics[width=0.44\textwidth]{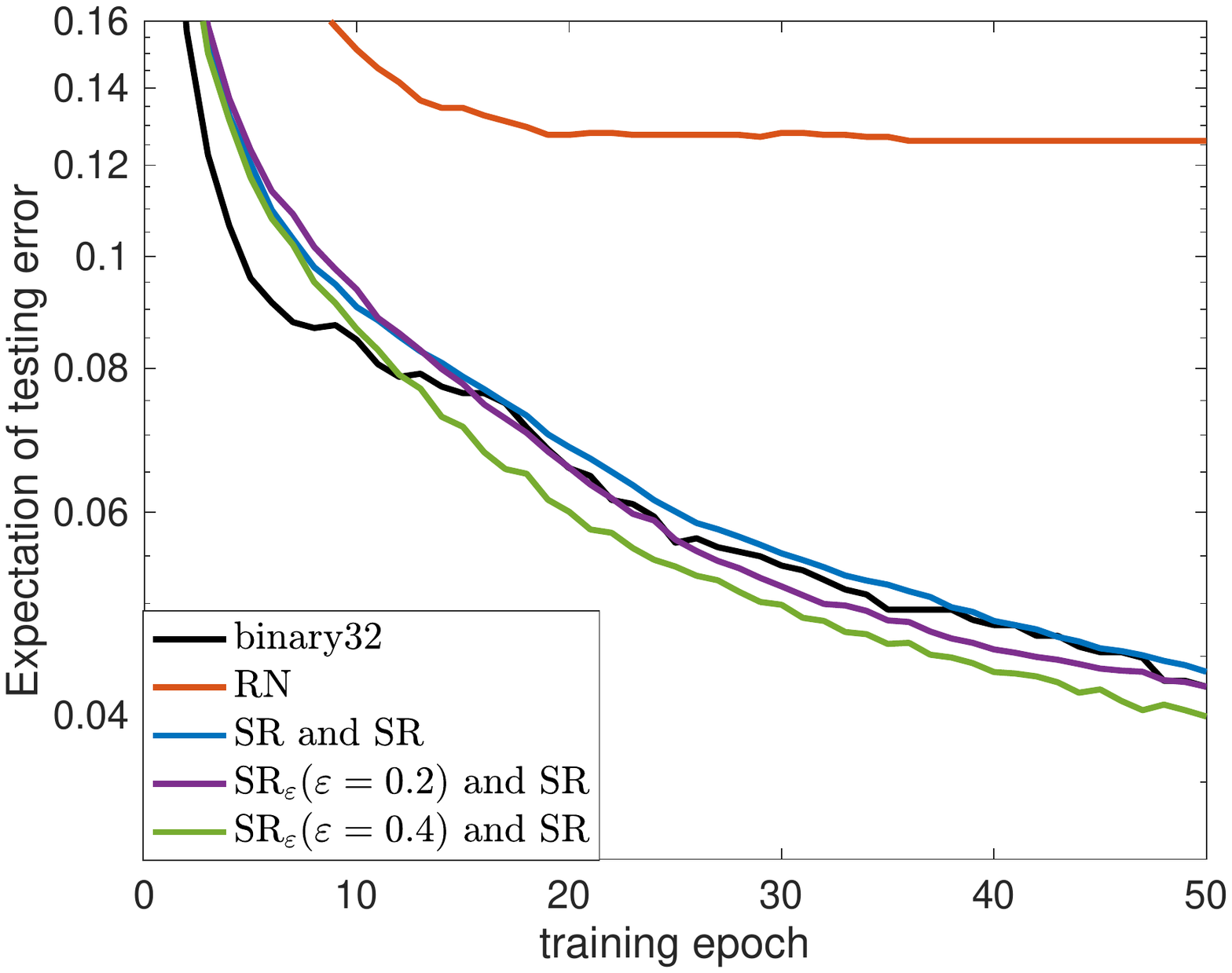}}\quad\quad
\subfloat[]{\label{fig:f2b}\includegraphics[width=0.44\textwidth]{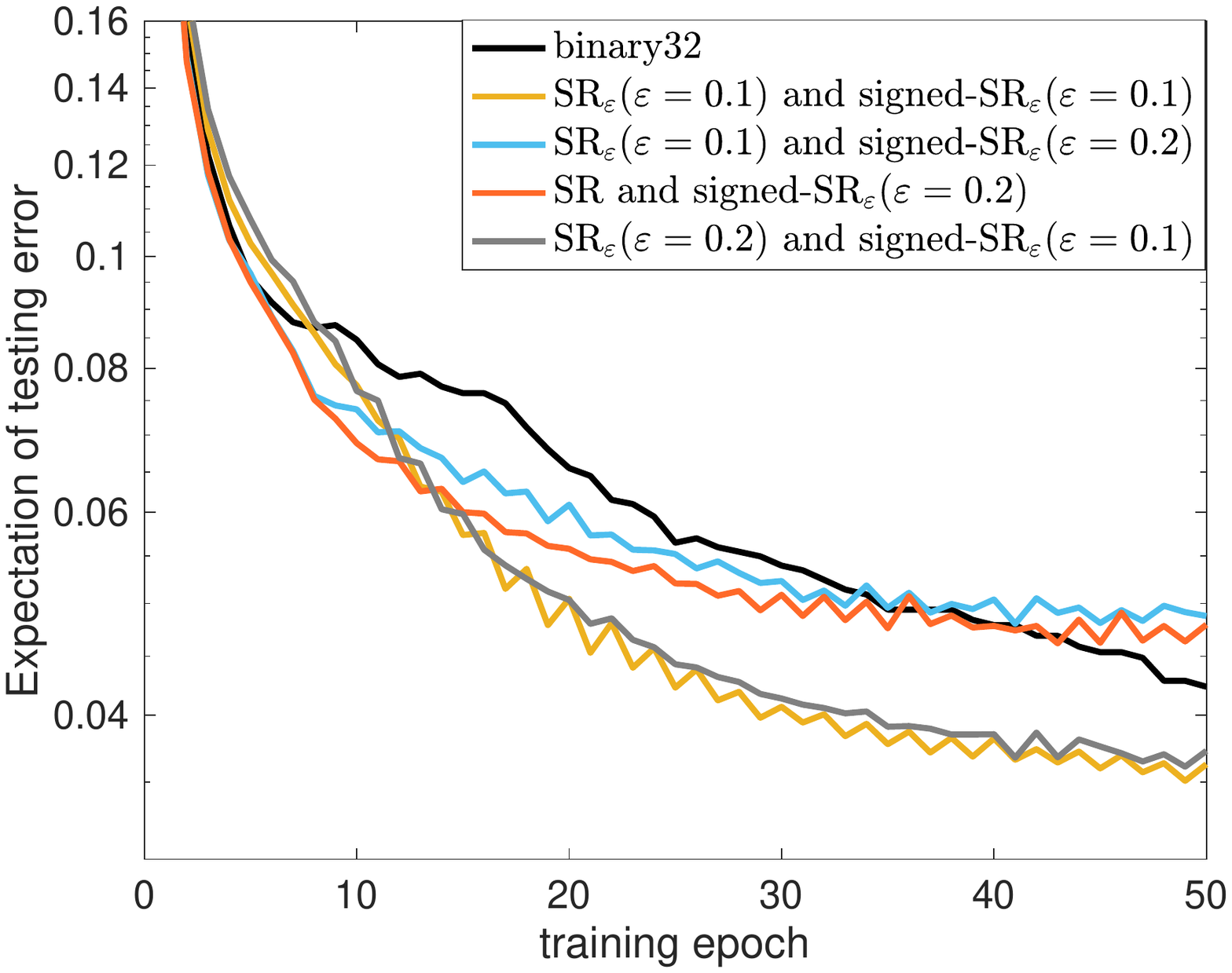}}
\caption{Comparison of the expectation of testing errors of a two-layer NN with $t=0.09375$ when using $\rn$ to implement \cref{eq:gd_floatp}, $\csr$ to implement \cref{eq:gd_floatp2} and different rounding schemes to implement \cref{eq:gd_floatp0,eq:gd_floatp1}, i.e., $\csr$ and $\srh$ with $\varepsilon=0.2, 0.4$ (a) and different combinations of rounding schemes to implement \cref{eq:gd_floatp} (b).} \label{fig:figure2}
\end{figure}

\cref{fig:f2a} shows the comparison of the expectation of testing errors of the two-layer NN trained using {\sf binary8} with $\rn$ for \cref{eq:gd_floatp} and using $\csr$ for \cref{eq:gd_floatp2} and different stochastic rounding methods for \cref{eq:gd_floatp0,eq:gd_floatp1}. Again, the NN trained using $\rn$ fails to converge due to the loss of gradient information. $\csr$ leads to similar testing errors to the baseline, while $\srh$ results in a slightly higher convergence rate than $\csr$. Based on \cref{def:stochasticrounding}, a larger $\varepsilon$ leads to a larger rounding bias, which also leads to slightly faster convergence in \cref{fig:f2a}. To study the influence of rounding bias in each step of \cref{eq:gd_floatp}, we employ $\ssrh$ with the same settings of \cref{lem:expecd_srhsigned} for evaluating \cref{eq:gd_floatp2}. \cref{fig:f2b} shows the expectation of testing errors when implementing GD with different combinations of rounding schemes. Again, the use of $\ssrh$ for \cref{eq:gd_floatp2} yields lower testing errors with less training epochs. For instance, the testing error after 50 training epochs with {\sf binary32} is 0.042, while a similar testing error is obtained after only 25 training epochs when using the combination of $\srh$ and $\ssrh$ (see \cref{fig:f2b}). Also here, a large rounding bias in evaluating the second step \cref{eq:gd_underflowSRB} leads GD to ``jump over" the optimum (e.g., the case with $\varepsilon=0.2$ in \cref{fig:f2b}).

As observed in these numerical studies, the magnitude of the parameter $\varepsilon$ plays a crucial role when implementing $\srh$ or $\ssrh$, as it controls the rounding bias in the descent direction. In particular, $\varepsilon>0$ may accelerate the convergence of GD, but a too large value may also make GD ``jump over'' the optimum. Although, \cref{lem:expecd_srhsigned} indicates that $\varepsilon$ should be less than $0.5$ to guarantee a descent updating direction of GD. By means of numerical studies, we found that the choice of $\varepsilon$ should take into account the machine precision $u$. In the case of {\sf binary8}, we suggest to choose an $\varepsilon\leq0.1$. For most of the numerical studies, the use of $\ssrh$ yields a convergence that is approximately twice as fast as that of $\csr$.
\section{Conclusion} \label{sec:conclusion}
We have studied the influence of rounding bias on the convergence of the gradient descent method (GD) with low-precision floating-point computation for convex problems. We have demonstrated that the use of the unbiased stochastic rounding method ($\csr$) in low-precision computation may only achieve a convergence rate of GD that is close to (slower than) the one attained in exact arithmetic. We have proven that the employment of the two newly proposed rounding methods, $\srh$ and $\ssrh$, may lead to faster convergence of GD than that achieved by $\csr$. The magnitude of the parameter $\varepsilon$ plays a crucial role when implementing $\srh$ or $\ssrh$, since it determines the amount of rounding bias in the descent direction. In particular, we have proven that $\varepsilon$ should be less than $0.5$ to guarantee a descent updating direction of GD. By means of numerical experiments, we have shown that the machine precision $u$ should be considered when selecting the value of $\varepsilon$. In the cases of training a multinomial logistic regression model and a two-layer neural network (NN) with the number format {\sf binary8}, we suggest to choose an $\varepsilon\leq0.1$. In most of the numerical studies, the application of $\ssrh$ produces a convergence rate that is nearly twice as fast as that of $\csr$. The proposed rounding methods may be especially beneficial for machine learning, e.g., training NNs and regression models, where low-precision computations and GD are widely applied. 
\section*{Acknowledgement}
This research was funded by the EU ECSEL Joint Undertaking under grant agreement no.~826452.

\bibliographystyle{ref}
\bibliography{references}

\end{document}